\documentclass[arxiv]{colt2026}

\usepackage{amssymb}
\usepackage{xcolor}
\usepackage{algorithm,caption}
\usepackage{algorithmic}
\usepackage{cleveref}
\usepackage{graphicx}
\usepackage{natbib}
\usepackage{framed}
\usepackage{times}
\usepackage{mdframed}
\usepackage{xcolor}
\usepackage{tikz}
\usetikzlibrary{arrows.meta,positioning}

\newtheorem{question}[theorem]{Question}

\newenvironment{restatedtheorem}[1]
  {\par\noindent\textbf{Theorem~\ref{#1}.}\itshape}
  {\par}

\newenvironment{restatedproposition}[1]
  {\par\noindent\textbf{Proposition~\ref{#1}.}\itshape}
  {\par}
\newtheorem*{theorem*}{Theorem}
\newtheorem*{lemma*}{Lemma}
\newtheorem*{proposition*}{Proposition}
\newtheorem*{corollary*}{Corollary}
\newtheorem*{claim*}{Claim}
\newtheorem*{definition*}{Definition}
\newtheorem*{assumption*}{Assumption}
\newtheorem*{question*}{Question}
\newtheorem*{remark*}{Remark}
\newtheorem*{example*}{Example}
\newcounter{examplectr}

\newcommand{\N}{\mathbb{N}}

\newcommand{\X}{\mathcal{X}}
\newcommand{\Y}{\mathcal{Y}}
\newcommand{\F}{\mathcal{F}}
\newcommand{\A}{\mathcal{A}}
\newcommand{\D}{\mathcal{D}}
\newcommand{\G}{\mathcal{G}}
\newcommand{\W}{\mathcal{W}}
\newcommand{\Hc}{\mathcal{H}}
\newcommand{\Pc}{\mathcal{P}}
\newcommand{\Ber}{Ber}

\newcommand{\E}{\mathbb{E}}

\newcommand{\eps}{\varepsilon}

\newcommand{\VC}{\mathrm{VC}}
\newcommand{\LVC}{\mathrm{LVC}} 

\newcommand{\TV}{\mathrm{TV}}

\newcommand{\KL}{\mathrm{KL}}

\newcommand{\Unif}{\mathrm{Unif}}
\newcommand{\diag}{\mathrm{diag}}

\newcommand{\binomle}[2]{\binom{#1}{\le #2}}

\title[Uniform Laws of Large Numbers in Product Spaces]{Uniform Laws of Large Numbers in Product Spaces}
\coltauthor{%
 \Name{Ron Holzman} \Email{holzman@technion.ac.il}\\
 \addr Department of Mathematics, Technion
 \AND
 \Name{Shay Moran}\Email{smoran@technion.ac.il}\\
 \addr Departments of Mathematics, Computer Science, and Data and Decision Sciences, Technion and Google Research
 \AND
 \Name{Alexander Shlimovich} \Email{ashlimovich@campus.technion.ac.il}\\
 \addr Department of Mathematics, Technion
}
\begin{document}
\maketitle
\begin{abstract}

Uniform laws of large numbers form a cornerstone of Vapnik--Chervonenkis theory, where they are characterized by the finiteness of the VC dimension. In this work, we study uniform convergence phenomena in \emph{cartesian product spaces}, under assumptions on the underlying distribution that are compatible with the product structure. Specifically, we assume that the distribution is absolutely continuous with respect to the product of its marginals, a condition that captures many natural settings, including product distributions, sparse mixtures of product distributions, distributions with low mutual information, and more.

We show that, under this assumption, a uniform law of large numbers holds for a family of events if and only if the \emph{linear VC dimension} of the family is finite. The linear VC dimension is defined as the maximum size of a shattered set that lies on an \emph{axis-parallel line}, namely, a set of vectors that agree on all but at most one coordinate. This dimension is always at most the classical VC dimension, yet it can be arbitrarily smaller.
For instance, the family of convex sets in $\mathbb{R}^d$ has linear VC dimension~$2$, while its VC dimension is infinite already for $d\ge 2$. Our proofs rely on estimator that departs substantially from the standard empirical mean estimator and exhibits more intricate structure. We show that such deviations {from the standard empirical mean estimator} are unavoidable in this setting. 
Throughout the paper, we propose several open questions, with a particular focus on quantitative sample complexity bounds.

\end{abstract}

\begin{keywords}
uniform convergence, VC Dimension, product spaces, distribution learning.
\end{keywords}

\section{Introduction}

A central theme in statistics and learning theory is to understand when empirical averages provide reliable \emph{uniform} approximations to their population counterparts. Such uniform laws of large numbers lie at the heart of statistical learning theory: they underlie generalization guarantees, sample complexity bounds, and the analysis of learning algorithms. Classical results show that, for families of events, uniform convergence is characterized by the Vapnik--Chervonenkis (VC) dimension~\cite{vapnik1971}, while for real-valued function classes it is governed by complexity measures such as Rademacher averages or fat-shattering dimensions~\cite{kearns1994,alon1997,bartlett1998}. These results play a basic role in learning theory and in the study of generalization.

Much of the classical theory treats the underlying distribution as arbitrary and makes no use of additional structure of the domain. In many applications, however, the domain naturally carries a product structure, with data points represented as vectors of features or measurements. In such settings, it is often reasonable to restrict attention to distributions that are compatible with this structure, for instance through independence or weak dependence across coordinates. The focus of this work is to understand how such product structure can be leveraged in the uniform estimation problem, and to identify complexity measures that govern uniform convergence under these structural assumptions.

\subsection{Uniform Estimation}

{We begin by presenting an abstract \emph{uniform estimation} problem.} 
This framework captures a broad range of questions in statistics and learning theory and provides a common language for classical uniform laws of large numbers as well as for the results developed in this work.

\begin{mdframed}[
  backgroundcolor=gray!5,
  linecolor=gray!40,
  linewidth=0.5pt,
  roundcorner=6pt,
  innertopmargin=0.8em,
  innerbottommargin=0.8em
]
\begin{center}
\textbf{Uniform Estimability}
\end{center}
\medskip
\noindent
Let $\X$ be a domain, let $\Pc$ be a family of distributions over $\X$, and let $\F$ be a class of real-valued functions on $\X$.
Under what conditions does there exist an estimator which, given i.i.d.\ samples from an unknown distribution $P \in \Pc$, approximates the expectations
\[
P(F) := \mathbb{E}_{X \sim P}[F(X)]
\]
uniformly and simultaneously for all $F \in \F$?
\end{mdframed}

For simplicity, and in line with much of the learning-theoretic literature, we focus on the case where $\F$ consists of indicator functions of measurable subsets of $\X$. Thus, each $F \in \F$ corresponds to an event, and the quantity of interest is the probability $P(F)$. This setting encompasses the classical uniform convergence problem studied in Vapnik--Chervonenkis theory. While our presentation focuses on events, the ideas and techniques developed in this paper extend naturally to more general classes of real-valued functions.

{From a statistical perspective, the goal is to estimate these probabilities
from finitely many independent samples.}
 An estimation algorithm $\A$ is therefore a mapping
\(
\A : \bigcup_{m=1}^\infty \X^m \to [0,1]^\F.
\)

Given a sample $x_1,\ldots,x_m \in \X$ drawn independently from an unknown distribution $P \in \Pc$, the algorithm outputs estimates $\widehat P_m(F) \in [0,1]$ for the probabilities $P(F)$, simultaneously for all $F \in \F$. A canonical example is the empirical estimator, which assigns
\(\widehat P_m(F) := \frac{1}{m}\sum_{i=1}^m \mathbf{1}[x_i \in F].\)

When it is possible to approximate the probabilities of all events in $\F$ uniformly over all target distributions in $\Pc$, we say that the pair $(\F,\Pc)$ is \emph{uniformly estimable}.
\vspace{-1em}
\begin{definition}[Uniform Estimability]\label{def:unifest}
We say that a pair $(\F, \Pc)$ is \emph{uniformly estimable} if there exist a sample complexity bound $m : (0,1)^2 \to \N$ and an algorithm $\A$ such that for every $\eps, \delta \in (0,1)$ and every distribution $P \in \Pc$, the following holds: if $x_1, \ldots, x_m$ are drawn independently from $P$ for $m \ge m(\eps, \delta)$,
then with probability at least $1 - \delta$,
\begin{equation*}\label{eq:goal}
\forall F \in \F:\quad \bigl| \widehat P_m(F) - P(F) \bigr| \le \eps,
\end{equation*}
where $\widehat P_m = \A(x_1, \ldots, x_m)$ denotes the estimator output by $\A$.
\end{definition}

\vspace{-0.5em}
Before presenting our main results, we illustrate the scope of the uniform estimability framework by revisiting two classical extremes in statistical learning theory. The first corresponds to the case where the family of distributions $\Pc$ is unrestricted, leading to Vapnik--Chervonenkis theory. The second corresponds to the case where the family of events $\F$ is unrestricted, which is intimately related to the \emph{hypothesis selection} problem. These examples delineate the classical boundaries of uniform estimation and serve as reference points for the structured distribution families studied in this work.

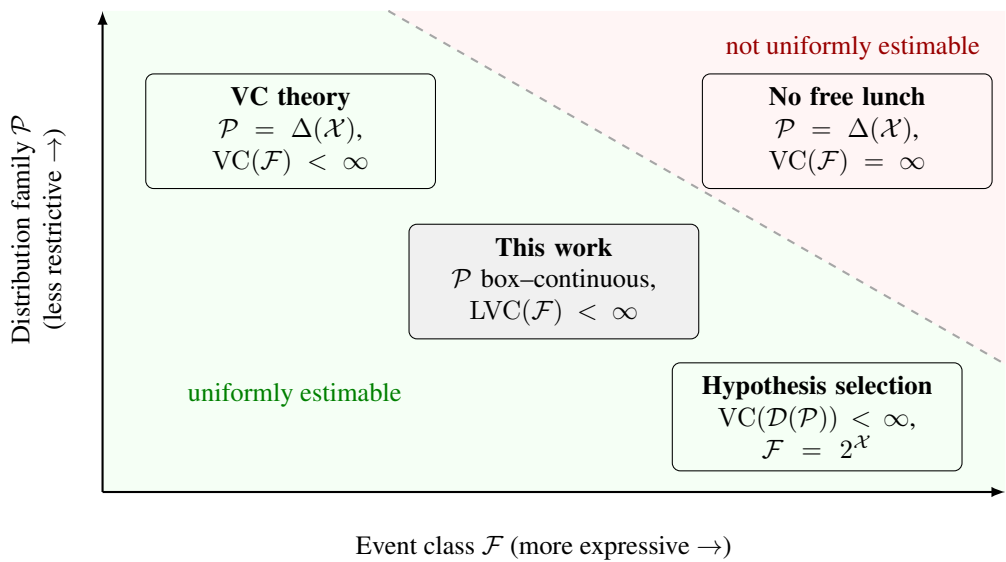
\begin{figure}[t]
\centering
\begin{tikzpicture}[
  font=\small,
  >=latex,
  arrow/.style={->, thick},
  lab/.style={font=\small, align=center},
  tick/.style={font=\footnotesize, align=center},
  box/.style={draw, rounded corners=3pt, align=center, inner sep=5pt, text width=3.5cm},
  work/.style={box, fill=gray!12},
  boundary/.style={dashed, gray!70, line width=0.8pt},
  region/.style={fill=gray!10, draw=gray!60, rounded corners=3pt}
]


\node[lab, anchor=north east] at (8.5,-0.45)
  {Event class $\F$ (more expressive $\rightarrow$)};
\node[lab, rotate=90] at (-0.85,3.5)
  {Distribution family $\Pc$\\ (less restrictive $\rightarrow$)};

\coordinate (A) at (3.8, 6.4);
\coordinate (B) at (12,1.7);
\draw[boundary] (A) -- (B);

\fill[green!4] (0,0) -- (12,0) -- (B) -- (A) -- (0, 6.4) -- cycle;

\fill[red!4] (A) -- (B) -- (12,6.4) -- cycle;

\draw[arrow] (0,0) -- (12,0);
\draw[arrow] (0,0) -- (0,6.4);

\draw[boundary] (A) -- (B);

\node[lab, anchor=north east, text=red!60!black] at (11.8,6.2)
  {not uniformly estimable};

\node[lab, anchor=south west, text=green!50!black] at (1,1)
  {uniformly estimable};

\node[work] at (6,2.8) {
\textbf{This work}\\
$\Pc$ box--continuous,\ $\LVC(\F)<\infty$
};

\node[box] at (2.5,4.8) {
\textbf{VC theory}\\
$\Pc=\Delta(\X)$,\ $\VC(\F)<\infty$
};
\node[box] at (9.9,4.8) {
\textbf{No free lunch}\\
$\Pc=\Delta(\X)$,\\ $\VC(\F)=\infty$
};
\node[box] at (9.5,1) {
\textbf{Hypothesis selection}\\
$\VC(\D(\Pc))<\infty$, \ $\F=2^{\X}$
};

\end{tikzpicture}

\caption{Uniform estimation regimes. The dashed curve separates uniformly and non-uniformly estimable regimes based on the richness of the event class $\F$ and distribution family $\Pc$.}
\label{fig:uniform-estimation-map}
\end{figure}

\subsubsection*{The Family of Distributions $\Pc$ is Unrestricted (VC Theory)}

Consider the classical distribution-free setting in which the family of distributions is unrestricted, $\Pc = \Delta(\X)$.
The central question of Vapnik--Chervonenkis theory is to characterize those (possibly infinite) classes $\F$ for which $(\F, \Delta(\X))$ is uniformly estimable.\footnote{%
Classical VC theory focuses on uniform convergence of the empirical mean estimator and does not explicitly consider arbitrary estimators.
However, the no-free-lunch lower bound underlying the necessity of finite VC dimension applies to \emph{any} estimator, and thus the characterization extends beyond the empirical mean.}
The seminal work of~\cite{vapnik1971} shows that this holds if and only if $\F$ has finite VC dimension $d$.
Moreover, the empirical distribution is minimax--optimal in this setting, achieving the sharp sample complexity
\(
m(\varepsilon,\delta) = \Theta\bigl((d + \log(1/\delta))/\varepsilon^2\bigr)
\)
\citep{blumer1989,talagrand1994}.

\vspace{-1em}
\subsubsection*{The Family of Events $\F$ is Unrestricted (Hypothesis Selection)}

Consider the opposite extreme, in which the family of events is unrestricted, $\F = 2^{\X}$.
In this case, uniform estimation coincides with learning the underlying distribution in total variation distance.

When the family of distributions $\Pc = \{P_1,\ldots,P_n\}$ is finite, this problem admits a classical solution via the hypothesis selection method of~\cite{yatracos1985}.
The method is based on the \emph{Yatracos sets}
\(
D_{i,j} := \{x \in \X : P_i(x) \ge P_j(x)\},
\)
and the associated class $\D(\Pc) := \{D_{i,j} : i,j \in [n]\}$.
These sets satisfy
\[
\TV(P_i,P_j) = \max_{D \in \D} \bigl| P_i(D) - P_j(D) \bigr|
\qquad\text{for all } i,j \in [n].
\]
As a consequence, with
\(m = \Theta\bigl((\log n + \log(1/\delta))/\varepsilon^2\bigr)\)
samples, the empirical distribution $\hat p_m$ uniformly approximates the probabilities of all sets in $\D$.
Selecting a distribution $p_i \in \Pc$ that minimizes 
$\max_{D \in \D} \bigl| P_i(D) - \hat P_m(D) \bigr|$ yields a sound estimator.
More generally, the same approach extends to infinite families $\Pc$ whenever the associated Yatracos class $\D(\Pc)$ has finite VC dimension~$d$, in which case the sample complexity scales as
\(m = \Omega\bigl((d + \log(1/\delta))/\varepsilon^2\bigr).\) 

In contrast to the VC setting discussed above, a general characterization of those infinite families of distributions $\Pc$ for which the pair $(\F = 2^{\X}, \Pc)$ is uniformly estimable is not known.
Moreover, it is known that VC-type combinatorial characterizations are impossible in this regime~\citep{lechner2024}.

\vspace{-0.5em}
\paragraph{Roadmap.}
In Section~\ref{sec:main}, we state our main result {in qualitative form.}
Within this section, Section~\ref{sec:examples} presents a collection of examples
illustrating the scope and applicability of our result.
{In Section~\ref{subsec:quantitative-bounds}, we discuss the quantitative
sample complexity bounds that follow from our analysis and formulate several
open questions aimed at closing the remaining gaps.}
In Section~\ref{sec:technicaloverview}, we provide a high-level overview of the
proof and describe the resulting estimation procedure.
Full proofs are deferred to the appendix.

\vspace{-1em}
\section{Main Result}\label{sec:main}

We now turn to uniform estimation on product domains \(\X = \W_1 \times \cdots \times \W_d.\)
To state our main result, we introduce two notions.
The first is a distributional regularity condition, termed \emph{uniform box--continuity}.
The second is a combinatorial parameter of $\F$, called the \emph{linear VC dimension}.
With these definitions in place, we then state our main theorem.

\paragraph{Uniform Box--Continuity.}
We begin by recalling a quantitative formulation of absolute continuity.
Let $P$ and $Q$ be probability distributions on a measurable space $(\X,\Sigma)$.
Recall that $P$ is absolutely continuous with respect to $Q$ if for every
$\alpha>0$ there exists $\beta>0$ such that for every measurable set $E$,
\[
P(E)\ge \alpha \ \Longrightarrow\ Q(E)\ge \beta .
\]
This condition is the usual notion of absolute continuity, commonly denoted in
measure theory by $P\ll Q$. 
We now specialize this notion to product spaces.
Let $\X = \W_1 \times \cdots \times \W_d$, and let $P \in \Delta(\X)$.
Denote by $P_i$ the marginal of $P$ on $\W_i$, and by
$P_{\square} := \prod_{i=1}^d P_i$ the product of marginals.
\vspace{-0.5em}
\begin{definition}[Uniform box--continuity]
\label{def:uniform-box-cont}
A family of distributions $\Pc \subseteq \Delta(\X)$ is \emph{uniformly box--continuous}
if for every $\alpha>0$ there exists $\beta >0$ such that for every $P \in \Pc$ and every measurable set $E \subseteq \X$, \(P(E) \ge \alpha
\ \Longrightarrow\
P_{\square}(E) \ge \beta\).
\end{definition}
\vspace{-0.3em}
Whenever a family $\Pc$ is uniformly box--continuous, we refer to any function
$\beta \colon (0,1] \to (0,1]$ satisfying Definition~\ref{def:uniform-box-cont}
as a \emph{modulus of box--continuity} for $\Pc$. 
We denote by $\Pc_{\beta}$ the family of all distributions that are box--continuous
with modulus $\beta$.

This condition is analogous to uniform continuity in analysis: the same
$\alpha \mapsto \beta(\alpha)$ relationship applies uniformly to all distributions
in the family $\Pc$.
For example, the family of all product distributions on $\X$ is uniformly box--continuous,
with $\beta(\alpha) = \alpha$.
We present additional examples of uniformly box--continuous families after stating the main result.

\paragraph{Linear VC Dimension.}
We now introduce the combinatorial parameter governing uniform estimation in product spaces.
A set $L \subseteq \X$ is called an \emph{axis--parallel line} if there exists an
index $i \in [d]$ and elements $w_j \in \W_j$ for all $j \neq i$ such that

\vspace{-0.5em}
\[
L = \bigl\{ (x_1,\ldots,x_d) \in \X : x_j = w_j \text{ for all } j \neq i \bigr\}.
\]

\vspace{-0.3em}
 A set $S \subseteq \X$ is called \emph{colinear} if it is contained in some 
 axis-parallel line. Equivalently, $S$ is a set of points in $\X$ that agree on all but one coordinate.

\vspace{-0.5em}
\begin{definition}[Linear VC dimension]
\label{def:lvc}
The \emph{linear VC dimension} of a class $\F \subseteq 2^{\X}$, denoted $\LVC(\F)$, is the
largest integer $k$ for which there exists a colinear set $S \subseteq \X$ of size $k$
that is \emph{shattered} by $\F$, meaning that for every labeling $S \to \{0,1\}$,
there exists a set $F \in \F$ that realizes this labeling on $S$.
If no such finite $k$ exists, we set $\LVC(\F) = \infty$.
\end{definition}

\vspace{-1em}
\begin{mdframed}[
  backgroundcolor=gray!10,
  linecolor=gray!50,
  linewidth=0.6pt,
  roundcorner=6pt,
  innertopmargin=1em,
  innerbottommargin=1em
]
\begin{theorem}[Main Result]
\label{thm:characterization}
Let $\X = \W_1 \times \cdots \times \W_d$ be a product measurable space, and let
$\F \subseteq 2^{\X}$ be a family of events.
Then, the following are equivalent:
\begin{enumerate}
\item $\LVC(\F) < \infty$.
\item For every uniformly box--continuous family of distributions
$\Pc \subseteq \Delta(\X)$, the pair $(\F,\Pc)$ is uniformly estimable.
\end{enumerate}
\end{theorem}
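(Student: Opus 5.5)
The plan is to prove the two implications separately. The implication (2)$\Rightarrow$(1) is a no-free-lunch argument restricted to lines. The implication (1)$\Rightarrow$(2) has two parts. First, a \emph{tensorized} empirical estimator handles product measures. Then a reduction from uniformly box--continuous $P$ to finitely many product-like pieces handles the general case.

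\emph{(2)$\Rightarrow$(1).} Suppose $\LVC(\F)=\infty$. Let $\Pc$ be the family of all product distributions; it is uniformly box--continuous with $\beta(\alpha)=\alpha$. For each $k$ there is a colinear set $S_k$ of size $2k$ shattered by $\F$. Say $S_k$ lies on a line in direction $i$ with the other coordinates fixed to $w_j$. For each $T\subseteq S_k$ with $|T|=k$, the uniform distribution on $T$ equals $\bigl(\prod_{j\neq i}\delta_{w_j}\bigr)\times \Unif(T_i)$, where $T_i$ is the set of $i$-th coordinates of $T$. Hence it is a product distribution and lies in $\Pc$. The standard VC lower bound then applies verbatim. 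With $m\ll k$ samples, any estimator sees at most $m$ points of $T$. Because $S_k$ is shattered, some $F\in\F$ has $F\cap S_k$ equal to the unseen part of $T$, or to its complement. Averaging over a uniformly random $T$ shows that the estimator errs by $\Omega(1)$ on some such $F$ with constant probability. Since $k$ is arbitrary, no finite $m(\eps,\delta)$ exists.

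\emph{(1)$\Rightarrow$(2), product case.} Let $k=\LVC(\F)$. Split the sample into $d$ independent blocks, and let $\widehat P_i$ be the empirical marginal on $\W_i$ computed from block $i$. The estimator is $\widehat P(F):=(\widehat P_1\times\cdots\times\widehat P_d)(F)$, which averages over all $m^d$ tuples. For product $P$, I would bound the error by a hybrid argument, replacing $P_i$ by $\widehat P_i$ one coordinate at a time. At step $i$, the quantity being integrated against $P_i$ versus $\widehat P_i$ is the function
\[
w_i\mapsto \int \mathbf 1_F(w_1,\dots,w_d)\, d(\widehat P_1\times\cdots\times \widehat P_{i-1}\times P_{i+1}\times\cdots\times P_d).
\]
This function is a convex combination of indicators of sections of $F$ along lines in direction $i$. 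That family of sections has VC dimension at most $k$. The convex hull of a VC class has the same Rademacher complexity, so $\widehat P_i$ approximates all these integrals uniformly within $\eps/d$ using $O\bigl((k+\log(d/\delta))d^2/\eps^2\bigr)$ samples. One point needs care: the mixing measure involves $\widehat P_1,\dots,\widehat P_{i-1}$, which depend on earlier blocks. This is harmless because block $i$ is independent of those blocks and the bound is uniform over the whole convex hull.

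\emph{(1)$\Rightarrow$(2), box--continuous case. This is the main obstacle.} For general $P$ we have $P(F)=\int f\,\mathbf 1_F\,dP_\square$ with $f=dP/dP_\square$, and the tensorized estimator only estimates $P_\square(F)$. The modulus $\beta$ gives uniform integrability of $f$: the set $\{f>t\}$ has $P_\square$-mass at most $1/t$. So whenever $1/t<\beta(\alpha)$, that set has $P$-mass below $\alpha$. I would use this to approximate $f$ in $L^1(P_\square)$ by a step function $\sum_j c_j\mathbf 1_{B_j}$ over a partition into boxes $B_j=B_j^1\times\cdots\times B_j^d$. Then $P(F)\approx\sum_j c_j P_\square(F\cap B_j)$. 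Each term can be estimated by the tensorized estimator applied to the class $\{F\cap B_j\}$, whose linear VC dimension is still at most $k$, restricted to the marginals on the $B_j^i$.

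The hard part is that the boxes depend on the unknown $P$. The number of boxes needed for accuracy $\eps$ must also be bounded uniformly in terms of $\beta$ alone, and I am not sure this holds for an arbitrary partition. My first attempt would be to learn the partition from data, using a Yatracos-type selection among candidate box partitions built from empirical quantile cells in each coordinate. I would then control the residual error through uniform integrability. If a uniform bound on the number of boxes fails, I would instead iterate: condition on coarse cells in one coordinate, and induct on $d$ using the fact that the conditional distributions remain box--continuous with a degraded modulus.
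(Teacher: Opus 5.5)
Your direction (2)$\Rightarrow$(1) matches the paper's argument, and your tensorized product case is essentially the paper's hybrid lemma. There is one slip in the product case. The union over all lines in direction $i$ of the section classes $\F|_{x_{-i}}$ can have infinite VC dimension even when $\LVC(\F)<\infty$, so ``convex hull of a VC class'' is the wrong justification. What works is that the mixing measure $\mu$ does not depend on $F$. Pulling $\sup_F$ inside the integral against $\mu$ then bounds the Rademacher complexity of $\{w_i\mapsto\int \mathbf 1_{F_{x_{-i}}}(w_i)\,d\mu(x_{-i}) : F\in\F\}$ by the supremum over $x_{-i}$ of that of $\F|_{x_{-i}}$.

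The genuine gap is the non-product box--continuous case, which is the substance of the theorem and which your proposal leaves open. The density-approximation route also cannot work as stated, because the number of boxes is not controlled by $\beta$ and $\eps$. Take $\W_1=\W_2=\mathbb N$, let $N$ be a power of $2$, and let $P^{(N)}(i,j)\propto 1+\tfrac12 H_{ij}$ on $[N]^2$, with $H$ the Sylvester--Hadamard matrix. One checks $\tfrac12 P^{(N)}_\square\le P^{(N)}\le 2P^{(N)}_\square$, so $\beta(\alpha)=\alpha/2$ works for every $N$. Yet by Lindsey's lemma every rectangle $A\times B$ with $|A||B|\ge 100N$ is nearly balanced for $H$, so no constant approximates $dP^{(N)}/dP^{(N)}_\square$ on it in $L^1$. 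Smaller rectangles carry $P^{(N)}_\square$-mass $O(1/N)$ each. Hence any step function on $K$ boxes has $L^1(P^{(N)}_\square)$ error $\Omega(1)$ unless $K=\Omega(N)$. Your fallback is not yet an argument either: conditioning on cells of one coordinate still leaves a $d$-coordinate problem, so it is unclear what the induction on $d$ is.

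The missing idea is that you never need $dP/dP_\square$. Use $P_\square$ only to build a finite, data-dependent cover of $\F$ in the pseudometric $P(F\triangle F')$, then estimate under $P$ itself by plain empirical means.
\begin{itemize}
\item Run your tensorized estimator $\widehat P$ on $\Hc=\F\triangle\F$ at accuracy $\beta(\eps/2)/2$. Its linear VC dimension is $O(k)$, since each section class of $\Hc$ lies in $\F|_{x_{-j}}\triangle\F|_{x_{-j}}$.
\item Because coordinate $j$ is read off block $j$, $\widehat P$ has the same law whether the sample comes from $P$ or from $P_\square$. This really uses your blocks: with a single shared sample the projections $\pi_j(S)$ are jointly dependent under $P$, though one can also note that the shared-sample grid contains a block grid.
\item Since $\widehat P$ is supported on $G=\pi_1(S^1)\times\cdots\times\pi_d(S^d)$, with high probability $G$ meets every $H\in\Hc$ with $P_\square(H)\ge\beta(\eps/2)$. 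Box--continuity turns this into: $F\cap G=F'\cap G$ implies $P(F\triangle F')<\eps/2$.
\item Keep one representative per trace on $G$. Applying Sauer--Shelah--Perles on each of the at most $n^{d-1}$ parallel grid lines in a longest direction bounds the number of traces by $\bigl(\sum_{j\le k}\binom{n}{j}\bigr)^{n^{d-1}}$, where $n=\max_i|\pi_i(S^i)|$.
\item Estimate each representative by its empirical frequency on a fresh sample from $P$, using Hoeffding plus a union bound, and give every $F$ its representative's estimate. Rounding costs at most $\eps/2$ and estimation at most $\eps/2$.
\end{itemize}
This is the paper's proof.
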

\end{mdframed}

The necessity of finite linear VC dimension is relatively straightforward and follows from standard no--free--lunch lower bounds for VC classes.
The more substantive part of the theorem is the converse direction, showing that finite linear VC dimension is also sufficient for uniform estimability under uniformly box--continuous distributions.
In this sense, Theorem~\ref{thm:characterization} identifies linear VC dimension as the only obstruction to uniform estimation in product spaces within this regime.
Establishing sufficiency requires a combination of techniques, including packing arguments and the construction of grid--based estimators that explicitly exploit the underlying product structure.
We provide a high--level overview of the proof strategy in Section~\ref{sec:technicaloverview}.

The special case in which the family $\Pc$ consists solely of product distributions has been studied in several prior works
\citep{cai2017,guo2020,harms2022,livni2019,coregliano2024}.
In particular, the results of \citet{cai2017}, \citet{livni2019}, and \citet{coregliano2024} imply our characterization in this regime.
The conclusion of \citet{cai2017} is weaker, as uniform estimation there is implied by the finiteness of the VC dimension of the coordinatewise projections of $\F$, a parameter that is always at least as large as the linear VC dimension 
and can be infinite even when $\LVC(\F)$ is finite.
By contrast, \citet{livni2019} were the first to introduce a combinatorial dimension essentially equivalent to the linear VC dimension and use it to characterize uniform estimation in the symmetric i.i.d.\ setting; this perspective was further developed in~\citet{coregliano2024}, which replaces the i.i.d.\ assumption by weaker symmetry notions related to \emph{exchangeability}.

Unlike these works, our results do not rely on symmetry assumptions or exact independence, and apply to any distribution families exhibiting weak independence as formalized by uniform box--continuity.
For example, our framework covers finite mixtures of product distributions as well as distributions with low mutual information, such as non-degenerate multivariate Gaussians, which fall outside the scope of previous work.
Additional examples illustrating this broader regime are discussed in the following section.

\vspace{-1em}
\subsection{Examples}\label{sec:examples}

In this section we present several basic examples that illustrate the scope of
Theorem~\ref{thm:characterization} and aim to provide intuition for the notions of
linear VC dimension and uniform box-continuity.
Examples~\ref{ex:mixtures} and~\ref{ex:total-correlation} present natural
distribution families satisfying uniform box--continuity.
Example~\ref{ex:convex} shows a natural class with infinite VC dimension
and finite linear VC dimension.
Example~\ref{ex:perm-comparison} highlights a setting in which the classical
empirical estimator fails, motivating the need for estimators that explicitly
exploit product structure.

\refstepcounter{examplectr}
\paragraph{Example~\theexamplectr\ (Mixtures of product distributions).}
\label{ex:mixtures}
We begin with a basic example of uniform box-continuity beyond exact product distributions, given by finite mixtures of product distributions.
Establishing uniform box-continuity in this case already requires some care.

\vspace{-0.5em}
\begin{proposition}[Mixtures of product distributions]
\label{prop:mixture-box}
Let $P$ be a probability distribution on $\X=\W_1\times\cdots\times\W_d$ that can
be written as a mixture of at most $k$ product distributions.
Then $P$ is uniformly box--continuous with modulus

\vspace{-0.5em}
\[
\beta(\alpha)
\;=\;
\frac{\alpha^d}{(k-1+\alpha)^{\,d-1}}.
\]
\end{proposition}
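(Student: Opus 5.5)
The plan is to compare $P_{\square}$ with the "diagonal part" of its expansion, which reduces the statement to a finite-dimensional optimization over the mixture weights and the component probabilities of $E$. Write $P=\sum_{j=1}^k \lambda_j Q_j$ with $\lambda_j\ge 0$, $\sum_j\lambda_j=1$, and each $Q_j=\prod_{i=1}^d Q_{j,i}$ a product distribution. Then $P_i=\sum_j \lambda_j Q_{j,i}$. Expanding the product of sums gives
\[
P_{\square}=\prod_{i=1}^d\Bigl(\sum_{j}\lambda_j Q_{j,i}\Bigr)\;\ge\;\sum_{j}\lambda_j^d \prod_{i}Q_{j,i}=\sum_j \lambda_j^d Q_j ,
\]
as measures, because every off-diagonal term in the expansion is a nonnegative measure. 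Setting $q_j:=Q_j(E)\in[0,1]$, it therefore suffices to prove the following purely numerical statement.
\[
\sum_j \lambda_j q_j\ge \alpha \quad\Longrightarrow\quad \sum_j \lambda_j^d q_j\;\ge\;\frac{\alpha^d}{(k-1+\alpha)^{d-1}} .
\]
This bound does not involve $P$ in any other way, so it gives the modulus $\beta$ uniformly for all such $P$.

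\textbf{Step 1: Hölder.} Split each term as $\lambda_j q_j=(\lambda_j q_j^{1/d})\cdot q_j^{(d-1)/d}$ and apply Hölder with exponents $d$ and $d/(d-1)$:
\[
\alpha\le \Bigl(\sum_j\lambda_j^d q_j\Bigr)^{1/d}\Bigl(\sum_j q_j\Bigr)^{(d-1)/d}.
\]
This yields $\sum_j\lambda_j^d q_j\ge \alpha^d/(\sum_j q_j)^{d-1}$. It remains to replace $\sum_j q_j$ by $k-1+\alpha$. The crude bound $\sum_j q_j\le k$ gives the weaker modulus $\alpha^d/k^{d-1}$, and even that already proves uniform box-continuity.

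\textbf{Step 2: reduction to an extremal configuration.} To reach the sharp constant, I will first make the configuration extremal. For fixed $\lambda$, the problem of minimizing $\sum_j\lambda_j^d q_j$ subject to $\sum_j\lambda_j q_j\ge\alpha$ and $q\in[0,1]^k$ is a linear program (a fractional knapsack). Its optimum places $q_j=1$ on the components with the smallest $\lambda_j$, since these have the smallest cost $\lambda_j^{d-1}$ per unit of $\lambda_j$, and has at most one fractional coordinate. I will then apply Hölder only over the support $S=\{j: q_j>0\}$, which gives the denominator $(|S|-1+\theta)^{d-1}$, where $\theta$ is the fractional value.

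\textbf{Main obstacle.} The cases $|S|<k$ and $\theta\le\alpha$ are immediate. The problematic case is $|S|=k$ with $\theta>\alpha$: there all but one component has $q_j=1$, and only the single weight $\lambda$ of the exceptional component, together with $\theta$, remains free. In that case I would drop Hölder and directly minimize the two-parameter function
\[
\frac{(1-\lambda)^d}{(k-1)^{d-1}}+\lambda^d\theta
\]
subject to $1-\lambda(1-\theta)\ge\alpha$. Here the first term is the minimum of $\sum_{j\ne \text{exc}}\lambda_j^d$ given total weight $1-\lambda$, by convexity. The goal is to check by one-variable calculus that this minimum is at least $\alpha^d/(k-1+\alpha)^{d-1}$. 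I expect this calculus check, and ensuring it matches the stated constant exactly, to be the only delicate part of the proof.
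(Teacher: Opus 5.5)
The diagonal reduction $P_{\square}\ge\sum_j\lambda_j^dQ_j$ and the H\"older step are exactly the paper's argument. However, the proposal stops short of the stated constant. As written it proves only the weaker modulus $\alpha^d/k^{d-1}$, and the sharp case is deferred to a calculus check you have not carried out.

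The missing idea is a one-line bound on the H\"older denominator. Since $q_j\le 1$ and $\sum_j(1-\lambda_j)=k-1$,
\[
\sum_j q_j=\sum_j\lambda_jq_j+\sum_j(1-\lambda_j)q_j\le P(E)+(k-1).
\]
You also cannot literally ``replace $\sum_j q_j$ by $k-1+\alpha$'' when $P(E)>\alpha$: take $\alpha$ small and all $q_j=1$, so $\sum_j q_j=k>k-1+\alpha$. The paper instead runs H\"older with $P(E)$ itself, which gives $P_{\square}(E)\ge P(E)^d/(k-1+P(E))^{d-1}$. It then passes from $P(E)$ to $\alpha$ because $x\mapsto x^d/(k-1+x)^{d-1}$ is increasing on $(0,\infty)$; its derivative is $x^{d-1}(k-1+x)^{-d}\bigl(d(k-1)+x\bigr)>0$.

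Your knapsack route would also close, and without any calculus, because the ``main obstacle'' case is empty. At the greedy optimum the constraint is tight. So if all $k$ components are in the support, with the largest-weight one having weight $\lambda$ and value $\theta$, then $\alpha=(1-\lambda)+\lambda\theta$. Equivalently, $\alpha-\theta=(1-\lambda)(1-\theta)\ge 0$, so $\theta\le\alpha$ always. This is just the displayed inequality specialized to the extremal configuration. Once you pair $q_j\le 1$ against the weights $1-\lambda_j$, both the LP reduction and the two-parameter minimization become unnecessary.
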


\vspace{-0.3em}
{We prove Proposition~\ref{prop:mixture-box} in Appendix~\ref{sec:proof-mixture}, where we also show that the modulus is almost optimal, as $\beta(\alpha)\le \alpha^d/(k-1)^{\,d-1}$ is unavoidable.}

Consequently, all mixtures with at most $k$ components share a common modulus
$\beta(\alpha)=\alpha^d/(k-1 +\alpha)^{\,d-1}$ and form a uniformly box--continuous family. In the special case $k=1$, this reduces to the setting of product distributions,
for which $P=P_{\square}$ and uniform box--continuity holds with the identity
modulus $\beta(\alpha)=\alpha$.

\refstepcounter{examplectr}
\paragraph{Example~\theexamplectr\ (Bounded mutual information).}
\label{ex:total-correlation}
Another natural source of uniform box--continuity arises from
information--theoretic constraints.
Recall that for a probability measure $P$ on
$\X=\W_1\times\cdots\times\W_d$, the box projection $P_{\square}$ denotes the
product distribution obtained from the one--dimensional marginals of $P$.
The \emph{total correlation} (also known as \emph{multi--information}) is defined by
\(\mathrm{TC}(P)\;:=\;\KL\!\left(P\,\middle\|\,P_{\square}\right),
\)
where \(\KL\) denotes the Kullback-Leibler divergence.
For $d=2$, the total correlation coincides with the usual mutual information.

\vspace{-0.5em}
\begin{proposition}[Bounded total correlation implies uniform box--continuity]
\label{prop:tc-box}
Fix $C\ge 0$ and let $\Pc_C := \{P:\ \mathrm{TC}(P)\le C\}$.
Then the family $\Pc_C$ is uniformly box--continuous with modulus

\vspace{-0.5em}
\[
\beta(\alpha)
\;=\;
\exp\!\left(\frac{-H(\alpha)-C}{\alpha}\right),
\]
where $H(\alpha) := -\alpha \log \alpha - (1-\alpha)\log(1-\alpha)$ denotes the binary entropy function.
In particular, every $P\in\Pc_C$ is box--continuous with this modulus.\footnote{When $C=0$, one has $\mathrm{TC}(P)=\KL(P\,\|\,P_{\square})=0$, which forces $P=P_{\square}$ and hence $\Pc_0$ coincides with the family of product distributions. In this case the optimal modulus of box--continuity is $\beta(\alpha)=\alpha$. Moreover, plugging $C=0$ into the displayed formula gives $\beta(\alpha)=\exp(-H(\alpha)/\alpha)=\tfrac{\alpha}{e}\,(1+o(1))$ as $\alpha\to 0$. The entropy--based bound stated here remains valid for $C=0$, but is conservative and does not capture this sharp behavior.}
\end{proposition}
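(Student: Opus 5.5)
The natural tool is the data-processing inequality for KL divergence applied to the indicator of $E$. Fix $P\in\Pc_C$ and a measurable $E$ with $P(E)\ge\alpha$. Write $a:=P(E)$ and $b:=P_{\square}(E)$. Pushing $P$ and $P_{\square}$ forward through $x\mapsto\mathbf{1}[x\in E]$ gives
\[
\KL\bigl(\Ber(a)\,\big\|\,\Ber(b)\bigr)\;\le\;\KL(P\,\|\,P_{\square})\;=\;\mathrm{TC}(P)\;\le\;C.
\]
The remaining work is to turn this binary KL bound into a lower bound on $b$ in terms of $a$. Monotonicity will then pass from $a$ to $\alpha$.

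Expanding the binary divergence gives
\[
\KL(\Ber(a)\|\Ber(b)) \;=\; -H(a)\;-\;a\log b\;-\;(1-a)\log(1-b).
\]
Since $-(1-a)\log(1-b)\ge 0$, we get $-a\log b\le C+H(a)$. Hence $\log b\ge -(H(a)+C)/a$, that is, $b\ge \exp\bigl(-(H(a)+C)/a\bigr)$. This is exactly the claimed modulus evaluated at $a$ instead of $\alpha$.

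The one step needing care is replacing $a$ by the smaller value $\alpha$. For this I would show that $g(t):=(H(t)+C)/t$ is nonincreasing on $(0,1]$, so that $a\ge\alpha$ gives $g(a)\le g(\alpha)$ and therefore $b\ge \exp(-g(\alpha))=\beta(\alpha)$. Write
\[
\frac{H(t)}{t}=-\log t-\frac{1-t}{t}\log(1-t).
\]
The term $-\log t$ is decreasing. For the second term, the substitution $u=1-t$ turns $-\frac{1-t}{t}\log(1-t)$ into $\frac{-u\log u}{1-u}$, and I expect this to be nondecreasing in $u$ on $(0,1)$. A derivative check confirms it: the numerator of the derivative is $-\log u-1+u\ge 0$ by $\log u\le u-1$. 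So the second term is nonincreasing in $t$. Also $C/t$ is nonincreasing because $C\ge0$.

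Edge cases: if $b=0$ while $a>0$, the KL divergence is infinite, contradicting $\mathrm{TC}(P)\le C$, so $b>0$. At $a=1$ we have $H(1)=0$ and the bound still holds. Since $\beta$ does not depend on $P$, this yields uniform box--continuity of $\Pc_C$.
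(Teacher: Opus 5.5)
Your proof is correct and matches the paper's argument: apply data processing to the indicator of $E$ to get the binary divergence bound, drop the nonnegative term $-(1-a)\log(1-b)$, and then use that $(H(t)+C)/t$ is nonincreasing to pass from $a$ to $\alpha$. The only cosmetic difference is in the monotonicity step. You check it term by term, while the paper differentiates directly to get $\bigl(\log(1-t)-C\bigr)/t^2\le 0$. Your explicit handling of the case $b=0$ is a small addition the paper leaves implicit.
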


\vspace{-0.5em}
Consequently, for every fixed $C>0$, the class of distributions on $\X$ whose
total correlation is at most $C$ is uniformly box–continuous.
The proof of Proposition~\ref{prop:tc-box} is given in Appendix~\ref{sec:proof-bounded-info}.


\smallskip
\noindent
\emph{Gaussian case.}
Multivariate Gaussian distributions provide a concrete illustration of this phenomenon.
Let $X=(X_1,\ldots,X_d)$ be a Gaussian random vector in $\mathbb R^d$ with
covariance matrix~$\Sigma$.
In this case, the total correlation admits the explicit closed--form expression

\vspace{-0.5em}
\[
\mathrm{TC}(X)
\;=\;
\frac12 \log\!\left(\frac{\det(\operatorname{diag}\Sigma)}{\det(\Sigma)}\right),
\]

where $\Sigma$ is the covariance matrix of $X$ and $\operatorname{diag}\Sigma$
is obtained by zeroing out all off--diagonal entries of $\Sigma$; see, e.g.,
\citet[Section~8.3]{cover2012}. This quantity measures the deviation from independence in terms of the volume ratio between the product of marginals and the joint distribution.
In particular, any family of Gaussian distributions for which this quantity is
uniformly bounded has bounded total correlation, and therefore falls within the
scope of Theorem~\ref{thm:characterization}.

In the bivariate case $d=2$, this expression simplifies to
$\mathrm{TC}(X)
=
\tfrac12 \log\!\left(\frac{1}{1-\rho^2}\right),$
where $\rho$ is the correlation coefficient between $X_1$ and $X_2$.
Thus, uniform box--continuity holds for bivariate Gaussians as long as
$\rho^2$ is bounded away from $1$. 


\vspace{-0.4em}
\refstepcounter{examplectr}
\paragraph{Example~\theexamplectr\ (Convex sets in $\mathbb R^d$).}\label{ex:convex}
The power of Theorem~\ref{thm:characterization} is most clearly seen when combined with hypothesis classes that lie far beyond the reach of classical VC theory. Let $\X=\mathbb R^d$ and let $\F$ be the family of all convex subsets of $\mathbb R^d$. Then

\vspace{-0.5em}
\[
\VC(\F)=\infty
\qquad\text{but}\qquad
\LVC(\F)=2.
\]

\vspace{-0.3em}
To see that the classical VC dimension of $\F$ is infinite, note that any finite set of points in convex position in $\mathbb{R}^d$ is shattered by convex sets.
When $d \ge 2$, there exist arbitrarily large such sets (for example, points on a circle), and hence $\VC(\F)=\infty$.
On the other hand, the linear VC dimension satisfies $\LVC(\F)=2$, since restricting convex sets to any line in $\mathbb{R}^d$ yields the class of intervals, which has VC dimension~2.
Consequently, while convex sets are not uniformly estimable under arbitrary distributions, Theorem~\ref{thm:characterization} implies that $(\F,\Pc)$ is uniformly estimable for every uniformly box--continuous family $\Pc$.

In fact, to bound the linear VC dimension, it suffices to control the behavior of the class on axis--parallel lines, rather than on all affine lines.
For example, consider the class $\F_{\mathrm{stair}}$ of \emph{stair--convex} subsets of $\mathbb{R}^d$, where a set is stair--convex if for every two points that differ in a single coordinate and belong to the set, the entire axis--parallel line segment between them is also contained in the set.
Restricting any stair--convex set to a coordinate axis yields an interval, and hence $\LVC(\F_{\mathrm{stair}})=2$.

\vspace{-0.4em}
\refstepcounter{examplectr}
\paragraph{Example~\theexamplectr\ (Permutation graphs: why new machinery is needed).}
\label{ex:perm-comparison}
Let $\X=[n]\times[n]$, and let $\F=\{F_\pi : \pi\in S_n\}$ be the family of
\emph{permutation graphs}, where each $F_\pi$ corresponds to a permutation
$\pi:[n]\to[n]$ and is defined by

\vspace{-0.5em}
\[
F_\pi := \{(i,\pi(i)) : i\in[n]\}.
\]

\vspace{-0.4em}
Thus, each set in $\F$ contains exactly one point in every row and every column and hence $\LVC(\F)=1$.
By Theorem~\ref{thm:characterization}, $\F$ is therefore uniformly estimable with respect to any uniformly box--continuous family of distributions.

Nevertheless, the most natural estimator - namely, the empirical mean - fails even
in this simple setting.
To see this, consider the uniform distribution on $\X=[n]\times[n]$, which is a
product distribution and hence uniformly box--continuous with
$\beta(\alpha)=\alpha$.
Suppose we draw $m \ll \sqrt{n}$ samples from this distribution.
By the birthday paradox, with high probability no two samples share the same row
and no two share the same column.
Consequently, there exists a permutation $\pi$ such that all sampled points lie
in $F_\pi$.
For the empirical estimator, this yields $\widehat P_m(F_\pi)=1$, whereas the true
probability satisfies $P(F_\pi)=1/n$.

Since $n$ can be taken arbitrarily large, this shows that the empirical mean does
not uniformly estimate $\F$, despite the fact that $\LVC(\F)=1$.
This example highlights the need for estimators that explicitly exploit product
structure, as predicted by our main result.
Formal proofs of the claims above are given in
Section~\ref{sec:proof-permutation}.

\vspace{-1em}
\subsection{Quantitative Bounds}
\label{subsec:quantitative-bounds}

\vspace{-0.3em}
Beyond the qualitative characterization provided by
Theorem~\ref{thm:characterization}, it is natural to ask for
\emph{quantitative} guarantees.
In this subsection we discuss the sample complexity of uniform estimation and
state explicit upper and lower bounds on the sample size needed to achieve uniform estimation.
\vspace{-1em}
\subsubsection{Upper and lower bounds}

\vspace{-0.3em}
To state our quantitative bounds, we use the standard sample--complexity
notation for uniform estimation.
Let $m_{\F,\Pc}(\eps,\delta)$ denote the smallest $m\in\mathbb N$ for which there
exists an estimator $\widehat P_{{m}}$, as in Definition~\ref{def:unifest}, such that

\vspace{-0.5em}
\[
\Pr_{{S\sim p^{m}}}\Bigl[\sup_{F\in\F}\bigl|\widehat P_{m}(S)(F)-P(F)\bigr|\le \eps\Bigr]
\;\ge\; 1-\delta
\qquad\text{for every }P\in\Pc .
\]

\vspace{-0.1em}
Our proof of Theorem~\ref{thm:characterization} is constructive and therefore
yields explicit bounds on $m_{\F,\Pc}(\eps,\delta)$.
In particular, we obtain the following general lower and upper bounds.
Let $g:=\LVC(\F)$.
Our lower bound shows that even when $\Pc$ is the class of product
distributions one necessarily has

\vspace{-0.3em}
\begin{equation}\label{eq:qb-lb}
m_{\F,\Pc}(\eps,\delta)
\;\ge\;
\Omega\!\left(\frac{g+\log(1/\delta)}{\eps^2}\right).
\end{equation}

On the other hand, our main estimator yields the following general upper bound:
\begin{equation}\label{eq:qb-ub}
m_{\F,\Pc}(\eps,\delta)
\;\le\;
 \tilde{O}\!\left(
\frac{{{C_0}^d} \cdot d^{2(d-1)}}{\eps^2\,\beta(\eps/2)^{2(d-1)}}
\; g^{d}\;
\Bigl(\log\tfrac{1}{\delta}\Bigr)^{d-1}
\right),
\end{equation}
where {${C_0} > 0$ is a universal constant},  $d$ is the \emph{width} of the product space $\X=\W_1\times\cdots\times\W_d$ and $\beta$ is the modulus of uniform box--continuity of $\Pc$; namely, for every measurable event $E\subseteq\X$, if $P(E)\ge \eps/2$ then $P_{\square}(E)\ge \beta(\eps/2)$. This is proved in Appendix~\ref{sec:proof-main}; see Lemma~\ref{lem:m-bound-prop1}.

\vspace{-0.3em}
\paragraph{Special case: product distributions.}
When $\Pc$ consists solely of product distributions, a sharper upper bound on the
sample complexity can be obtained:

\vspace{-0.5em}
\[
m_{\F,\Pc}(\varepsilon,\delta)
\;\le\;
O\!\left(\frac{d^2}{\varepsilon^2}\Bigl(g+\log\tfrac{1}{\delta}\Bigr)\right).
\]

\vspace{-0.1em}
This bound is also recovered as a special case of our analysis
(see Remark~\ref{rem:product-high-prob} in Appendix~\ref{sec:proof-main}),
and in the product case it can be achieved using a similar estimator that exploits the exact
product structure, without the need to handle uniform box--continuity.
The same bound can be derived from the arguments of
\citet{cai2017,livni2019}, while one can obtain a conceptually similar guarantee using
\citet{coregliano2024}.

\smallskip
A comparison of~\eqref{eq:qb-lb} and~\eqref{eq:qb-ub} reveals a substantial gap
between our current lower and upper bounds.
Closing this gap leads to the following quantitative question.

\vspace{-0.5em}
\begin{question}[Optimal dependence]
\label{oq:tightness-d}
Let $\Pc$ be a uniformly box--continuous family with modulus $\beta$, and let
$\F$ satisfy $g:=\LVC(\F)$.
What are the tightest bounds on the sample
complexity $m_{\F,\Pc}(\eps,\delta)$ that can be stated in terms of
$d,g,\eps,\delta$, and $\beta$?
\end{question}

\vspace{-0.5em}
Among the various parameter gaps, the dependence on the width $d$ stands out:
our lower bound does not involve $d$, whereas the current upper bound has a
super-exponential $\exp(\Theta(d\log d))$ dependence on $d$.

\vspace{-0.3em}
\paragraph{Tightness of the lower bound.}
We first observe that the lower bound~\eqref{eq:qb-lb} is already tight, up to
constant factors, in its dependence on the parameters $g,\varepsilon,$ and
$\delta$.
This is witnessed by degenerate settings in which the product structure plays no
essential role and the problem effectively reduces to a one--dimensional VC
class.
Specifically, if $\F$ is contained in a single axis--parallel line, then
$\VC(\F)=\LVC(\F)=g$, and uniform estimation reduces to the classical
one--dimensional setting.
In this case, the optimal sample complexity satisfies
$m_{\F,\Pc}(\eps,\delta)=\Theta\!\left(\frac{g+\log(1/\delta)}{\eps^2}\right).$

\vspace{-0.3em}
{\paragraph{Tightness of the upper bound.}
The following example shows that for certain hypothesis classes, a dependence on
the width $d$ is unavoidable, even under product
distributions.}

\refstepcounter{examplectr}
\paragraph{Example~\theexamplectr\ (Unavoidable dependence on the width $d$).}
\label{ex:d-dependence}
Let $\X=\{0,1\}^d$ and $\F=2^{\X}$.
Then $\LVC(\F)=2$, since every axis-parallel line in $\{0,1\}^d$ has size $2$ and
$\F$ restricted to such a line is the full power set.
Moreover, the quantity $\sup_{F\in\F}|\widehat P(F)-P(F)|$
coincides with the total variation distance.
Proposition~\ref{prop:Omega-d-product} shows that for the family of product
measures

\vspace{-0.5em}
\[
\Pc
=\Bigl\{\bigotimes_{i=1}^d \Ber\!\left(\tfrac12+\theta_i\nu\right)
:\theta\in\{\pm1\}^d\Bigr\},
\qquad
\nu = \Theta\!\left(\sqrt{\tfrac{\eps}{d}}\right),
\]

\vspace{-0.3em}
any estimator that is uniformly $\eps$--accurate in total variation must use at
least $m=\Omega(d/\eps)$ samples.

This example shows that bounded $\LVC(\F)$ does not rule out a dependence on the
width $d$.
At the same time, there are settings in which dimension--free rates are
achievable.
Understanding which additional structure of $\F$ distinguishes these regimes is
a natural next question.

\vspace{-0.5em}
\begin{question}[Rate-controlling invariants beyond $\LVC$]
\label{q:rates}
Is there a simple invariant of $\F$, extending or refining
$\LVC(\F)$, that governs the \emph{sample--complexity rate} of uniform estimation
in product spaces?
More concretely, can one characterize when explicit dependence on the width $d$ is necessary, and when it can be avoided?
\end{question}

\vspace{-1em}
\subsubsection{Infinite product spaces}\label{subsubsec:infinite-product}
To probe these issues further, we move to the infinite--product setting.
{Here the width is unbounded, so any phenomenon that forces sample complexity to
grow with $d$ can no longer be hidden in quantitative constants and instead
becomes a qualitative obstruction to uniform estimability.
This makes infinite products a natural test case for identifying what features
of a class $\F$ (beyond $\LVC(\F)$) control estimability.
In particular, characterizing uniform estimability on infinite product spaces
provides an extreme starting point for addressing
Question~\ref{q:rates}.}

We focus on the infinite Boolean cube $\X=\{0,1\}^{\mathbb N}$ equipped with its
canonical product $\sigma$--algebra.
In this setting, we take $\F$ to be the family of all measurable events
(i.e., the full product $\sigma$-algebra on $\X$), and let $\Pc$
denote the family of all product distributions on $\X$.
Example~\ref{ex:d-dependence} already implies that in this setting bounded linear
VC dimension is insufficient to guarantee uniform estimability.
{Thus, in the infinite-dimensional product setting the linear VC dimension loses
its role: while it yields a sharp characterization in finite-dimensional product
spaces, it becomes too weak to control uniform estimability once the product has
infinitely many coordinates.}

This raises a natural question: what, if anything, should replace it?
One possibility is that in infinite product spaces the benefit of product
structure disappears altogether.
For instance, on the infinite Boolean cube, could uniform estimability with
respect to product measures already be as hard as uniform estimability with
respect to arbitrary distributions?
Equivalently, might uniform estimability in the infinite boolean cube be governed simply by
the classical VC dimension?
The following example shows that this is also false.
\refstepcounter{examplectr}
\paragraph{Example~\theexamplectr\ (Infinite VC dimension with trivial uniform estimation).}
\label{ex:tail-trivial}
Let $\X=\{0,1\}^{\mathbb N}$ and let $\Pc_{\mathrm{prod}}$ denote the family of all
product measures on $\X$.
For $\alpha\in[0,1]$, define the event

\vspace{-0.5em}
\[
E_\alpha
:=
\Bigl\{
  x\in\X :
  \lim_{n\to\infty}\tfrac{1}{n}\sum_{i=1}^n x_i = \alpha
\Bigr\},
\]

\vspace{-0.3em}
and let $\F$ be the class generated by finite unions of such events with rational
parameters $\alpha$.
Then $\VC(\F)=\infty$, yet the pair $(\F,\Pc_{\mathrm{prod}})$ is uniformly
estimable.
In fact, uniform estimation is trivial in this case: a single sample suffices,
so $m(\varepsilon,\delta)=1$.
This follows from Kolmogorov’s $0$-$1$ law, which implies that under any product
measure every event in $\F$ has probability either $0$ or $1$, and is therefore
almost surely determined by a single draw. See Appendix~\ref{sec:infinite-product} for details.

\smallskip

These examples show that neither $\LVC(\F)$ nor $\VC(\F)$ alone governs uniform estimability on $\{0,1\}^{\mathbb N}$. We conclude with the following open problem.

\vspace{-0.4em}
\begin{question}[Uniform estimability in infinite product spaces]
\label{q:infinite}
Is there a natural invariant that characterizes uniform estimability in infinite
product spaces?
\end{question}

\vspace{-2em}
\section{Technical Overview}\label{sec:technicaloverview}
We provide a high-level overview of the proof of
Theorem~\ref{thm:characterization}, focusing on the sufficiency direction.
The necessity of finite linear VC dimension follows from standard no-free-lunch
lower bounds for VC classes and is therefore deferred to \autoref{sec:proof-main}.
\vspace{-0.5em}
\paragraph{Upper bound: proof idea.}
Our upper-bound estimator proceeds in two steps (see also the pseudocode below):
\begin{itemize}
\item We first discretize the class $\F$, using a finite grid, thereby
reducing the problem to a finite subclass.
\item We then apply uniform estimation to this finite restriction and extend the
resulting estimates back to the full class $\F$.
\end{itemize}

Fix accuracy and confidence parameters $\eps,\delta\in(0,1)$, and let
$S\sim P^{m}$ be an i.i.d.\ sample from $P$.
Our estimator first splits $S$ into two disjoint subsamples $S=(S^{(0)},S^{(1)})$.
The subsample $S^{(0)}$ is used for the first phase, and $S^{(1)}$ is used
for the second phase.

\smallskip

\noindent\emph{Phase 1: discretization via representatives.}
The key idea of this phase is to obtain a finite discretization of the class $\F$ by constructing a grid $G$, which in turn is used to define a finite
representative subclass of $\F$.
The role of the grid is to ensure that agreement on $G$ forces closeness in
probability under the target distribution $P$.

Using the initial sample $S^{(0)}$, we form a product grid

\vspace{-0.5em}
\[
G = G_1 \times \cdots \times G_d,
\qquad
G_i := \pi_i(S^{(0)}) \subseteq \W_i ,
\]

where $\pi_i$ denotes projection onto the $i$-th coordinate.
That is, $G$ is obtained by projecting the sample onto each coordinate separately
and taking the Cartesian product of the resulting coordinate sets.

The crucial property we establish is a \emph{hitting property} of the grid $G$
with respect to the distribution $P$: with high probability over the draw of
$S^{(0)}$, the grid intersects the symmetric difference of any two sets whose
$P$-measure is large. 
{Equivalently, $G$ is an $\varepsilon$--net for the family of symmetric differences
induced by $\F$ under the pseudo-metric
$d_P(F,F') := P(F\triangle F')$.}
Formally, we show that, simultaneously for all $F,F'\in\F$,

\vspace{-0.5em}
\[
P(F\triangle F') > \eps/2
\quad\Longrightarrow\quad
(F\triangle F')\cap G \neq \emptyset .
\]

\vspace{-0.1em}
The proof proceeds in two steps.
First, we establish the analogous statement with respect to the
product-of-marginals distribution $P_{\square}$.
Second, uniform box-continuity allows us to transfer this guarantee from
$P_{\square}$ to $P$: since
$P(F\triangle F') > \varepsilon/2$ implies
$P_{\square}(F\triangle F') > \beta(\varepsilon/2)$,
it suffices to ensure that $G$ intersects every symmetric difference with
$P_{\square}$-measure at least $\beta(\varepsilon/2)$.
The required bounds are provided by
Lemmas~\ref{lem:hitting-FDeltaF-LVC} and~\ref{lem:hitting-FDeltaF-beta}, proved in
the appendix.

As a consequence, the grid $G$ induces an equivalence relation on $\F$ given by

\vspace{-0.5em}
\[
F \sim_G F'
\quad\Longleftrightarrow\quad
F\cap G = F'\cap G .
\]

\vspace{-0.3em}
Let $\F_G \subseteq \F$ denote a choice of one representative from each
$\sim_G$-equivalence class.
This yields a finite representative family that discretizes $\F$ at resolution
$\varepsilon/2$ with respect to $P$.
\smallskip

\noindent\emph{Phase 2: estimation on the representative family.}
Having constructed the finite representative family $\F_G$, we now turn to the
second phase, in which uniform estimation over $\F$ is reduced to uniform
estimation over $\F_G$.
For each $F\in\F$, let $F^\star\in\F_G$ denote its representative under
$\sim_G$.
On the high-probability event from Phase~1, the hitting property implies that
agreement on $G$ forces proximity under $P$: if $F\sim_G F^\star$, then
$(F\triangle F^\star)\cap G=\emptyset$, and hence
\(
P(F\triangle F^\star)\le \eps/2 .
\)
Consequently, every set $F\in\F$ can be \emph{rounded} to its representative
$F^\star\in\F_G$ at a cost of $\leq\eps/2$ difference in probability.
Using the independent subsample $S^{(1)}$, we estimate $P(F^\star)$ for each
$F^\star\in\F_G$ by the empirical estimator $\widehat P_{\diag}$.
Since $\F_G$ is finite, standard uniform convergence bounds for finite classes
(e.g., Hoeffding's inequality combined with a union bound) imply that, for a
suitable choice of $m_1$, with probability at least $1-\delta/2$,
\(
\sup_{F^\star\in\F_G}
\bigl|\widehat P_{\diag}(F^\star)-P(F^\star)\bigr|
\le \eps/2.
\)
We define the final estimator by extension through representatives,

\vspace{-0.5em}
\[
\widehat P(F):=\widehat P_{\diag}(F^\star),\qquad F\in\F .
\]

\vspace{-0.3em}
Combining the rounding error from Phase~1 with the estimation error above yields,
on the intersection of the two high-probability events,
\(
|P(F)-\widehat P(F)|\le \eps,
\)
for all $F\in\F$.
We summarize the resulting algorithm below.

\vspace{-0.5em}
\begin{center}
\fbox{\begin{minipage}{0.94\linewidth}
\small
\textbf{Product-Grid Estimation Algorithm.}

\textbf{Input:} accuracy $\eps\in(0,1)$, confidence $\delta\in(0,1)$, sample
$S=(Z_1,\ldots,Z_m)\in\X^m$.

\textbf{Phase I: Discretization via representatives.}

\begin{list}{}{\leftmargin=1.6em \labelsep=0pt \itemindent=-1.6em \itemsep=1.2ex}
\item[\textbf{1. Construct a product grid. }]
Split the sample into two disjoint parts $S=(S^{(0)},S^{(1)})$.
For each coordinate $i\in[d]$, let $G_i:=\pi_i(S^{(0)})$ denote the
projection of $S^{(0)}$ onto the
$i$-th coordinate, and define the grid
\[
G := G_1 \times \cdots \times G_d \subseteq \X .
\]
\end{list}

\textbf{Phase II: Discretization, estimation, and extension.}

\begin{list}{}{\leftmargin=1.6em \labelsep=0pt \itemindent=-1.6em \itemsep=1.2ex}
\item[\textbf{2. Discretization via representatives. }]
Partition $\F$ into equivalence classes according to their traces on $G$,
\[
F \sim F' \iff F\cap G = F'\cap G ,
\]
and let $\F_G$ be a set of representatives.

\item[\textbf{3. Estimation on the representative family. }]
For each $F^\star\in\F_G$, estimate $P(F^\star)$ by the empirical mean over
$S^{(1)}$:

\vspace{-0.3em}
\[
\widehat P(F^\star)
=
\frac{1}{|S^{(1)}|}
\sum_{z\in S^{(1)}} \mathbf{1}\{z\in F^\star\}.
\]
\end{list}

{\textbf{Output:} for each $F\in\F$, let $F^\star\in\F_G$ satisfy $F\cap G = F^\star\cap G$, and return
\(
\widehat P(F):=\widehat P(F^\star).\)}
\end{minipage}}
\end{center}

\paragraph{Sample complexity bound. }
The sample complexity bound decomposes according to the two phases of the
procedure.

\noindent \emph{Discretization via representatives. }
The first subsample of size $m_0$ is used to construct a product grid $G$ with
the property that agreement on $G$ implies closeness in probability.
Lemma~\ref{lem:hitting-FDeltaF-beta} shows that, under the uniform
box-continuity assumption, this holds with probability at least $1-\delta/2$
provided that
\[
m_0 \;\gtrsim\;
\frac{d^2}{\beta(\eps/2)^2}
\Bigl(g+\log\tfrac{1}{\delta}\Bigr).
\]

\noindent \emph{Estimation on the representative family. }
The discretization step yields a finite representative family~$\F_G$.
Once $\F_G$ is fixed, estimating the probabilities of all
$F^\star\in\F_G$ reduces to a standard finite-class uniform estimation problem.
Standard uniform convergence bounds for finite classes imply that
$m_1 = O\!\bigl((\log|\F_G|+\log(1/\delta))/\eps^2\bigr)$ samples suffice.
Accordingly, the second-stage sample complexity depends only on
$\log|\F_G|$, the number of distinct traces induced by $\F$ on the grid $G$.

Thus, controlling the overall sample complexity reduces to upper bounding
$|\F_G|$, the number of distinct traces induced by $\F$ on the grid $G$.
{To achieve this, we prove a variant of the Sauer–Shelah–Perles lemma
\citep{sauer1972,shelah1972} in terms of the linear VC dimension, which we discuss next. }


\paragraph{Grid SSP lemma.}
Given a grid $N=A_1\times A_2\ldots\times A_d$, where each $A_i\subseteq\W_i$ we seek to upper bound the number of distinct traces
$\{F\cap N : F\in\F\}$ in terms of the linear VC dimension.

\begin{lemma}[Grid Sauer-Shelah-Perles bound, informal]
\label{lem:grid-ssp-informal}
If $\LVC(\F)=g<\infty$, then for any grid
$N=A_1\times\cdots\times A_d$ one has
\[
\log_2 \bigl|\{F\cap N : F\in\F\}\bigr|
\;\le\;
{\,O\!\left(g\,n^{d-1}{\log (n/g)}\right)},
\]
where $n := \max_i |A_i|$. {Moreover, for every $d,g$ and $n$, there exist set families $\F$ with
$\LVC(\F)=g$ for which

\vspace{-0.3em}
\[
\log_2 \bigl|\{F\cap N : F\in\F\}\bigr|
\;\ge\;
{\,\Omega\!\left(g\,n^{d-1}\log (n/g)\right)} .
\]}
\end{lemma}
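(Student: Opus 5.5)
Both bounds come from slicing the grid into axis-parallel lines. For the upper bound, fix coordinate $d$ and write $N = N' \times A_d$ with $N' = A_1\times\cdots\times A_{d-1}$, so $|N'|\le n^{d-1}$. Every $x'\in N'$ gives a colinear set $L_{x'} := \{x'\}\times A_d$ with $|L_{x'}| = |A_d| \le n$. The trace $F\cap N$ is determined by the tuple $(F\cap L_{x'})_{x'\in N'}$. Since $\LVC(\F)=g$, no $(g+1)$-subset of a colinear set is shattered, so the classical Sauer--Shelah--Perles lemma applied to each line gives
\[
\bigl|\{F\cap L_{x'} : F\in\F\}\bigr| \le \binomle{|A_d|}{g} \le \Bigl(\frac{en}{g}\Bigr)^{g}.
\]
Taking the product over the at most $n^{d-1}$ lines gives
\[
\log_2\bigl|\{F\cap N\}\bigr| \le n^{d-1}\, g\log_2(en/g) = O\bigl(g\,n^{d-1}\log(n/g)\bigr)
\]
when $n\ge 2g$, say. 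When $n < 2g$ the trivial bound $n^d \le 2g\,n^{d-1}$ suffices, after reading $\log(n/g)$ as $\max\{1,\log(n/g)\}$. This part is routine. Only lines in one direction are needed, and the fact that all $d$ directions are constrained does not even enter.

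For the lower bound, I would reverse this argument with a product construction. Take $A_i = [n]$ and $\X = [n]^d$. For each line $L_{x'}$ in direction $d$, choose the family $\mathcal{C}=\binom{[n]}{\le g}$ of subsets of size at most $g$; this family has VC dimension exactly $g$ and size $\binom{n}{\le g} \ge (n/g)^g$. Let $\F$ consist of all sets $F$ such that $F\cap L_{x'}\in\mathcal{C}$ for every $x'\in N'$, with the choices on different lines independent. Then $|\F| = |\mathcal{C}|^{n^{d-1}}$, so $\log_2|\F| \ge n^{d-1} g\log_2(n/g)$, and all traces on $N=\X$ are distinct.

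The step that needs care is checking that $\LVC(\F)=g$ also along lines in the other directions $i<d$. As defined, $\F$ restricted to such a line can be anything, because each point of that line lies on a different direction-$d$ line, and those choices are independent. So I would additionally require every $F$ to meet each line in each of the $d$ directions in at most $g$ points. Any colinear set of size $g+1$ is then not shattered, since $F$ cannot contain all of it, while a $g$-subset of a single line is still shattered; hence $\LVC(\F)=g$.

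The main obstacle is to show that $\log_2|\F|$ is still $\Omega(g\,n^{d-1}\log(n/g))$ under these extra constraints. Such an $F$ is a $[n]^d$ array with $0/1$ entries whose line sums are at most $g$; when all line sums equal $g$, this is the $d$-dimensional analogue of a $g$-regular bipartite graph or Latin-type structure. I would count them by splitting the last coordinate into $g$ blocks of size $n/g$ and, in each block, placing one point per direction-$d$ line via a $(d-1)$-dimensional Latin-hypercube-type choice. Each of the $g$ blocks should contribute about $n^{d-1}\log(n/g)$ bits, for example by taking products of permutation-type structures within the block; the existence of $\approx ((n/g)!)^{n^{d-2}\cdots}$ Latin hypercubes of order $n/g$ could be invoked to make this count. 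If that count proves delicate, a simpler fallback is to take $\F$ to be unions of $g$ sets, one from each block, with each such set being a "permutation graph" style set meeting every line at most once. That gives at most $g$ points per line and $\Omega(g\,n^{d-1}\log(n/g))$ bits from Latin-hypercube counting, which requires at least $(n!)^{\Omega(n^{d-2})}$ of them.
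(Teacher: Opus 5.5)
Your upper bound is correct and is the paper's argument (Lemma~\ref{lem:grid-ssp} and Corollary~\ref{cor:grid-ssp-rate}). Your lower-bound family, all $F\subseteq[n]^d$ meeting every axis-parallel line in at most $g$ points, is also a good choice. It has $\LVC=g$, and it contains the paper's family $\G$ of unions of at most $g$ $(d-1)$-dimensional permutations.

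\textbf{The gap is the count.} You flag it as the main obstacle but do not resolve it.

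Your fallback fails. Take a set confined to a block $[n]^{d-1}\times B_k$ with $|B_k|=n/g$ that meets every line at most once. It has at most $n^{d-2}$ points in each slice $x_d=t$, hence at most $n^{d-1}/g$ points in total. So the union of one such set per block has at most $n^{d-1}$ points. There are at most $\binomle{n^d}{n^{d-1}}\le (en)^{n^{d-1}}$ such sets, i.e.\ only $O(n^{d-1}\log n)$ bits, and the factor $g$ is lost (e.g.\ for $g=\sqrt n$).

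Your count $(n!)^{\Omega(n^{d-2})}$ suggests you may instead intend the $g$ pieces to be unrestricted $(d-1)$-dimensional permutations of $[n]^d$. In that case the union no longer determines the tuple, and you need the paper's overcounting step. A union $U$ has at most $g$ points on each of the $n^{d-1}$ lines in direction~$1$, so at most $g^{n^{d-1}}$ permutations lie inside $U$. This gives $|\G|\ge |\F|^g/g^{gn^{d-1}}$, and with Keevash's count it yields $g\,n^{d-1}\bigl(\log_2(n/(ge^{d-1}))+o(1)\bigr)$ bits.

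Your main plan is underspecified at exactly the point that matters. Inside a block of height $n/g$, consider a line in direction $i<d$: it lies at a single height $t$ and passes over $n$ base points. Each base point is assigned one of only $n/g$ heights, so the cap of $g$ points per line forces every height to occur exactly $g$ times on every base line. This is a frequency-hypercube condition, not a Latin one, and you never verify it.

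It can be met as follows. Tile $[n]^{d-1}$ into $g^{d-1}$ boxes of side $n/g$ and put an independent Latin hypercube of order $n/g$ in each. Equivalently, tile $[n]^d$ into $g^d$ subcubes, each carrying an independent $(d-1)$-dimensional permutation of order $n/g$. Every line then crosses exactly $g$ subcubes and picks up exactly one point in each, and distinct choices give distinct sets. Keevash's count at order $n/g$ then gives $g^d(n/g)^{d-1}\bigl(\log(n/(ge^{d-1}))+o(1)\bigr)=g\,n^{d-1}\bigl(\log(n/(ge^{d-1}))+o(1)\bigr)$ bits, for $n/g\to\infty$.

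That would be a valid injective alternative to the paper's overcounting. As written, though, your ``$((n/g)!)^{n^{d-2}\cdots}$'' is not a precise count, and the lower bound is not established.
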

In particular, the dependence on $n$ in Lemma~\ref{lem:grid-ssp-informal} is optimal up to constant factors.
As an example, consider the case $g=1$.
Let $N=[n]^d$, and let $\F$ consist of all subsets $F\subseteq N$ that contain at
most one point on every axis-parallel line.
Equivalently, for each choice of a coordinate $i\in[d]$ and every fixing of the
remaining $d-1$ coordinates, $F$ contains at most one point on the corresponding
line.
Such sets are commonly referred to as $(d\!-\!1)$-dimensional permutations.
In the special case $d=2$, this notion reduces to permutation matrices: subsets
of $[n]\times[n]$ that contain one point in each row and each column.

This class satisfies $\LVC(\F)=1$.
By a result of \citet{keevash2018} (Theorem~1.8), the number of
$(d\!-\!1)$-dimensional permutations grows as
\[
\bigl|\F\bigr|
\;=\;
\Bigl(\tfrac{n}{e^{\,d-1}}+o(n)\Bigr)^{n^{d-1}} ,
\]
which matches the upper bound in Lemma~\ref{lem:grid-ssp-informal} up to constant.
Lower bounds for general $g$ are obtained by taking unions of $g$ such
$(d\!-\!1)$-dimensional permutations.
These constructions yield classes with $\LVC(\F)=g$ whose number of distinct
traces on $N$ grows as
$2^{\,\Omega(g\,n^{d-1}\log (n/g))}$, as stated above.



\label{sec:upper-bound}
\bibliography{ref}
\appendix
\section{Proof of Theorem~\ref{thm:characterization}}\label{sec:proof-main}

\paragraph{Overview.}
This section collects the proofs underlying
Theorem~\ref{thm:characterization}.
The implication $(1)\Rightarrow(2)$ is established by
Proposition~\ref{prop:upper}, which shows that finite linear VC dimension
suffices for uniform estimability under any uniformly box--continuous family
of distributions, together with a quantitative sample--size bound proved in
Lemma~\ref{lem:m-bound-prop1}.
The converse implication $(2)\Rightarrow(1)$ follows from
Proposition~\ref{prop:lower}, which constructs a family of product
distributions under which uniform estimation is impossible when
$\LVC(\F)=\infty$.

\begin{restatedtheorem}{thm:characterization}
Let $\X = \W_1 \times \cdots \times \W_d$ be a product measurable space, and let
$\F \subseteq 2^{\X}$ be a family of events.
Then, the following are equivalent:
\begin{enumerate}
\item $\LVC(\F) < \infty$.
\item For every uniformly box--continuous family of distributions
$\Pc \subseteq \Delta(\X)$, the pair $(\F,\Pc)$ is uniformly estimable.
\end{enumerate}
\end{restatedtheorem}

\begin{proof}
We prove the two implications separately.

\medskip
\noindent\textbf{(1)$\Rightarrow$(2).}
Assume that $\LVC(\F)=g<\infty$.
Let $\Pc\subseteq\Delta(\X)$ be an arbitrary uniformly box--continuous family of
distributions.
By definition, there exists a nondecreasing modulus
$\beta:(0,1]\to(0,1]$ such that every $P\in\Pc$ is box--continuous with modulus
$\beta$, i.e., $\Pc\subseteq\Pc_\beta$.

Fix $\eps,\delta\in(0,1)$.
Applying Proposition~\ref{prop:upper} with accuracy $\eps$ and confidence
$\delta$ to the family
$\Pc_\beta$, we obtain a sample size
$m_0 = m(\eps,\delta,g,d,\beta(\eps/2))$
and an estimator $\widehat P:\X^\star\to[0,1]^{\F}$ such that for every
$m\ge m_0$ and every $P\in\Pc_\beta$, with probability at least $1-\delta$
over the sample $S\sim P^{m}$,
\[
\sup_{F\in\F}\bigl|P(F)-\widehat P(S)(F)\bigr|\le \eps.
\]
Since $\Pc\subseteq\Pc_\beta$, the same estimator and sample size satisfy the
same guarantee for every $P\in\Pc$.
As $\eps,\delta\in(0,1)$ were arbitrary, this shows that $(\F,\Pc)$ is uniformly
estimable.

\medskip
\noindent\textbf{(2)$\Rightarrow$(1).}
We prove the contrapositive.
Assume that $\LVC(\F)=\infty$.
Let $\Pc_0$ denote the family of all product distributions on $\X$; this family
is uniformly box--continuous.
By Proposition~\ref{prop:lower}
(Non--estimability when $\LVC(\F)=\infty$),
the pair $(\F,\Pc_0)$ is not uniformly estimable.
Hence statement~(2) fails, since it requires uniform estimability for
\emph{every} uniformly box--continuous family.
This proves the contrapositive, and therefore $(2)\Rightarrow(1)$.

\medskip
Combining the two implications completes the proof.
\end{proof}

\begin{proposition}[Uniform estimability under bounded linear VC dimension]
\label{prop:upper}
Let $\X=\W_1\times\cdots\times\W_d$ be a product measurable space, and let
$\F\subseteq 2^{\X}$ be a family of measurable sets with
$\LVC(\F)=g<\infty$.
Let $\Pc\subseteq\Delta(\X)$ be a family of probability distributions that is
uniformly box--continuous  with modulus
$\beta:(0,1]\to(0,1]$.

Then for every $\eps,\delta\in(0,1)$ there exists a sample size
$m_0 \;=\; m(\eps,\delta,g,d,\beta(\eps/2))$
and an estimator $\widehat P:\X^\star\to[0,1]^{\F}$\footnote{Here
$\X^\star:=\bigcup_{n\in\mathbb N}\X^n$ denotes the set of all finite samples.}
such that for every $m\ge m_0$ and every $P\in\Pc$, with probability at least
$1-\delta$ over the sample $S\sim P^{m}$,
\[
\sup_{F\in\F}\bigl|P(F)-\widehat P(S)(F)\bigr|\;\le\;\eps.
\]
In particular, the pair $(\F,\Pc)$ is uniformly estimable.
\end{proposition}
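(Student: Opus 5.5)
The estimator is the Product-Grid Estimation Algorithm of Section~\ref{sec:technicaloverview}, and the proof follows its two phases. I split the sample $S\sim P^m$ into $S^{(0)}$ of size $m_0$ and $S^{(1)}$ of size $m_1$. I set $G=G_1\times\cdots\times G_d$ with $G_i=\pi_i(S^{(0)})$, so $|G_i|\le m_0$. For every $F\in\F$ I fix a representative $F^\star\in\F_G$ with the same trace on $G$ and output $\widehat P(F)=\frac{1}{m_1}\sum_{z\in S^{(1)}}\mathbf 1\{z\in F^\star\}$. A point of care: the representatives must be chosen as a function of $G$ (that is, of $S^{(0)}$) only. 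Then, conditional on $S^{(0)}$, the class $\F_G$ is a fixed finite family, independent of $S^{(1)}$. Any extra samples beyond $m_0+m_1$ can be assigned to $S^{(1)}$, which only helps, so the guarantee holds for all $m\ge m_0+m_1$.

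\emph{Step 1 (hitting property).} I would show that, with probability at least $1-\delta/2$ over $S^{(0)}$, every pair $F,F'\in\F$ with $P(F\triangle F')>\eps/2$ satisfies $(F\triangle F')\cap G\ne\emptyset$. Box--continuity gives $P_\square(F\triangle F')\ge\beta(\eps/2)$, so it suffices that $G$ hits every symmetric difference of $P_\square$-mass at least $\beta(\eps/2)$. This is exactly what Lemmas~\ref{lem:hitting-FDeltaF-LVC} and~\ref{lem:hitting-FDeltaF-beta} provide, with $m_0\gtrsim d^2(g+\log(1/\delta))/\beta(\eps/2)^2$. The underlying idea is that each $G_i$ is an i.i.d.\ sample from the marginal $P_i$, even though the coordinates are dependent across sample points. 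One then peels off coordinates: for a set $E=F\triangle F'$ of large $P_\square$-mass, fix the first $d-1$ coordinates on a set of positive mass. The fiber along the last coordinate is then a set in the trace of $\F\triangle\F$ on an axis-parallel line. That trace class has VC dimension $O(g)$, so an $\eps$-net argument applies to the marginal sample $G_d$, and induction over the coordinates finishes the argument. I take these lemmas as given. If I had to prove them myself, this measurability and induction step would be the main obstacle, because the $G_i$ are not independent of each other.

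\emph{Step 2 (rounding).} On the event of Step 1, $F\cap G=F^\star\cap G$ implies $(F\triangle F^\star)\cap G=\emptyset$. Hence $P(F\triangle F^\star)\le\eps/2$, and therefore $|P(F)-P(F^\star)|\le\eps/2$ for every $F\in\F$.

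\emph{Step 3 (finite-class estimation).} Conditional on $S^{(0)}$, Hoeffding's inequality with a union bound over $\F_G$ gives $\sup_{F^\star\in\F_G}|\widehat P(F^\star)-P(F^\star)|\le\eps/2$ with probability at least $1-\delta/2$, provided $m_1\ge 2(\log|\F_G|+\log(4/\delta))/\eps^2$. To bound $|\F_G|$ I invoke the Grid SSP lemma (Lemma~\ref{lem:grid-ssp-informal}) with $n\le m_0$, which gives $\log|\F_G|\le O(g\,m_0^{d-1}\log(m_0/g))$. This is a deterministic bound depending only on $\eps,\delta,g,d,\beta(\eps/2)$, so $m_1$, and hence the total sample size, is a function $m(\eps,\delta,g,d,\beta(\eps/2))$ as required.

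A union bound over the two failure events, together with the triangle inequality $|P(F)-\widehat P(F)|\le|P(F)-P(F^\star)|+|P(F^\star)-\widehat P(F^\star)|\le\eps$, completes the proof. Because all the constants depend on $P$ only through $\beta$, the bound holds uniformly over $P\in\Pc$.
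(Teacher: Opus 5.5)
Your proposal is correct and follows the paper's proof essentially step for step. It uses the same two-phase product-grid estimator, Lemma~\ref{lem:hitting-FDeltaF-beta} at $(\eps/2,\delta/2)$ for the rounding step, Hoeffding plus a union bound over $\F_G$ conditional on $S^{(0)}$, and the grid Sauer--Shelah--Perles bound to make $m_1$ a deterministic function of $\eps,\delta,g,d,\beta(\eps/2)$; the paper places that last step in Lemma~\ref{lem:m-bound-prop1}. Your peeling heuristic for the hitting lemma is not the paper's route (the paper uses that the empirical product estimator $\tilde P$ is supported on $G(S)$, together with Lemma~\ref{lem:prod-vc-d} and McDiarmid), and your worry about dependence among the $G_i$ is justified: the paper's transfer from $P_\square$ to $P$ equates $\Pr_{P^m}(\mathcal E)$ with $\Pr_{P_\square^m}(\mathcal E)$ only because each coordinate sample is i.i.d.\ $P_j$, even though $\mathcal E$ depends on the joint law of the $d$ coordinate samples, so that lemma needs a further argument (e.g.\ decoupling of the underlying $U$-statistic), but since you and the paper both use it as a black box, this does not affect the proof of the proposition itself.
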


\begin{proof}[Proof of Proposition~\ref{prop:upper}]
Fix $\eps,\delta\in(0,1)$.
We construct an estimator $\widehat P:\X^\star\to[0,1]^{\F}$ and show that there
exists a sample size
$m_0 = m(\eps,\delta,g,d,\beta(\eps/2))$
such that for every $m\ge m_0$ and every $P\in\Pc$,
\[
\Pr_{S\sim P^{m}}
\Bigl[
\sup_{F\in\F}\bigl|P(F)-\widehat P(S)(F)\bigr|\le \eps
\Bigr]\ge 1-\delta.
\]

\medskip
\noindent\textbf{Choice of the sample sizes.}
We choose integers $m_0$ and $m_1$ (specified below) and set
\[
m := m_0+m_1.
\]
Given a sample $S\sim P^{m}$, we write $S=(S^{(0)},S^{(1)})$, where
\[
S^{(0)}=(X^{(1)},\ldots,X^{(m_0)})\sim P^{m_0},
\qquad
S^{(1)}=(Y^{(1)},\ldots,Y^{(m_1)})\sim P^{m_1}
\]
are disjoint subsamples.

Let $G := G(S^{(0)})$ be the sample grid induced by $S^{(0)}$ (Definition~\ref{def:sample-grid}). Choose $m_0$ large enough so that Lemma~\ref{lem:hitting-FDeltaF-beta} holds with
parameters $(\eps/2,\delta/2)$.
On the corresponding event, which has probability at least $1-\delta/2$, the
following holds simultaneously for all $F,F'\in\F$:
\[
F\cap G = F'\cap G
\quad\Longrightarrow\quad
|P(F)-P(F')|\le \eps/2.
\]

Define an equivalence relation on $\F$ by
$F\sim_G F'$ if and only if $F\cap G=F'\cap G$, and let
$\F_G\subseteq\F$ be a set of representatives containing exactly one element from
each equivalence class.
For each $F\in\F$, denote by $F^\star\in\F_G$ its representative.
Then, on the same event,
\begin{equation}\label{eq:rep-approx-prop}
\forall F\in\F,\qquad |P(F)-P(F^\star)|\le \eps/2.
\end{equation}

Using the second subsample $S^{(1)}$, define the empirical estimator
\[
\widehat P_{\diag}^{(1)}(A)
\;:=\;
\frac{1}{m_1}\sum_{i=1}^{m_1}\mathbf 1\{Y^{(i)}\in A\},
\qquad A\subseteq\X.
\]
We define the final estimator by
\[
\widehat P(S)(F):=\widehat P_{\diag}^{(1)}(F^\star),\qquad F\in\F.
\]

Conditioning on $S^{(0)}$, the variables
$\mathbf 1\{Y^{(i)}\in F^\star\}$ are i.i.d., bounded in $[0,1]$,
with mean $P(F^\star)$ and the family $\F_G$ is finite and fixed.
For any fixed $F^\star\in\F_G$, Hoeffding's inequality gives
\[
\Pr\Bigl(
\bigl|\widehat P_{\diag}^{(1)}(F^\star)-P(F^\star)\bigr|>\eps/2
\ \Big|\ S^{(0)}
\Bigr)
\;\le\;
2\exp\!\left(-\frac{m_1\eps^2}{2}\right).
\]
Applying a union bound over all $F^\star\in\F_G$ yields
\[
\Pr\Bigl(
\sup_{F^\star\in\F_G}
\bigl|\widehat P_{\diag}^{(1)}(F^\star)-P(F^\star)\bigr|>\eps/2
\ \Big|\ S^{(0)}
\Bigr)
\;\le\;
2|\F_G|\exp\!\left(-\frac{m_1\eps^2}{2}\right).
\]
Thus, if
\[
m_1
\;\ge\;
\frac{2}{\eps^2}
\log\!\left(\frac{4|\F_G|}{\delta}\right),
\]
then
\begin{equation}\label{eq:finite-est-prop}
\Pr\Bigl(
\sup_{F^\star\in\F_G}
\bigl|\widehat P_{\diag}^{(1)}(F^\star)-P(F^\star)\bigr|
\le \eps/2
\ \Big|\ S^{(0)}
\Bigr)\ge 1-\delta/2.
\end{equation}

On the intersection of the events in
\eqref{eq:rep-approx-prop} and~\eqref{eq:finite-est-prop},
for every $F\in\F$,
\[
\bigl|P(F)-\widehat P(S)(F)\bigr|
\le |P(F)-P(F^\star)|
+|P(F^\star)-\widehat P_{\diag}^{(1)}(F^\star)|
\le \eps.
\]
By a union bound, this event occurs with probability at least $1-\delta.$
This completes the proof.
\end{proof}
\begin{lemma}[Sample size bound for Proposition~\ref{prop:upper}]
\label{lem:m-bound-prop1}
Under the assumptions of Proposition~\ref{prop:upper}, let $\beta$ be the common
modulus of uniform box--continuity of $\Pc$, and write
$M_0:=C_0\,\frac{d^2}{\beta(\eps/2)^2}\Bigl(g+\log\tfrac{1}{\delta}\Bigr),$
where $C_0>0$ is the universal constant from
Lemma~\ref{lem:hitting-FDeltaF-beta}.
Then there exists a (possibly different) universal constant $C>0$ such that the
estimator constructed in the proof of Proposition~\ref{prop:upper} succeeds
whenever
\[
m
\;\ge\;
C\left[
M_0
\;+\;
\frac{1}{\eps^2}
\left(
g\,M_0^{d-1}\log\!\Bigl(\tfrac{eM_0}{g}\Bigr)
+
\log\tfrac{1}{\delta}
\right)
\right].
\]

In particular,
\[
m
\;=\;
\tilde O\!\left(
\frac{{C_0}^d \cdot d^{2(d-1)}}{\eps^2\,\beta(\eps/2)^{2(d-1)}}
\cdot 
g\,
\Bigl(g+\log\tfrac{1}{\delta}\Bigr)^{d-1}
\right),
\]
where $\tilde O(\cdot)$ hides polylogarithmic factors.
\end{lemma}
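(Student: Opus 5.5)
The plan is to add up the two sample sizes used in the proof of Proposition~\ref{prop:upper} and control $|\F_G|$ with the grid Sauer--Shelah--Perles bound (Lemma~\ref{lem:grid-ssp-informal}, in its formal version).

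\emph{First subsample.} By Lemma~\ref{lem:hitting-FDeltaF-beta} applied with parameters $(\eps/2,\delta/2)$, a first subsample of size $m_0=M_0$ gives the hitting property with probability at least $1-\delta/2$. Replacing $\delta$ by $\delta/2$ only changes constants, which we absorb into $C$.

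\emph{Bounding $|\F_G|$.} Since $G_i=\pi_i(S^{(0)})$, each coordinate set satisfies $|G_i|\le m_0=M_0$. So $G$ is a grid with $n:=\max_i|G_i|\le M_0$. The number of equivalence classes $|\F_G|$ equals the number of traces $\{F\cap G: F\in\F\}$, and the grid SSP bound gives
\[
\log|\F_G| \le C'\, g\, n^{d-1}\log(en/g) \le C'\, g\, M_0^{d-1}\log(eM_0/g).
\]
For the last inequality I use that $n\mapsto n^{d-1}\log(en/g)$ is nondecreasing for $n\ge g$. If $n<g$, the trivial bound $\log|\F_G|\le n^d\le g\,n^{d-1}$ applies instead.

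\emph{Second subsample.} The proof of Proposition~\ref{prop:upper} requires $m_1\ge \frac{2}{\eps^2}\log(4|\F_G|/\delta)$. By the bound above, it suffices to take
\[
m_1 = \frac{C''}{\eps^2}\Bigl(g M_0^{d-1}\log(eM_0/g)+\log\tfrac1\delta\Bigr).
\]
This choice does not depend on $S^{(0)}$, so it is a legitimate fixed sample size. Setting $m=m_0+m_1$ gives the first displayed bound. For larger $m$, assign the extra samples to $S^{(1)}$; Hoeffding only improves, so the guarantee still holds.

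\emph{Simplified form.} Substitute $M_0$. The term $M_0^{d-1}$ equals $C_0^{d-1}d^{2(d-1)}\beta(\eps/2)^{-2(d-1)}(g+\log\frac1\delta)^{d-1}$. Multiplying by $g/\eps^2$, and treating $\log(eM_0/g)$ as a polylog factor, gives the stated $\tilde O$ expression. The remaining terms $M_0$ and $\eps^{-2}\log\frac1\delta$ are dominated by this main term when $d\ge2$, using $\beta\le1$ and $\eps<1$; when $d=1$ they match it up to logarithms.

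\emph{Main obstacle.} The only nontrivial step is the grid SSP lemma with $n\le M_0$, and that is assumed from earlier in the paper. What needs care is the bookkeeping: keeping $m_1$ deterministic, and handling the monotonicity and the $n<g$ regime of the logarithmic factor.
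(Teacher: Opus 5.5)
Your proposal is correct and follows the paper's proof step for step: the hitting lemma with $m_0=M_0$, then the grid Sauer--Shelah--Perles bound with $n\le M_0$ to bound $\log|\F_G|$ deterministically, then Hoeffding plus a union bound to fix $m_1$. Your extra bookkeeping (keeping $m_1$ independent of $S^{(0)}$, handling $n<g$, and sending surplus samples to $S^{(1)}$) makes explicit what the paper leaves implicit. Two small corrections: the cost of using $\delta/2$ in the hitting lemma must be absorbed into $C_0$ (legitimate since $g\ge 1$ gives $g+\log(2/\delta)\le 2(g+\log\tfrac1\delta)$), not into the outer $C$, because a constant factor in $m_0$ reappears as its $(d-1)$-st power in $m_1$; and your $d=1$ aside holds only when $\beta(\eps/2)$ is at least of order $\eps$ up to logarithmic factors.
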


\begin{proof}
We keep the notation from the proof of Proposition~\ref{prop:upper}.
Write $m=m_0+m_1$ for the sample split, and let
$G=G(S^{(0)})=\prod_{i=1}^d A_i$ be the empirical grid formed from $S^{(0)}$.

\medskip
\noindent\textbf{Step 1: choice of $m_0$.}
By Lemma~\ref{lem:hitting-FDeltaF-beta} with parameters $(\eps/2,\delta/2)$, it
suffices to take
\[
m_0 \;\ge\; M_0.
\]

\medskip
\noindent\textbf{Step 2: bound on $|\F_G|$.}
Let $G=\prod_{i=1}^d A_i$ be the empirical grid induced by $S^{(0)}$.
Then $|A_i|\le m_0$ for all $i$, and hence $n:=\max_i |A_i|\le m_0$.
Assuming $m_0\ge g$, Corollary~\ref{cor:grid-ssp-rate} gives
\[
\log|\F_G|
=
\log\bigl|\{F\cap G:F\in\F\}\bigr|
\;\le\;
g\,n^{d-1}\log_2\!\Big(\tfrac{en}{g}\Big)
\;\le\;
g\,m_0^{d-1}\log_2\!\Big(\tfrac{e m_0}{g}\Big),
\]
where we absorbed constant factors.

\medskip
\noindent\textbf{Step 3: choice of $m_1$.}
As shown in Step~(ii) of the proof of Proposition~\ref{prop:upper}, conditioning
on $S^{(0)}$ it suffices to choose
\[
m_1
\;\ge\;
\frac{2}{\eps^2}
\left(\log\!\bigl(4|\F_G|\bigr)+\log\tfrac{1}{\delta}\right).
\]
Substituting the bound on $\log|\F_G|$ yields that it suffices to take
\[
m_1
\;\ge\;
C\cdot \frac{1}{\eps^2}\left(
g\,m_0^{d-1}\log\!\Bigl(\tfrac{e m_0}{g}\Bigr)
+\log\tfrac{1}{\delta}\right)
\]
for a universal constant $C>0$.

Taking $m_0=M_0$ and $m=m_0+m_1$ gives the stated bound.
The $\tilde O(\cdot)$ form follows by substituting
$M_0=C_0 d^2\beta(\eps/2)^{-2}\bigl(g+\log(1/\delta)\bigr)$ and absorbing logarithmic
factors.
\end{proof}
\begin{proposition}[Lower bound via infinite linear VC dimension]
\label{prop:lower}
Let $\X=\W_1\times\cdots\times\W_d$ and let $\F\subseteq 2^{\X}$ be a family of
measurable sets with $\LVC(\F)=\infty$.
Let $\Pc_{\mathrm{prod}}$ be the family of all product distributions on $\X$.
Then $(\F,\Pc_{\mathrm{prod}})$ is not uniformly estimable.

More precisely, for every sample--size function $m=m(\eps,\delta)$ there exist
parameters $\eps,\delta\in(0,1)$ and a distribution $P\in\Pc_{\mathrm{prod}}$
such that for any estimator $\widehat P$ based on $m(\eps,\delta)$ i.i.d.\ samples,
\[
\Pr_{S\sim P^{\otimes m(\eps,\delta)}}
\Bigl[
\sup_{F\in\F}\bigl|P(F)-\widehat P(F)\bigr|>\eps
\Bigr]
\;\ge\;\delta.
\]
\end{proposition}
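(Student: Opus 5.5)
The plan is to reduce to the classical no-free-lunch lower bound for VC classes by placing product distributions on a single axis-parallel line.

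Since $\LVC(\F)=\infty$, for every $k$ there is a colinear set $S_k=\{z_1,\ldots,z_k\}$ shattered by $\F$. By definition there is an index $i\in[d]$ and fixed elements $w_j\in\W_j$ for $j\neq i$ such that each $z_\ell$ agrees with $w_j$ in every coordinate $j\neq i$. The $i$-th coordinates $a_1,\ldots,a_k\in\W_i$ are distinct.

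For any distribution $\mu$ on $\{a_1,\ldots,a_k\}$, consider
\[
P_\mu:=\delta_{w_1}\otimes\cdots\otimes\mu\otimes\cdots\otimes\delta_{w_d}\qquad(\mu \text{ in coordinate } i).
\]
This is a product distribution, so $P_\mu\in\Pc_{\mathrm{prod}}$, and it is supported on $S_k$. (Singletons are assumed measurable, or else one uses atoms of the $\sigma$-algebras that separate the $a_\ell$; I would note this as a mild measurability assumption.) Under $P_\mu$, the value $P_\mu(F)$ depends only on the trace $F\cap S_k$. Because $S_k$ is shattered, every subset $T\subseteq S_k$ is realized as such a trace. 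Hence uniform estimation of $\F$ under $\{P_\mu\}$ requires estimating $\mu(T)$ for all $T\subseteq S_k$ to within $\eps$. In other words, it requires learning $\mu$ in total variation on a $k$-point domain. Equivalently, it is uniform estimation for the full power set of $S_k$, whose VC dimension is $k$.

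Next I would invoke the standard no-free-lunch argument. Fix $\eps=1/8$ and $\delta=1/4$ (any constants work), and let $m=m(\eps,\delta)$ be the purported sample size. Take $k\ge 4m$. Let $\mu_U$ be uniform on a random subset $U\subseteq\{a_1,\ldots,a_k\}$ of size $k/2$. With $m$ samples, at most $m\le k/4$ of the points of $U$ are observed. Any estimator's output for the set $T=U$ (equivalently for its complement) must be about $1$ on $U$. However, the unobserved half of $U$ is indistinguishable from the unobserved points outside $U$.

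Concretely, I would average over $U$. Conditional on the sample, the unseen elements of $U$ form a uniformly random subset of the unseen points. Consider the set $F$ whose trace is the observed points together with a uniformly random set of unseen points of the right size. A standard symmetrization computation then shows that the estimator errs by at least a constant on some such $F$ with constant probability. The cleanest alternative is to cite the no-free-lunch lower bound for classes of VC dimension $k$, which the paper treats as standard: any estimator needs $\Omega(k/\eps^2)$ samples, or at least $\Omega(k)$ samples for constant accuracy. Letting $k\to\infty$ contradicts the finiteness of $m(\eps,\delta)$, so some $P_\mu$ defeats every estimator with $m(\eps,\delta)$ samples.

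The main obstacle is making this no-free-lunch step rigorous for \emph{estimation}, with arbitrary outputs in $[0,1]^\F$, rather than for the empirical mean. I would handle it by a two-point or averaging (Le Cam/Assouad-style) argument. Compare $\mu_U$ with $\mu_{U'}$ where $U,U'$ share all observed points: the sample distributions coincide on the relevant event, yet there is an $F\in\F$ with trace $U\setminus U'$ on which the two true probabilities differ by at least $1/4$. Any single output is therefore $\eps$-wrong for one of the two distributions, with probability bounded below by a constant.
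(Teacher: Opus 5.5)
Your proposal is correct and is essentially the paper's argument: a product measure that is Dirac on every coordinate but one, supported on a large shattered colinear set, reduces the problem to the one-dimensional no-free-lunch bound, which the paper simply cites using a single uniform $Q_j$. Randomizing over the family $\{P_{\mu_U}\}$ is in fact necessary, since no fixed $P$ can defeat every estimator (the constant estimator $\widehat P(F):=P(F)$ has zero error). So state your conclusion as \emph{every estimator fails on some $P_{\mu_U}$}, not ``some $P_\mu$ defeats every estimator''. Also make the comparison precise by drawing $U'$ from the posterior of $U$ given the sample: for a fixed pair $U,U'$, the event that all samples fall in $U\cap U'$ has probability exponentially small in $m$.
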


\begin{proof}
Fix an arbitrary sample--size function $m=m(\eps,\delta)$.
Let $\eps:=\eps_0$ and $\delta:=1/4$, where $\eps_0>0$ is the universal constant from a
standard VC lower bound (See \cite{shalev2014understanding, blumer1989}).

Set $m_0:=m(\eps,\delta)$.
Since $\LVC(\F)=\infty$, there exist a coordinate $j\in[d]$ and a section
$x_{-j}\in \prod_{i\neq j}\W_i$ such that the associated section class
\[
\F|_{x_{-j}}
\;:=\;
\bigl\{\,F_{x_{-j}} : F\in\F\,\bigr\}
\;\subseteq\;
2^{\W_j},
\]
where
\[
F_{x_{-j}}
\;:=\;
\bigl\{\,x_j\in\W_j : (x_{-j},x_j)\in F\,\bigr\},
\]
has VC dimension at least $g$, for some $g>m_0$.
In particular, $\F|_{x_{-j}}$ shatters a set
$\{z_1,\ldots,z_g\}\subseteq\W_j$.

Let $Q_j$ be the uniform distribution on $\{z_1,\ldots,z_g\}$, and define the
product distribution $P$ on $\X$ by
\[
P
=
\delta_{x_1}\otimes\cdots\otimes \delta_{x_{j-1}}
\;\otimes\;
Q_j
\;\otimes\;
\delta_{x_{j+1}}\otimes\cdots\otimes \delta_{x_d}.
\]

Under $P$, sampling $S\sim P^{\otimes m_0}$ is equivalent to sampling
$m_0$ i.i.d.\ points from $Q_j$ on $\W_j$, since all other coordinates are fixed.
Moreover, for every $F\in\F$,
\[
P(F)
=
Q_j\!\left(F_{x_{-j}}\right).
\]
Hence estimating the probabilities $\{P(F):F\in\F\}$ is at least as hard as
estimating the probabilities
$\{Q_j(A):A\in \F|_{x_{-j}}\}$ on the one--dimensional domain $\W_j$.

Since $\VC(\F|_{x_{-j}})\ge g>m_0$, the classical VC lower bound implies that for
any estimator $\widehat P$ based on $m_0$ samples,
\[
\Pr_{S\sim P^{\otimes m_0}}
\Bigl[
\sup_{F\in\F}\bigl|P(F)-\widehat P(F)\bigr|>\eps
\Bigr]
\;\ge\;\delta.
\]
This shows that no sample--size function $m(\eps,\delta)$ can guarantee uniform
estimation over $\Pc_{\mathrm{prod}}$, and therefore $(\F,\Pc_{\mathrm{prod}})$ is
not uniformly estimable.
\end{proof}

\section{Additional Lemmas}

\subsection{Uniform control under product measures}

\subsubsection{The empirical product estimator}
\label{subsec:emp-prod}

Let $\X=\W_1\times\cdots\times\W_d$ be a product measurable space.
For a sample
\[
S=(X^{(1)},\ldots,X^{(m)})\sim P^{\otimes m},
\]
define, for each coordinate $j\in[d]$, the empirical marginal distribution
$\tilde P_j\in\Delta(\W_j)$ by
\[
\tilde P_j(A)
\;:=\;
\frac{1}{m}\sum_{t=1}^m \mathbf 1\{X^{(t)}_j\in A\},
\qquad
A\subseteq \W_j \text{ measurable}.
\]
The \emph{empirical product estimator} associated with $S$ is the product
distribution
\begin{equation}
\label{eq:empirical-product}
\tilde P
\;:=\;
\tilde P_1\otimes\cdots\otimes\tilde P_d
\quad\in\Delta(\X).
\end{equation}

\paragraph{Relation to prior work and statistical structure.}
The empirical product estimator~\eqref{eq:empirical-product} coincides with the
estimator used in the product empirical risk minimization (PERM) framework of
\cite{guo2020}.
Closely related procedures also appear in earlier work of \cite{livni2019},
where empirical products of coordinate-wise marginals are employed to decouple
dependencies across dimensions.

From a statistical perspective, the empirical product estimator $\tilde P$
can be viewed as a \emph{V-statistic} of order $d$, obtained by averaging over
all $d$-tuples formed from the sample with replacement.
While most of the classical literature focuses on \emph{U-statistics}, which
average over tuples without replacement, the same analytical machinery applies
to V-statistics with only minor modifications
(see, e.g., \cite{serfling2009, vaart1998}).

\subsubsection{Expected uniform deviation}

\begin{lemma}[Expected deviation of the empirical product estimator]
\label{lem:prod-vc-d}
Let $\X=\W_1\times\cdots\times\W_d$ be a product measurable space, and let
$\F\subseteq 2^\X$ satisfy $\LVC(\F)=g<\infty$.
Let $P=P_1\otimes\cdots\otimes P_d$ be a product distribution on $\X$, and draw
an i.i.d.\ sample $S\sim P^{m}$.

Let $\tilde P$ denote the empirical product estimator associated with $S$.
Then there exists a universal constant $C>0$ such that
\[
\E_{S\sim P^{m}}
\Bigl[
\sup_{F\in\F}\bigl(\tilde P(F)-P(F)\bigr)
\Bigr]
\;\le\;
C\,d\,\sqrt{\frac{g}{m}}.
\]
\end{lemma}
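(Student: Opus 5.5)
We plan a telescoping (hybrid) decomposition that replaces the true marginals by empirical ones one coordinate at a time. For $k=0,1,\ldots,d$ define the hybrid product measures
\[
Q_k := \tilde P_1\otimes\cdots\otimes\tilde P_k\otimes P_{k+1}\otimes\cdots\otimes P_d ,
\]
so that $Q_0=P$ and $Q_d=\tilde P$. Then for every $F\in\F$ we have $\tilde P(F)-P(F)=\sum_{k=1}^d \bigl(Q_k(F)-Q_{k-1}(F)\bigr)$. Taking the supremum termwise gives
\[
\E\sup_F(\tilde P(F)-P(F))\le\sum_{k=1}^d \E\sup_F\bigl(Q_k(F)-Q_{k-1}(F)\bigr).
\]
It therefore suffices to bound each summand by $C\sqrt{g/m}$.

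Fix $k$. The measures $Q_k$ and $Q_{k-1}$ differ only in the $k$-th factor, so we can condition on the other coordinates. Write $\mu_{-k}:=\tilde P_1\otimes\cdots\otimes\tilde P_{k-1}\otimes P_{k+1}\otimes\cdots\otimes P_d$. For $y\in\prod_{j\ne k}\W_j$, let $F_y:=\{w\in\W_k:(y,w)\in F\}$ be the section of $F$ along the axis-parallel line through $y$ in direction $k$. By Fubini,
\[
Q_k(F)-Q_{k-1}(F)=\int \bigl(\tilde P_k(F_y)-P_k(F_y)\bigr)\,d\mu_{-k}(y).
\]
Set $\mathcal{L}_k:=\{F_y:F\in\F,\ y\in\prod_{j\ne k}\W_j\}\subseteq 2^{\W_k}$. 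Bounding the integrand pointwise by its supremum gives
\[
\sup_F\bigl(Q_k(F)-Q_{k-1}(F)\bigr)\le\sup_{A\in\mathcal{L}_k}\bigl(\tilde P_k(A)-P_k(A)\bigr).
\]
This bound holds for every realization of $\mu_{-k}$, so the dependence between $\tilde P_k$ and the other empirical marginals does not matter: the bound is deterministic once we pass to the supremum over $\mathcal{L}_k$.

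The remaining step, and the main obstacle, is to control the VC dimension of $\mathcal{L}_k$. The hypothesis $\LVC(\F)=g$ only says that each single-line class $\F|_y$ has VC dimension at most $g$, whereas $\mathcal{L}_k$ is the union of these classes over all $y$. That union can have larger VC dimension, so a direct appeal to VC theory for $\mathcal{L}_k$ may fail. What I would try first is to avoid the union altogether and handle a fixed $y$ inside the integral. Apply the standard VC expectation bound $\E\sup_{A\in\F|_y}(\tilde P_k(A)-P_k(A))\le C\sqrt{g/m}$, which holds because $\tilde P_k$ is the empirical measure of $m$ i.i.d. draws from $P_k$. The difficulty is that $\mu_{-k}$ depends on the sample through $\tilde P_1,\dots,\tilde P_{k-1}$, so we cannot simply swap the supremum and the expectation.

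To fix this I would resample. Build $\tilde P_1,\dots,\tilde P_{k-1}$ from an independent copy of the sample, so that $\mu_{-k}$ is independent of coordinate $k$. The estimator itself uses the same sample, but because $P$ is a product distribution, the coordinates $X^{(t)}_k$ are independent of $(X^{(t)}_j)_{j<k}$ across all $t$. Hence $\tilde P_k$ is independent of $\mu_{-k}$ already, and no resampling is needed. We may then condition on $\mu_{-k}$ and write $\E\sup_F\int\le\E_{\mu_{-k}}\E_{\tilde P_k}\sup_F\int(\cdots)\,d\mu_{-k}$. To bound this I would use symmetrization in coordinate $k$ with Rademacher signs $\sigma_t$. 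The integrand becomes $\frac1m\sum_t\sigma_t\,\mu_{-k}(\{y:(y,X^{(t)}_k)\in F\})$, which is an average over $y$ of the section-restricted processes. Its size is governed by the combined class of $[0,1]$-valued functions $w\mapsto\mu_{-k}(F^w)$. Each of these is a convex combination of classes of VC dimension at most $g$, and I would argue via Dudley chaining that a convex hull with uniform mixing weights does not enlarge the Rademacher complexity beyond $C\sqrt{g/m}$. This step is exactly where I expect the real work to lie. Summing over the $d$ values of $k$ then gives $C\,d\sqrt{g/m}$.
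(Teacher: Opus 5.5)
Your hybrid decomposition, the Fubini representation over sections, and your observation that the $k$-th column $S_k:=(X^{(1)}_k,\dots,X^{(m)}_k)$ is independent of $\mu_{-k}$ all match the paper. The gap is the final step, which you leave open. The obstacle you name there is not the real one.

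Your instinct to ``handle a fixed $y$ inside the integral'' was right, and nothing prevents it. For every fixed measure $\mu_{-k}$ one has, deterministically,
\[
\sup_{F\in\F}\int \bigl(\tilde P_k(F_y)-P_k(F_y)\bigr)\,d\mu_{-k}(y)
\;\le\;
\int \sup_{A\in\F|_y}\bigl(\tilde P_k(A)-P_k(A)\bigr)\,d\mu_{-k}(y),
\]
since a supremum of averages is at most the average of suprema. For each fixed $y$, the right-hand supremum ranges only over the single-line class $\F|_y$, which has VC dimension at most $g$; it does not range over the union $\mathcal L_k$. Independence is needed only afterwards, to exchange the expectation over $S_k$ with the $\mu_{-k}$-integral via Fubini. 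The one-dimensional bound $\E_{S_k}\sup_{A\in\F|_y}\bigl(\tilde P_k(A)-P_k(A)\bigr)\le C\sqrt{g/m}$, uniform in $y$, then integrates to $C\sqrt{g/m}$. This is exactly the paper's Steps 2--4, and it finishes the proof.

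Instead, you keep the supremum outside the integral, symmetrize, and defer to a Dudley-chaining bound for a convex hull that you do not carry out. As framed, that route is not merely unfinished but unlikely to work on its own. The functions $w\mapsto\mu_{-k}(F^w)$ are mixtures of indicators of sets from $\mathcal L_k$. Generic entropy or Rademacher bounds for convex hulls are governed by the base class, here $\mathcal L_k$, whose VC dimension you correctly noted can be infinite while $\LVC(\F)$ is finite.

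What actually controls the symmetrized quantity is again the pointwise decoupling $\E_\sigma\sup_{F\in\F}\int(\cdots)\,d\mu_{-k}\le\int\E_\sigma\sup_{A\in\F|_y}(\cdots)\,d\mu_{-k}(y)$. This uses the fact that the same mixing measure serves every $F$. Once you use it, symmetrization and chaining are superfluous. As written, the proposal does not establish the step that carries the content of the lemma.
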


\begin{proof}
We decompose the product estimator $\tilde P$ into $d$ successive substitutions
of empirical marginals.
\medskip
\noindent\textbf{Step 1: decompose by coordinates.}
Define intermediate product measures
\[
\tilde P^{(0)} := P_1 \otimes \cdots \otimes P_d = P,
\]
and for $j=1,\dots,d$,
\[
\tilde P^{(j)} :=
\tilde P_1 \otimes \cdots \otimes \tilde P_j
\otimes P_{j+1} \otimes \cdots \otimes P_d,
\]
so that $\tilde P^{(d)}=\tilde P$.
Then for any $F\in\F$,
\[
\tilde P(F)-P(F)
= \sum_{j=1}^d \bigl(\tilde P^{(j)}(F)-\tilde P^{(j-1)}(F)\bigr).
\]
By the triangle inequality and linearity of expectation,
\[
\E\Bigl[\sup_{F\in\F}(\tilde P(F)-P(F))\Bigr]
\;\le\;
\sum_{j=1}^d
\E\Bigl[\sup_{F\in\F}
\bigl(\tilde P^{(j)}(F)-\tilde P^{(j-1)}(F)\bigr)\Bigr].
\]
It therefore suffices to bound each summand by $C\sqrt{g/m}$.

\medskip
\noindent\textbf{Step 2: isolate the $j$-th coordinate.}
Fix $j\in[d]$ and condition on all samples except the $j$-th coordinate.
Under this conditioning, define
\[
\tilde P_{-j}
:=
\tilde P_1 \otimes \cdots \otimes \tilde P_{j-1}
\otimes P_{j+1} \otimes \cdots \otimes P_d,
\]
so that
\[
\tilde P^{(j-1)} = \tilde P_{-j} \otimes P_j,
\qquad
\tilde P^{(j)}   = \tilde P_{-j} \otimes \tilde P_j.
\]

For $x=(x_1,\dots,x_d)\in\X$, denote the $j$-th section of $F$ by
\[
F_{x_{-j}}
:=
\{z\in\W_j :
(x_1,\dots,x_{j-1},z,x_{j+1},\dots,x_d)\in F\}.
\]
Then
\[
\tilde P^{(j)}(F)
=
\E_{x_{-j}\sim\tilde P_{-j}}\bigl[\tilde P_j(F_{x_{-j}})\bigr],
\qquad
\tilde P^{(j-1)}(F)
=
\E_{x_{-j}\sim\tilde P_{-j}}\bigl[P_j(F_{x_{-j}})\bigr].
\]
Hence,
\begin{align*}
\sup_{F\in\F}\bigl(\tilde P^{(j)}(F)-\tilde P^{(j-1)}(F)\bigr)
&=
\sup_{F\in\F}
\E_{x_{-j}\sim\tilde P_{-j}}
\bigl[\tilde P_j(F_{x_{-j}})-P_j(F_{x_{-j}})\bigr]\\
&\le
\E_{x_{-j}\sim\tilde P_{-j}}
\sup_{F\in\F}
\bigl(\tilde P_j(F_{x_{-j}})-P_j(F_{x_{-j}})\bigr).
\end{align*}

\medskip
\noindent\textbf{Step 3: VC bound on sections.}
By the definition of linear VC dimension, for every $x_{-j}$ the section class
\[
\F|_{x_{-j}}
:= \{F_{x_{-j}} : F\in\F\} \subseteq 2^{\W_j}
\]
has VC dimension at most $g$.
By the standard VC inequality for empirical measures on $\W_j$,
there exists a universal constant $C>0$ such that
\[
\E_{S_j\sim P_j^m}
\Bigl[
\sup_{F\in\F}
\bigl(\tilde P_j(F_{x_{-j}})-P_j(F_{x_{-j}})\bigr)
\Bigr]
\;\le\;
C\sqrt{\frac{g}{m}},
\]
uniformly over all $x_{-j}$ and all realizations of the other coordinates.

\medskip
\noindent\textbf{Step 4: remove conditioning.}
Since the bound does not depend on $x_{-j}$ or the conditioning,
taking expectations over $x_{-j}$ and all remaining samples yields
\[
\E
\Bigl[
\sup_{F\in\F}
\bigl(\tilde P^{(j)}(F)-\tilde P^{(j-1)}(F)\bigr)
\Bigr]
\;\le\;
C\sqrt{\frac{g}{m}}.
\]

\medskip
\noindent\textbf{Step 5: sum over coordinates.}
Summing over $j=1,\dots,d$ gives
\[
\E_{S\sim P^{m}}
\Bigl[
\sup_{F\in\F} \bigl(\tilde P(F)-P(F)\bigr)
\Bigr]
\;\le\;
C\,d\,\sqrt{\frac{g}{m}},
\]
which completes the proof.
\end{proof}

\begin{remark}[From expectation to high probability for product measures]
\label{rem:product-high-prob}
Lemma~\ref{lem:prod-vc-d} gives an \emph{in-expectation} uniform deviation bound
for the empirical product estimator under a product distribution
$P=P_1\otimes\cdots\otimes P_d$.
Combining it with the McDiarmid-type concentration lemma
(Lemma~\ref{lem:mcdiarmid-prod-sup}) yields a high-probability guarantee and,
consequently, a sharper sample-size requirement in the product case
(corresponding to the box--continuity modulus $\beta(\alpha)=\alpha$).

To simplify the presentation, we assume throughout this remark that the class
$\F$ is closed under complementation.
This assumption is without loss of generality, since replacing $\F$ by
$\F\cup\{\,\X\setminus F:\,F\in\F\}$ increases the linear VC dimension by at most
a constant factor.
Under this assumption,
\[
\sup_{F\in\F}\bigl|P(F)-\tilde P(F)\bigr|
=
\sup_{F\in\F}\bigl(P(F)-\tilde P(F)\bigr),
\]
so absolute values can be omitted.

Indeed, let
\[
\Phi(S):=\sup_{F\in\F}\bigl(P(F)-\tilde P(F)\bigr),
\]
where $\tilde P=\tilde P_1\otimes\cdots\otimes\tilde P_d$ is the empirical
product estimator built from $S\sim P^{\otimes m}$.
By Lemma~\ref{lem:prod-vc-d}, there exists a universal constant $C_0>0$ such that
\[
\E[\Phi(S)] \;\le\; C_0\,d\,\sqrt{\frac{g}{m}}.
\]
Moreover, Lemma~\ref{lem:mcdiarmid-prod-sup} implies that for some universal
$c>0$ and all $t>0$,
\[
\Pr\!\left[\Phi(S)-\E[\Phi(S)]\ge t\right]
\;\le\;
\exp\!\left(-c\,\frac{m t^2}{d^2}\right).
\]

Choosing $m$ so that
\[
C_0\,d\,\sqrt{\frac{g}{m}}\le \frac{\eps}{2},
\qquad
\sqrt{\frac{d^2\log(1/\delta)}{c\,m}}\le \frac{\eps}{2},
\]
yields, with probability at least $1-\delta$,
\[
\sup_{F\in\F}|P(F)-\tilde P(F)| \;\le\; \eps.
\]
In particular, there exists a universal constant $C>0$ such that it suffices to
take
\[
m \;\ge\; C\,\frac{d^2}{\eps^2}\Bigl(g+\log\tfrac{1}{\delta}\Bigr).
\]
This improves substantially over the general $\Pc_\beta$ sample-size bound and
captures the favorable behavior of the product estimator when $P$ is a product
measure.
\end{remark}

\subsection{A finite reduction via a random grid}
The goal of this subsection is to show that, with high probability, the random grid
$G(S)$ from Definition~\ref{def:sample-grid} intersects every ``large'' symmetric difference in $\F\triangle\F$.

\begin{definition}[Sample grid]\label{def:sample-grid}
Let $\X=\W_1\times\cdots\times\W_d$ and let $S=(X^{(1)},\ldots,X^{(m)})\in\X^m$ be a finite sample.
For each $j\in[d]$, define the coordinate projection set
\[
\pi_j(S)\;:=\;\{X^{(1)}_j,\ldots,X^{(m)}_j\}\subseteq \W_j,
\]
and define the associated (empirical) product grid
\[
G(S)\;:=\;\pi_1(S)\times\cdots\times\pi_d(S)\subseteq \X.
\]
\end{definition}

\begin{definition}[Symmetric Difference]\label{def:symmetric-diff}
For two sets $A,B\subseteq \X$, their \emph{symmetric difference} is defined by
\[
A\Delta B \;:=\; (A\setminus B)\,\cup\, (B\setminus A)
\;=\;\{\,x\in\X : \mathbf{1}_A(x)\neq \mathbf{1}_B(x)\,\}.
\]
For a family of sets $\F\subseteq 2^{\X}$ we denote
\[
\F\Delta\F \;:=\; \{\,A\Delta B : A,B\in\F\,\},
\]
that is, the family of all pairwise symmetric differences between members of $\F$.
\end{definition}

\begin{theorem}[VC dimension of aggregation
\texorpdfstring{\citep[Proposition~4.6]{alon2023}}{}]
\label{thm:vc-aggregation}
Let $\mathcal{B} \subseteq \{\pm1\}^{\mathcal{X}}$ be a base class, and let
$\mathcal{G}$ be a class of aggregation rules
$g : \{\pm1\}^T \to \{\pm1\}$.
Then
\[
\VC\!\left(
\bigl\{
x \mapsto g(b_1(x),\dots,b_T(x)) :
b_i \in \mathcal{B},\, g \in \mathcal{G}
\bigr\}
\right)
\;\le\;
c_T\bigl(T\cdot \VC(\mathcal{B}) + \VC(\mathcal{G})\bigr),
\]
where
\[
c_T \;=\; \frac{1}{T \cdot \eta},
\]
and $\eta \in (0,1/2)$ is the unique solution to
\[
H(\eta) \;=\; \frac{1}{T+1},
\]
with $H(\cdot)$ denoting the binary entropy function.
\end{theorem}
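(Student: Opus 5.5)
The plan is the standard counting argument: bound the number of labelings that the aggregated class can induce on a finite set, and show that this number is smaller than $2^n$ once $n$ exceeds the claimed bound. Write $d_B:=\VC(\mathcal B)$, $d_G:=\VC(\mathcal G)$, and let $\mathcal C$ be the aggregated class. Suppose $\mathcal C$ shatters a set $S\subseteq\X$ with $|S|=n$. The aim is to show $n\le (T d_B+d_G)/(T\eta)$.

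\textbf{Step 1 (counting traces).} Every labeling of $S$ produced by $\mathcal C$ is determined by two choices.
\begin{enumerate}
\item A $T$-tuple of traces $(b_1|_S,\ldots,b_T|_S)$. By the Sauer--Shelah--Perles lemma there are at most $\Pi_{\mathcal B}(n)^T$ such tuples.
\item Once the tuple is fixed, each $x\in S$ is mapped to a vector $v(x)\in\{\pm1\}^T$, so $S$ is mapped to at most $n$ points of $\{\pm1\}^T$. The labeling of $S$ is then the trace of some $g\in\mathcal G$ on this set of at most $n$ points. Hence there are at most $\Pi_{\mathcal G}(n)$ possibilities.
\end{enumerate}
Since $S$ is shattered, this gives
\[
2^n\le \Pi_{\mathcal B}(n)^T\,\Pi_{\mathcal G}(n).
\]

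\textbf{Step 2 (entropy form).} I will use the bound $\sum_{i\le k}\binom{n}{i}\le 2^{n\bar H(k/n)}$, where $\bar H(x):=H(\min(x,1/2))$. For $k/n\ge 1/2$ this is just the trivial bound $2^n$. The function $\bar H$ is concave and nondecreasing on $[0,\infty)$. Setting $a:=d_B/n$ and $b:=d_G/n$, Step 1 becomes
\[
1\le T\bar H(a)+\bar H(b)\le (T+1)\,\bar H\!\Bigl(\tfrac{Ta+b}{T+1}\Bigr),
\]
where the second inequality is Jensen's inequality for the concave function $\bar H$.

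\textbf{Step 3 (contradiction).} Suppose $n>(Td_B+d_G)/(T\eta)$. Then
\[
\frac{Ta+b}{T+1}<\frac{T\eta}{T+1}<\eta<\tfrac12 .
\]
Because $H$ is strictly increasing on $[0,1/2]$, we get $\bar H\bigl(\tfrac{Ta+b}{T+1}\bigr)<H(\eta)=1/(T+1)$. The right-hand side of Step 2 is therefore strictly less than $1$, which contradicts Step 2. Hence $n\le c_T(Td_B+d_G)$.

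\textbf{Main obstacle.} The only delicate point is the regime where $d_G/n$ (or $d_B/n$) exceeds $1/2$. There the usual entropy form of Sauer--Shelah fails, and a naive application of Jensen's inequality to $H$ itself breaks down because $H$ is not monotone past $1/2$. Truncating to $\bar H$ repairs both issues at once. What remains routine is checking that $\bar H$ is concave: it is a concave increasing function followed by a constant, with the two pieces meeting at the maximum.
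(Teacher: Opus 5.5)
Your proof is correct. It uses Sauer--Shelah--Perles for both $\mathcal B$ and $\mathcal G$, the base-2 entropy bound with the truncation $\bar H$ (which neatly handles the regime $d_B/n$ or $d_G/n>1/2$), and concavity to merge the $T+1$ factors; this is the standard route to this bound. The paper gives no proof of its own and simply imports the statement from Alon et al., so there is nothing further to compare.

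In fact your Step~3 only uses $(Ta+b)/(T+1)<\eta$, so the same argument gives the slightly sharper constant $1/((T+1)\eta)$ in place of $c_T=1/(T\eta)$.
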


\begin{corollary}[VC of symmetric differences]\label{cor:vc-symdiff}
For every class $\F\subseteq 2^{\X}$,
\[
\VC(\F\triangle\F)\ \le\ 20\,\VC(\F).
\]
\end{corollary}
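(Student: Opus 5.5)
The plan is to derive Corollary~\ref{cor:vc-symdiff} directly from Theorem~\ref{thm:vc-aggregation}, specialized to $T=2$. Identify $\F$ with its class $\mathcal{B}$ of $\pm1$-valued indicators, and take the aggregation class $\mathcal{G}$ to be the single rule $g(b_1,b_2)=-b_1b_2$, i.e.\ XOR, which outputs $+1$ exactly when $b_1\neq b_2$. Then $x\mapsto g(\mathbf 1_A(x),\mathbf 1_B(x))$ is the indicator of $A\triangle B$, so the aggregated class is exactly $\F\triangle\F$. A one-element class has VC dimension $0$, so the theorem gives
\[
\VC(\F\triangle\F)\le c_2\cdot 2\,\VC(\F)=\frac{1}{\eta}\,\VC(\F),
\]
where $\eta\in(0,1/2)$ solves $H(\eta)=1/3$.

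The only remaining work is to show $1/\eta\le 20$, i.e.\ $\eta\ge 1/20$. Since $H$ is increasing on $(0,1/2)$, it suffices to check $H(1/20)\le 1/3$. The paper does not fix the logarithm base in $H$, so I would check both natural readings.

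With natural logarithms,
\[
H(0.05)=0.05\ln 20+0.95\ln(1/0.95)\approx 0.1498+0.0487\approx 0.199<1/3.
\]
With base $2$ the value is about $0.286$, which is still below $1/3$. In either case $\eta>1/20$, so $c_2\cdot 2\le 20$ and the bound follows.

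The main obstacle is bookkeeping rather than mathematics. First, the displayed constant is $c_T=1/(T\eta)$, so the factor $T$ cancels against $T\cdot\VC(\mathcal B)$. Second, the $\pm1$ versus $\{0,1\}$ conventions must match so that XOR genuinely yields the symmetric difference. Third, the numerical estimate of $H(1/20)$ has to hold in whichever logarithm base the cited proposition uses; since it holds in both, this should not be an issue.

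A degenerate case also needs handling: if $\VC(\F)=0$, then $\F$ has at most one set and $\F\triangle\F=\{\emptyset\}$, so the inequality holds trivially. If $\VC(\F)=\infty$, there is nothing to prove.
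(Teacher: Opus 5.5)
Your proposal is correct and follows the paper's own proof: apply Theorem~\ref{thm:vc-aggregation} with $T=2$, $\mathcal B=\F$, and $\mathcal G=\{g\}$ for the XOR rule $g$ (so $\VC(\mathcal G)=0$), which gives $\VC(\F\triangle\F)\le \VC(\F)/\eta$, and then use that $H$ is increasing on $(0,1/2)$ with $H(0.05)<1/3$ to get $1/\eta<20$. Checking $H(0.05)$ in both logarithm bases (the paper's $\approx 0.286$ is the base-$2$ value) and treating the degenerate cases separately are harmless extra care.
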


\begin{proof}
Identify sets with $\{\pm1\}$ indicators and let $g(a,b)=a\oplus b$ (XOR), so $T=2$.
Apply Theorem~\ref{thm:vc-aggregation} with $\mathcal B=\F$ and
$\mathcal G=\{g\}$ (so $\VC(\mathcal G)=0$) to get
\[
\VC(\F\triangle\F)\ \le\ 2c_2\,\VC(\F),
\qquad c_2=\frac{1}{2\eta},
\]
where $\eta\in(0,1/2)$ satisfies $H(\eta)=1/3$.
Since $H$ is strictly increasing on $(0,1/2)$ and $H(0.05)\approx 0.286<1/3$, we have
$\eta>0.05$, hence $2c_2=1/\eta<20$.
Thus, for concreteness, we fix the explicit bound
$\VC(\F\triangle\F)\le 20\,\VC(\F)$.
\end{proof}

\begin{lemma}[Linear VC dimension of symmetric differences]\label{lem:lvc-symdiff}
Let $\F \subseteq 2^\X$ be a set system on the product space
$\X = \W_1\times\cdots\times\W_d$ with $\LVC(\F)=g<\infty$.
Then for $\Hc:=\F\triangle\F$,
\[
\LVC(\Hc)\ \le\ 20g.
\]
\end{lemma}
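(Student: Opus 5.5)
The plan is to reduce the statement to Corollary~\ref{cor:vc-symdiff} by restricting everything to a single axis--parallel line. Let $S\subseteq\X$ be a colinear set shattered by $\Hc=\F\triangle\F$, and let $L$ be an axis--parallel line containing $S$. We need to show $|S|\le 20g$.

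First, consider the restricted class $\F|_L:=\{F\cap L: F\in\F\}\subseteq 2^{L}$. Every subset of $L$ is colinear, so any set shattered by $\F|_L$ is a colinear set shattered by $\F$. Hence $\VC(\F|_L)\le \LVC(\F)=g$.

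Second, restriction commutes with symmetric difference: $(A\triangle B)\cap L=(A\cap L)\triangle(B\cap L)$. Therefore $\Hc|_L=(\F|_L)\triangle(\F|_L)$. Since $S\subseteq L$ is shattered by $\Hc$, it is also shattered by $\Hc|_L$, because traces on $S$ are unchanged by intersecting with $L$. Applying Corollary~\ref{cor:vc-symdiff} to the class $\F|_L$ on the ground set $L$ gives
\[
|S|\le \VC\bigl((\F|_L)\triangle(\F|_L)\bigr)\le 20\,\VC(\F|_L)\le 20g.
\]
Taking the supremum over all colinear shattered $S$ yields $\LVC(\Hc)\le 20g$.

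There is no substantive obstacle here. The only points needing care are the trivial case $g=0$ and the identity $\Hc|_L=(\F|_L)\triangle(\F|_L)$. When $g=0$, every $\F|_L$ has at most one element, so every set in $\Hc|_L$ is empty and $\Hc$ shatters no nonempty colinear set. This case is also covered by the corollary directly, since it bounds the VC dimension by $0$.
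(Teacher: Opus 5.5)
Your proof is correct and is essentially the paper's argument. The paper fixes a coordinate $j$ and a section $x_{-j}$ (the same as your line $L$), notes $\VC(\F|_{x_{-j}})\le g$ and $\Hc|_{x_{-j}}\subseteq \F|_{x_{-j}}\triangle\F|_{x_{-j}}$, and applies Corollary~\ref{cor:vc-symdiff}; your equality $\Hc|_L=(\F|_L)\triangle(\F|_L)$ in place of that inclusion is harmless.
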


\begin{proof}
Fix $j\in[d]$ and $x_{-j}\in\prod_{i\neq j}\W_i$.
Then $\VC(\F|_{x_{-j}})\le g$ by definition of $\LVC(\F)$.
Moreover, for $H=F\triangle F'$ we have
$H_{x_{-j}}=F_{x_{-j}}\triangle F'_{x_{-j}}$, hence
\[
\Hc|_{x_{-j}}
\subseteq
\F|_{x_{-j}}\triangle \F|_{x_{-j}}.
\]
Since
$\Hc|_{x_{-j}}\subseteq \F|_{x_{-j}}\triangle \F|_{x_{-j}}$,
monotonicity of VC dimension and Corollary~\ref{cor:vc-symdiff} give
\[
\VC(\Hc|_{x_{-j}})
\le
20\,\VC(\F|_{x_{-j}})
\le
20g.
\]
Taking the supremum over $j$ and $x_{-j}$ yields $\LVC(\Hc)\le 20g$.
\end{proof}

\noindent
We next combine the bound on the linear VC dimension of the symmetric--difference
class $\Hc=\F\triangle\F$ with the expected deviation bound for the empirical
product estimator.
This allows us to control deviations of the form $P(F)-\tilde P(F')$
uniformly over pairs $F,F'\in\F$ by reducing them to deviations over $\Hc$.

\begin{lemma}[Expected product deviation for $\Hc$]\label{lem:prod-vc-symdiff-expect}
Let $\X=\W_1\times\cdots\times\W_d$ and $\F\subseteq 2^\X$ with $\LVC(\F)=g<\infty$,
and set $\Hc := \F\triangle\F$.
Let $P=P_1\otimes\cdots\otimes P_d$ be a product distribution on $\X$ and
let $\tilde P$ be the empirical product estimator built from $m$ samples
as in Lemma~\ref{lem:prod-vc-d}.
Then there exists a universal constant $C>0$ such that
\[
\E_{S\sim P^{m}}
\Bigl[
\sup_{H\in \Hc} \bigl(P(H) - \tilde P(H)\bigr)
\Bigr]
\;\le\;
C\,d\,\sqrt{\frac{\LVC(\Hc)}{m}}
\;\le\;
C\,d\,\sqrt{\frac{20g}{m}}.
\]
\end{lemma}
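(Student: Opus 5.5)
The plan is to apply Lemma~\ref{lem:prod-vc-d} directly to the class $\Hc$ and then bound $\LVC(\Hc)$ using Lemma~\ref{lem:lvc-symdiff}. The lemma states a one-sided bound with sign $P(H)-\tilde P(H)$, while Lemma~\ref{lem:prod-vc-d} controls $\tilde P(F)-P(F)$, so the first step is to handle the orientation.

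\textbf{Step 1: orientation.} I would rerun the argument of Lemma~\ref{lem:prod-vc-d} with the signs reversed. Its telescoping decomposition over coordinates is symmetric in sign. Step~3 there uses the VC inequality for $\sup_A (\tilde P_j(A)-P_j(A))$, and the standard symmetrization bound holds equally for $\sup_A (P_j(A)-\tilde P_j(A))$, or indeed for $\sup_A|P_j(A)-\tilde P_j(A)|$, with the same $C\sqrt{\VC/m}$ rate. So the same proof gives
\[
\E\Bigl[\sup_{H\in\Hc}\bigl(P(H)-\tilde P(H)\bigr)\Bigr]\le C\,d\,\sqrt{\LVC(\Hc)/m}.
\]
An alternative is to note that $\Hc$ is closed under complement within the section classes only in special cases, so I would not rely on complementation. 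I would simply cite the absolute-value version of the one-dimensional VC inequality.

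\textbf{Step 2: apply the modified lemma.} Apply it to the class $\Hc=\F\triangle\F$. This requires only that $\LVC(\Hc)<\infty$, which Lemma~\ref{lem:lvc-symdiff} guarantees. Conditioning on all coordinates other than $j$, each section class $\Hc|_{x_{-j}}$ has VC dimension at most $\LVC(\Hc)$, so the per-coordinate term is bounded by $C\sqrt{\LVC(\Hc)/m}$ uniformly in the conditioning. Summing over the $d$ coordinates gives the first inequality.

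\textbf{Step 3: final substitution.} Lemma~\ref{lem:lvc-symdiff} gives $\LVC(\Hc)\le 20g$. Monotonicity of the square root then yields the second inequality.

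The only delicate point is Step~1, the sign and the use of $\sup$ inside the expectation over $\tilde P_{-j}$. The interchange $\sup\E\le\E\sup$ is valid in either sign direction, so I expect no real obstacle. There is also possible measurability of the supremum, which I would handle as the paper does implicitly, by taking outer expectations or assuming the classes are countable or suitably measurable.
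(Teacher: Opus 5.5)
Your proposal is correct and matches the paper's proof: apply Lemma~\ref{lem:prod-vc-d} to the class $\Hc$, then use Lemma~\ref{lem:lvc-symdiff} to get $\LVC(\Hc)\le 20g$. Your Step~1 is more careful than the paper on one point. Lemma~\ref{lem:prod-vc-d} is stated for $\tilde P(F)-P(F)$, and the paper invokes it without addressing the reversed sign; your observation that the telescoping argument and the one-dimensional VC inequality work equally for $P_j(A)-\tilde P_j(A)$ closes that small gap.
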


\begin{proof}

Apply Lemma~\ref{lem:prod-vc-d} to the class $\Hc$ in place of $\F$, which yields
\[
\E_{S\sim P^{m}}
\Bigl[
\sup_{H\in \Hc} \bigl(P(H) - \tilde P(H)\bigr)
\Bigr]
\;\le\;
C\,d\,\sqrt{\frac{\LVC(\Hc)}{m}}.
\]
Finally, Lemma~\ref{lem:lvc-symdiff} gives $\LVC(\Hc)\le 20\,\LVC(\F)=20g$,
and the result follows.
\end{proof}

\begin{lemma}[McDiarmid for the product supremum]\label{lem:mcdiarmid-prod-sup}
Let $P=P_1\otimes\cdots\otimes P_d$ be a product distribution on $\X$,
and let $\Hc\subseteq 2^\X$ be any class.
For $S=(X^{(1)},\dots,X^{(m)})\sim P^{m}$, let $\tilde P$ be the
empirical product estimator based on $S$, and define
\[
\Phi(S) := \sup_{H\in\Hc}\bigl(P(H)-\tilde P(H)\bigr).
\]
Then for all $t>0$,
\[
\Pr\bigl[\Phi(S)-\E[\Phi(S)] \ge t \bigr]
\;\le\;
\exp\!\Bigl(-\,\frac{2 m t^2}{d^2}\Bigr).
\]
\end{lemma}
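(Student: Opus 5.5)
The plan is a direct application of McDiarmid's bounded differences inequality to $\Phi$, viewed as a function of the $m$ independent sample points $X^{(1)},\dots,X^{(m)}$ (each taking values in $\X$). Since $S\sim P^m$ consists of i.i.d.\ points, $\Phi(S)$ is a function of $m$ independent random variables. The claimed exponent $2mt^2/d^2$ is exactly what McDiarmid gives, $\exp(-2t^2/\sum_t c_t^2)$, if every sample point has bounded difference constant $c_t = d/m$. Then $\sum_t c_t^2 = m\cdot d^2/m^2 = d^2/m$.

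The key step is therefore the bounded differences estimate. Let $S'$ be obtained from $S$ by replacing a single point $X^{(t)}$ with an arbitrary $X'^{(t)}\in\X$, and let $\tilde P$ and $\tilde P'$ be the corresponding empirical product estimators. Each empirical marginal changes by at most $1/m$ in total variation: $\TV(\tilde P_j,\tilde P'_j)\le 1/m$, since $\tilde P_j$ and $\tilde P'_j$ differ only by moving mass $1/m$ from the atom $X^{(t)}_j$ to $X'^{(t)}_j$. Subadditivity of total variation for product measures then gives
\[
\TV\bigl(\tilde P,\tilde P'\bigr)\;\le\;\sum_{j=1}^d \TV\bigl(\tilde P_j,\tilde P'_j\bigr)\;\le\;\frac{d}{m}.
\]
One way to see this is the hybrid/telescoping argument already used in Lemma~\ref{lem:prod-vc-d}: swap one marginal at a time, and note that each swap changes the probability of any measurable set by at most the total variation of the swapped marginal (condition on the other coordinates). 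Consequently $|\tilde P(H)-\tilde P'(H)|\le d/m$ for every measurable $H$, uniformly in $H\in\Hc$.

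Since $P(H)$ does not depend on the sample, the standard fact that a supremum of functions each perturbed by at most $c$ is perturbed by at most $c$ gives $|\Phi(S)-\Phi(S')|\le d/m$. Plugging $c_t=d/m$ into McDiarmid's one-sided inequality yields the stated bound. Measurability of $\Phi$, a supremum over a possibly uncountable class, is a technicality. I would handle it by the usual convention of outer probability and expectation, or by assuming $\Phi$ is measurable as is implicit throughout the paper; the supremum introduces no further difficulty.

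The only step requiring care is the total-variation subadditivity for products of empirical measures. I do not expect a real obstacle there, since the one-coordinate-at-a-time telescoping works for arbitrary (including atomic) marginals.
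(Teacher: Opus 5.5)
Your proposal is correct and follows the paper's proof essentially step for step. Both arguments use the same bounded-difference constant $c_k=d/m$: each empirical marginal moves by at most $1/m$ in total variation, subadditivity of total variation over products gives $d/m$, the supremum inherits the same bound, and McDiarmid's inequality finishes. Your hybrid justification of the product total-variation bound and your remark on measurability of the supremum are harmless details that the paper leaves implicit.
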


\begin{proof}
Let $S$ and $S'$ be two samples that differ only in the $k$-th point, and let
$\tilde P=\tilde P_1\otimes\cdots\otimes\tilde P_d$ and
$\tilde P'=\tilde P'_1\otimes\cdots\otimes\tilde P'_d$
be the corresponding product estimators.
Changing a single observation affects each empirical marginal by at most $1/m$
in total variation:
\[
\|\tilde P_j-\tilde P'_j\|_{\mathrm{TV}} \le \frac{1}{m},
\qquad j=1,\dots,d.
\]
Using the standard product bound
\[
\bigl\|\textstyle\bigotimes_{j=1}^d \mu_j-\bigotimes_{j=1}^d \nu_j\bigr\|_{\mathrm{TV}}
\le \sum_{j=1}^d \|\mu_j-\nu_j\|_{\mathrm{TV}},
\]
we obtain
\[
\|\tilde P-\tilde P'\|_{\mathrm{TV}}
\le \sum_{j=1}^d \|\tilde P_j-\tilde P'_j\|_{\mathrm{TV}}
\le \frac{d}{m}.
\]
Therefore, for every measurable $H\subseteq\X$,
\[
|\tilde P(H)-\tilde P'(H)| \le \|\tilde P-\tilde P'\|_{\mathrm{TV}} \le \frac{d}{m}.
\]
Since $P$ is fixed, it follows that
\[
|\Phi(S)-\Phi(S')|
=
\left|
\sup_{H\in\Hc}(P(H)-\tilde P(H))
-\sup_{H\in\Hc}(P(H)-\tilde P'(H))
\right|
\le
\sup_{H\in\Hc}|\tilde P(H)-\tilde P'(H)|
\le \frac{d}{m}.
\]
Thus $\Phi$ has bounded differences with constants $c_k=d/m$ for $k=1,\dots,m$.
McDiarmid's inequality yields, for all $t>0$,
\[
\Pr\bigl[\Phi(S)-\E\Phi(S)\ge t\bigr]
\le
\exp\!\left(
-\frac{2t^2}{\sum_{k=1}^m (d/m)^2}
\right)
=
\exp\!\Bigl(-\,\frac{2 m t^2}{d^2}\Bigr),
\]
as claimed.
\end{proof}

\begin{corollary}[High-probability bound for $\Hc=\F\triangle\F$]
\label{cor:prod-vc-symdiff-highprob-self}
There exists a universal constant $C>0$ such that the following holds.
Let $\X=\W_1\times\cdots\times\W_d$ and let $\F\subseteq 2^\X$ satisfy
$\LVC(\F)=g<\infty$, and set $\Hc:=\F\triangle\F$.
Then for every product distribution $P$ on $\X$ and every $\eps,\delta\in(0,1)$,
if
\[
m \;\ge\; C\,\frac{d^2}{\eps^2}\Bigl(g + \log\tfrac{1}{\delta}\Bigr),
\]
then with probability at least $1-\delta$ over $S\sim P^{m}$,
\[
\sup_{H\in\Hc}\bigl(P(H)-\tilde P(H)\bigr)\;\le\;\eps.
\]
\end{corollary}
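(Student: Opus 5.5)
The plan is to combine the two preceding lemmas in the standard way: Lemma~\ref{lem:prod-vc-symdiff-expect} controls the expectation, and Lemma~\ref{lem:mcdiarmid-prod-sup} controls the fluctuation around it. Set
\[
\Phi(S):=\sup_{H\in\Hc}\bigl(P(H)-\tilde P(H)\bigr),
\]
where $\tilde P$ is the empirical product estimator built from $S\sim P^{m}$ and $\Hc=\F\triangle\F$. For a product distribution $P$, Lemma~\ref{lem:prod-vc-symdiff-expect} gives
\[
\E[\Phi(S)]\le C_1\,d\sqrt{20g/m}
\]
for a universal constant $C_1$. Lemma~\ref{lem:mcdiarmid-prod-sup} holds for an arbitrary class $\Hc$, so we apply it directly to $\Hc=\F\triangle\F$; it yields
\[
\Pr\bigl[\Phi(S)\ge \E\Phi(S)+t\bigr]\le \exp\!\bigl(-2mt^2/d^2\bigr).
\]

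Next we split the error budget as $\eps/2+\eps/2$. First, require $C_1 d\sqrt{20g/m}\le \eps/2$, which holds whenever $m\ge 80C_1^2 d^2 g/\eps^2$. Second, take $t=\eps/2$ and require $\exp(-m\eps^2/(2d^2))\le\delta$, which holds whenever $m\ge 2d^2\log(1/\delta)/\eps^2$. Both conditions are satisfied once $m\ge C\,\frac{d^2}{\eps^2}\bigl(g+\log\frac1\delta\bigr)$ with $C:=\max(80C_1^2,2)$. On the complement of the McDiarmid bad event, which has probability at least $1-\delta$, we then get $\Phi(S)\le \E\Phi(S)+\eps/2\le\eps$, as claimed.

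No real obstacle is expected; the argument only assembles earlier results. Two points need checking. First, the supremum defining $\Phi$ must be measurable, or we must work with outer expectations. The bounded-differences argument in Lemma~\ref{lem:mcdiarmid-prod-sup} is pointwise, so it is unaffected, and we take this issue as implicitly handled in the same way as in the earlier lemmas. Second, $C$ must be universal, not dependent on $g$, $d$, $\eps$ or $\delta$. This holds because $C_1$ is the universal constant of Lemma~\ref{lem:prod-vc-d}, and the factor $20$ comes from Lemma~\ref{lem:lvc-symdiff}.
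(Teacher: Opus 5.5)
Your proposal is correct and follows the paper's proof essentially step for step. Both take the expectation bound from Lemma~\ref{lem:prod-vc-symdiff-expect} and apply McDiarmid's inequality from Lemma~\ref{lem:mcdiarmid-prod-sup} with $t=\eps/2$. Both then combine the same two sufficient conditions, $m\ge 80C_1^2 d^2 g/\eps^2$ and $m\ge 2d^2\log(1/\delta)/\eps^2$, into a single universal constant $C$.
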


\begin{proof}
Define
\[
\Phi(S):=\sup_{H\in\Hc}\bigl(P(H)-\tilde P(H)\bigr).
\]
By Lemma~\ref{lem:prod-vc-symdiff-expect} and Lemma~\ref{lem:lvc-symdiff},
\[
\E[\Phi(S)]
\;\le\;
C_0\,d\,\sqrt{\frac{\LVC(\Hc)}{m}}
\;\le\;
C_0\,d\,\sqrt{\frac{20g}{m}},
\]
for a universal constant $C_0>0$.
Moreover, by Lemma~\ref{lem:mcdiarmid-prod-sup},
for all $t>0$,
\[
\Pr\bigl[\Phi(S)\ge \E[\Phi(S)] + t\bigr]
\;\le\;
\exp\!\Bigl(-\,\frac{2mt^2}{d^2}\Bigr).
\]
Taking $t=\eps/2$, we get
\[
\Pr\bigl[\Phi(S)>\eps\bigr]
\;\le\;
\Pr\bigl[\Phi(S)>\E[\Phi(S)]+\tfrac{\eps}{2}\bigr]
\;\le\;
\exp\!\Bigl(-\,\frac{m\eps^2}{2d^2}\Bigr),
\]
provided that $\E[\Phi(S)]\le \eps/2$.
Thus it suffices that
\[
C_0\,d\,\sqrt{\frac{20g}{m}}\le \frac{\eps}{2}
\qquad\text{and}\qquad
\exp\!\Bigl(-\,\frac{m\eps^2}{2d^2}\Bigr)\le \delta,
\]
i.e.,
\[
m \;\ge\; 80\,C_0^2\,\frac{d^2 g}{\eps^2}
\qquad\text{and}\qquad
m \;\ge\; 2\,\frac{d^2}{\eps^2}\log\tfrac{1}{\delta}.
\]
Both are implied by
\[
m \;\ge\; C\,\frac{d^2}{\eps^2}\Bigl(g+\log\tfrac{1}{\delta}\Bigr)
\]
for a sufficiently large universal constant $C>0$, completing the proof.
\end{proof}
\begin{lemma}[Hitting large symmetric differences under product measures]
\label{lem:hitting-FDeltaF-LVC}
Let $\X=\W_1\times\cdots\times\W_d$ be a product space, and let
$\F\subseteq 2^\X$ satisfy $\LVC(\F)=g<\infty$.
Let $\Hc:=\F\triangle\F$.

There exists a universal constant $C>0$ such that the following holds.
For every $\eps,\delta\in(0,1)$, every product distribution
$P=P_1\otimes\cdots\otimes P_d$ on $\X$, and every i.i.d.\ sample
$S\sim P^{m}$, if
\[
m \;\ge\; C\,\frac{d^2}{\eps^2}\Bigl(g+\log\tfrac{1}{\delta}\Bigr),
\]
then with probability at least $1-\delta$ over $S$,
\[
\forall F,F'\in\F \text{ with } P(F\triangle F')\ge\eps:
\qquad
(F\triangle F')\cap G(S)\neq\emptyset,
\]
where $G(S)$ denotes the sample product grid induced by $S$.
Equivalently, with probability at least $1-\delta$, the grid $G(S)$ is an
$\eps$--net for the class $\Hc$ under $P$.
\end{lemma}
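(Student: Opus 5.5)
The approach is to reduce the hitting statement to Corollary~\ref{cor:prod-vc-symdiff-highprob-self}. The key observation is that the empirical product estimator $\tilde P=\tilde P_1\otimes\cdots\otimes\tilde P_d$ built from $S$ is supported on the grid $G(S)$. Each empirical marginal $\tilde P_j$ is supported on the finite set $\pi_j(S)$. Hence the product measure $\tilde P$ is supported on $\pi_1(S)\times\cdots\times\pi_d(S)=G(S)$. Moreover, every point of $G(S)$ receives strictly positive $\tilde P$-mass, because each $\tilde P_j$ puts mass at least $1/m$ on each element of $\pi_j(S)$. Consequently, for any set $H\subseteq\X$,
\[
\tilde P(H)>0 \quad\Longleftrightarrow\quad H\cap G(S)\neq\emptyset .
\]
(The direction needed is only $\tilde P(H)>0\Rightarrow H\cap G(S)\neq\emptyset$, which is immediate from the support statement.)

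With this in hand, I will apply Corollary~\ref{cor:prod-vc-symdiff-highprob-self} to the class $\Hc=\F\triangle\F$ under the product distribution $P$, with accuracy parameter $\eps/2$ (any constant fraction of $\eps$ works) and confidence $\delta$. It asserts that if $m\ge C'\frac{d^2}{(\eps/2)^2}(g+\log\frac1\delta)$, then with probability at least $1-\delta$,
\[
\sup_{H\in\Hc}\bigl(P(H)-\tilde P(H)\bigr)\le \eps/2 .
\]
On this event, take any $F,F'\in\F$ with $P(F\triangle F')\ge\eps$ and set $H=F\triangle F'\in\Hc$. Then $\tilde P(H)\ge P(H)-\eps/2\ge\eps/2>0$. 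By the support observation, $H\cap G(S)\neq\emptyset$, which is the claimed hitting property. Absorbing the factor $4$ into the universal constant gives the stated condition $m\ge C\frac{d^2}{\eps^2}(g+\log\frac1\delta)$.

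The only step needing care is the support claim. This requires noting that $\tilde P(H)$ is the finite sum
\[
\tilde P(H)=\frac{1}{m^d}\sum_{t_1,\dots,t_d}\mathbf 1\bigl\{(X^{(t_1)}_1,\dots,X^{(t_d)}_d)\in H\bigr\},
\]
a V-statistic. Its nonvanishing forces some tuple $(X^{(t_1)}_1,\dots,X^{(t_d)}_d)\in G(S)$ to lie in $H$. Measurability of $H$ is not an issue, since $\tilde P$ is a finitely supported measure. I do not expect a genuine obstacle here: all the probabilistic content is already packaged in the corollary, which uses only the one-sided deviation $P(H)-\tilde P(H)$. The $\eps$-net reformulation is then just a restatement of the conclusion.
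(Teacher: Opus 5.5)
Your proposal is correct and matches the paper's proof: invoke Corollary~\ref{cor:prod-vc-symdiff-highprob-self} at accuracy $\eps/2$ and confidence $\delta$, deduce $\tilde P(F\triangle F')\ge \eps/2>0$, and conclude $(F\triangle F')\cap G(S)\neq\emptyset$ because $\tilde P$ is supported on $G(S)$. The factor $4$ from halving $\eps$ is absorbed into $C$.
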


\begin{proof}
Let $\Hc := \F\triangle\F$.
By Lemma~\ref{lem:lvc-symdiff}, the symmetric--difference class satisfies
\[
\LVC(\Hc)\;\le\;20\,\LVC(\F)\;=\;20g.
\]

Applying Corollary~\ref{cor:prod-vc-symdiff-highprob-self} with parameter
$\eps/2$ and confidence $\delta$, we obtain that for a universal constant
$C>0$, if
\[
m \;\ge\; C\,\frac{d^2}{\eps^2}\Bigl(g + \log\tfrac{1}{\delta}\Bigr),
\]
then with probability at least $1-\delta$,
\begin{equation}\label{eq:hitting-sup}
\sup_{H\in\Hc}\bigl(P(H)-\tilde P(H)\bigr)
\;\le\;
\frac{\eps}{2}.
\end{equation}

On this event, fix any $F,F'\in\F$ with
\[
P(F\triangle F') \;\ge\; \eps,
\]
and set $H := F\triangle F' \in \Hc$.
Then~\eqref{eq:hitting-sup} implies
\[
\tilde P(H)
\;\ge\;
P(H) - \sup_{K\in\Hc}\bigl(P(K)-\tilde P(K)\bigr)
\;\ge\;
\eps - \frac{\eps}{2}
\;=\;
\frac{\eps}{2}.
\]

Since $\tilde P$ is supported on the grid $G(S)$, the inequality
$\tilde P(H)>0$ implies that $H$ contains at least one point of $G(S)$, i.e.
\[
H\cap G(S) \neq \emptyset.
\]
Equivalently,
\[
(F\triangle F')\cap G(S) \neq \emptyset.
\]

Thus, with probability at least $1-\delta$, every symmetric difference
$F\triangle F'$ of $P$--measure at least $\eps$ intersects the grid $G(S)$,
which completes the proof.
\end{proof}

\begin{lemma}[Hitting large symmetric differences for $P\in\Pc_\beta$]
\label{lem:hitting-FDeltaF-beta}
Let $\X=\W_1\times\cdots\times\W_d$ be a product space, and let
$\F\subseteq 2^\X$ satisfy $\LVC(\F)=g<\infty$.  Set $\Hc:=\F\triangle\F$.

There exists a universal constant $C_0>0$ such that the following holds.
For every $\eps,\delta\in(0,1)$, every $P\in\Pc_\beta$, and every i.i.d.\ sample
$S\sim P^{m}$, if
\[
m \;\ge\; C_0\,\frac{d^2}{\beta(\eps)^2}\Bigl(g+\log\tfrac{1}{\delta}\Bigr),
\]
then with probability at least $1-\delta$ over $S$,
\[
\forall H\in\Hc \text{ with } P(H)\ge \eps:\qquad H\cap G(S)\neq\emptyset,
\]
where $G(S)$ denotes the empirical product grid induced by $S$.
Equivalently, with probability at least $1-\delta$, for all $F,F'\in\F$,
\[
P(F\triangle F')\ge \eps \quad\Longrightarrow\quad (F\triangle F')\cap G(S)\neq\emptyset.
\]
\end{lemma}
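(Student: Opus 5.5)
The idea is to reduce to the product-measure argument behind Lemma~\ref{lem:hitting-FDeltaF-LVC}, but with a \emph{U-statistic} in place of the empirical product estimator. The reason is that $P$ need not be a product measure, so $\tilde P$ no longer has the right mean. Fix $H\in\Hc$ with $P(H)\ge\eps$. By box--continuity, $P_\square(H)\ge\beta(\eps)$, where $P_\square=P_1\otimes\cdots\otimes P_d$. It therefore suffices to show that, with probability $\ge 1-\delta$, every $H\in\Hc$ with $P_\square(H)\ge\beta(\eps)$ meets $G(S)$.

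Define the injective-tuple estimator
\[
U(H):=\frac{(m-d)!}{m!}\sum_{(t_1,\dots,t_d)\ \text{distinct}}\mathbf 1\bigl\{(X^{(t_1)}_1,\dots,X^{(t_d)}_d)\in H\bigr\}.
\]
This is supported on $G(S)$, so $U(H)>0$ forces $H\cap G(S)\neq\emptyset$. Moreover, for distinct indices the points $X^{(t_1)},\dots,X^{(t_d)}$ are independent draws from $P$. Hence $(X^{(t_1)}_1,\dots,X^{(t_d)}_d)\sim P_\square$ exactly, and $\E U(H)=P_\square(H)$. This exact unbiasedness under dependence is the whole point of restricting to distinct indices.

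\textbf{Expectation bound.} I will show
\[
\E\sup_{H\in\Hc}\bigl(P_\square(H)-U(H)\bigr)\le C d\sqrt{20g/m}
\qquad\text{for } m\ge 2d,
\]
by mimicking the telescoping of Lemma~\ref{lem:prod-vc-d}.
\begin{itemize}
\item Let $U^{(j)}$ be the estimator in which coordinates $1,\dots,j$ are read from distinct sample indices $t_1,\dots,t_j$, and coordinates $j+1,\dots,d$ are integrated against $P_{j+1}\otimes\cdots\otimes P_d$. Then $U^{(0)}=P_\square$ and $U^{(d)}=U$.
\item The $j$-th difference $U^{(j-1)}(H)-U^{(j)}(H)$ is an average over distinct $(t_1,\dots,t_{j-1})$ and an integral over $x_{>j}$ of
\[
P_j(H_{x_{-j}})-\frac{1}{m-j+1}\sum_{t\notin\{t_1,\dots,t_{j-1}\}}\mathbf 1\{X^{(t)}_j\in H_{x_{-j}}\},
\]
where $x_{<j}=(X^{(t_1)}_1,\dots,X^{(t_{j-1})}_{j-1})$.
\item Move the supremum inside the average and the integral. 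For each fixed tuple $(t_1,\dots,t_{j-1})$, condition on the points with those indices. The remaining $m-j+1$ points are i.i.d.\ $P$ and independent of them, so their $j$-th coordinates are i.i.d.\ $P_j$. The section $x_{-j}$ is now fixed.
\item The section class $\Hc|_{x_{-j}}$ has VC dimension $\le 20g$ by Lemma~\ref{lem:lvc-symdiff}. The standard VC inequality then bounds the inner expected supremum by $C\sqrt{20g/(m-d)}$.
\item Average over tuples and sum over $j$.
\end{itemize}

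\textbf{Concentration.} Replacing one sample point $X^{(k)}$ changes only the tuples that contain index $k$. These form a fraction at most $d/m$ of all injective tuples, so $\Phi(S):=\sup_H(P_\square(H)-U(H))$ has bounded differences $d/m$. McDiarmid's inequality, exactly as in Lemma~\ref{lem:mcdiarmid-prod-sup}, gives
\[
\Pr\bigl[\Phi-\E\Phi\ge t\bigr]\le\exp(-2mt^2/d^2).
\]

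\textbf{Conclusion.} Take $t=\beta(\eps)/4$ and require $Cd\sqrt{20g/m}\le\beta(\eps)/4$. Then, with probability $\ge1-\delta$ once $m\ge C_0 d^2\beta(\eps)^{-2}(g+\log(1/\delta))$, every $H$ with $P_\square(H)\ge\beta(\eps)$ has $U(H)\ge\beta(\eps)/2>0$, and therefore meets $G(S)$. The condition $m\ge 2d$ is implied by this sample size since $\beta\le1$.

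\textbf{Main obstacle.} The delicate step is the expectation bound. Its conditioning must use the fact that, given the points indexed by $t_{<j}$, the \emph{other} points are still i.i.d.\ $P$. This is exactly what fails for the full empirical product $\tilde P$ under dependence, and it is what the injectivity of the tuples buys. A blocking argument would avoid this step but costs an extra factor of $d$ in the expectation bound; the tuple-averaging keeps the full sample size $m$ for every coordinate and yields the stated $d^2$ dependence.
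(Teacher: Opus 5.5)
Your argument is correct, and it takes a genuinely different route from the paper.

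\textbf{The paper's route.} The paper builds no new estimator for this lemma. It sets $\eps'=\beta(\eps)$ and considers the event $\mathcal{E}$ that every $H\in\Hc$ with $P_\square(H)\ge\eps'$ meets $G(S)$. It notes that $\mathcal{E}$ depends on $S$ only through the coordinate samples $(X^{(1)}_j,\dots,X^{(m)}_j)$. It then asserts that $\Pr(\mathcal{E})$ is the same under $P^m$ and $P_\square^m$, since each coordinate sample is i.i.d.\ $P_j$ in both cases. Lemma~\ref{lem:hitting-FDeltaF-LVC} applied to $P_\square$ finishes the proof.

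\textbf{The gap in that route.} This is a one-line reduction that reuses the product case verbatim. However, the identity it relies on holds only coordinate by coordinate, whereas $\mathcal{E}$ depends on the \emph{joint} law of the $d$ coordinate samples, and that joint law differs between $P^m$ and $P_\square^m$. For example, take $\X=\{0,1\}^2$, $P$ uniform on $\{(0,0),(1,1)\}$, $m=1$, $\F=\{\emptyset,\{(0,1)\}\}$ and $\eps'=1/4$. Then $\mathcal{E}=\{(0,1)\in G(S)\}$ has probability $0$ under $P$ but $1/4$ under $P_\square$. This does not contradict the lemma; it only refutes the claimed equality.

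\textbf{What your approach supplies.} Your injective-tuple U-statistic is exactly what makes the transfer rigorous. It is supported on $G(S)$. It is unbiased for $P_\square(H)$ under arbitrary dependence in $P$. Conditioning on the points indexed by $t_{<j}$ leaves $m-j+1$ fresh i.i.d.\ draws for coordinate $j$. Hence the telescoping of Lemma~\ref{lem:prod-vc-d} and the $d/m$ bounded-differences constant carry over.

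\textbf{Trade-off.} You pay by redoing the expectation and McDiarmid steps instead of citing the product lemma. In exchange you get a self-contained proof for all $P\in\Pc_\beta$ at the stated $d^2\beta(\eps)^{-2}(g+\log\tfrac{1}{\delta})$ rate. The obvious blocking repair of the paper's reduction (independent blocks of size $m/d$, one per coordinate) would achieve this only with an extra factor of $d$.
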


\begin{proof}
Let $P_\square$ be the product of marginals of $P$ and set $\eps'=\beta(\eps)$.
By the definition of $\Pc_\beta$, for every $H\in\Hc$,
\[
P(H)\ge\eps \quad\Longrightarrow\quad P_\square(H)\ge\eps'.
\]

Consider the event
\[
\mathcal{E}
:=
\Bigl\{
\forall H\in\Hc \text{ with } P_\square(H)\ge\eps' :\ H\cap G(S)\neq\emptyset
\Bigr\}.
\]
The event $\mathcal{E}$ depends on $S$ only through the coordinate samples
$\{X^{(i)}_j\}_{i\le m}$ (equivalently, through the empirical marginals
$\tilde P_1,\dots,\tilde P_d$). Hence $\Pr(\mathcal{E})$ is the same whether
$S$ is drawn from $P^{m}$ or from $P_\square^{m}$, since under
both distributions each coordinate sample $(X_j^{(1)},\dots,X_j^{(m)})$ is i.i.d.\
from $P_j$.

Therefore, by Lemma~\ref{lem:hitting-FDeltaF-LVC} applied to the product
distribution $P_\square$ with parameters $(\eps',\delta)$, we have
\[
\Pr_{S\sim P^{m}}(\mathcal{E})
=
\Pr_{S'\sim P_\square^{m}}(\mathcal{E})
\;\ge\;
1-\delta,
\]
provided
\[
m \;\ge\; C_0\,\frac{d^2}{(\eps')^2}\Bigl(g + \log\tfrac{1}{\delta}\Bigr).
\]

Finally, on $\mathcal{E}$, if $F,F'\in\F$ satisfy $P(F\triangle F')\ge\eps$ and
we set $H:=F\triangle F'\in\Hc$, then $P_\square(H)\ge\eps'$ and thus
$H\cap G(S)\neq\emptyset$, as required.
\end{proof}

\subsection{Sauer–Shelah–Perles on Grids via Linear VC}\label{sec:proof-grid-ssp}

For convenience, we use the shorthand
\[
\binomle{n}{g} := \sum_{j=0}^{g} \binom{n}{j},
\qquad
\binomle{n}{\infty}:=2^n.
\]

\begin{lemma}[Grid Sauer--Shelah--Perles bound from Linear VC]
\label{lem:grid-ssp}
Let \(\F \subseteq 2^{\W_1\times\cdots\times\W_d}\) and 
\(N = A_1\times\cdots\times A_d \subseteq \X\) be a finite grid with side lengths
\(n_i := |A_i|\).
Fix an index \(i \in [d]\).

If \(\LVC(\F) \le g < \infty\), then
\begin{equation}
\label{eq:grid-ssp}
\big|\{F \cap N : F \in \F\}\big|
\;\le\;
\Bigg(\binomle{n_i}{g}\Bigg)^{\,\prod_{j\ne i} n_j}.
\end{equation}

In particular, letting \(n:=\max_{k\in[d]} n_k\) and choosing \(i\) such that \(n_i=n\),
\begin{equation}
\label{eq:grid-ssp-maxside}
\big|\{F \cap N : F \in \F\}\big|
\;\le\;
\Big(\binomle{n}{g}\Big)^{\,|N|/n}.
\end{equation}
\end{lemma}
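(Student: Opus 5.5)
The plan is to decompose the grid $N$ into the axis-parallel lines in direction $i$. For each $x_{-i}\in\prod_{j\ne i}A_j$, let
\[
L_{x_{-i}}:=\{(x_{-i},z): z\in A_i\}\subseteq N .
\]
These are $\prod_{j\ne i} n_j$ pairwise disjoint colinear sets, each of size $n_i$, and their union is exactly $N$. A trace $F\cap N$ is therefore determined by the tuple of traces $\bigl(F\cap L_{x_{-i}}\bigr)_{x_{-i}}$. This tuple is obtained from the map
\[
F\cap N\;\mapsto\;\bigl(F\cap L_{x_{-i}}\bigr)_{x_{-i}\in\prod_{j\ne i}A_j},
\]
which is injective because the lines partition $N$.

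Next, I bound the number of possible traces on each single line. Identify $L_{x_{-i}}$ with $A_i\subseteq\W_i$ via $z\mapsto(x_{-i},z)$. Then $\{F\cap L_{x_{-i}}:F\in\F\}$ is the trace on $A_i$ of the section class $\F|_{x_{-i}}$. Any subset of $L_{x_{-i}}$ shattered by $\F$ is colinear, so it has size at most $\LVC(\F)\le g$. Hence the trace class on $L_{x_{-i}}$ has VC dimension at most $g$. The classical Sauer--Shelah--Perles lemma then gives at most $\binomle{n_i}{g}$ distinct traces on each line. If $g\ge n_i$, the trivial bound $2^{n_i}=\binomle{n_i}{n_i}$ agrees with this.

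The image of the injective map lies in the product of the per-line trace sets, so
\[
\bigl|\{F\cap N: F\in\F\}\bigr|\le\prod_{x_{-i}}\binomle{n_i}{g}=\Bigl(\binomle{n_i}{g}\Bigr)^{\prod_{j\ne i}n_j},
\]
which is \eqref{eq:grid-ssp}. For \eqref{eq:grid-ssp-maxside}, choose $i$ with $n_i=n$ and note that $\prod_{j\ne i}n_j=|N|/n$.

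There is no substantive obstacle here. The only point that needs care is that this product bound discards the correlations between lines, which is legitimate because only an upper bound is required. One must also check that shattering within a single line is exactly the condition controlled by $\LVC$, which holds because each $L_{x_{-i}}$ lies on an axis-parallel line.
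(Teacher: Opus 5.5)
Your proposal is correct and follows the paper's proof essentially step for step. Like the paper, you partition $N$ into the $\prod_{j\ne i} n_j$ disjoint direction-$i$ lines, bound the traces on each line by $\binomle{n_i}{g}$ via Sauer--Shelah--Perles (using that shattered subsets of a line are colinear), and multiply over the lines.
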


\begin{proof}[Proof of Lemma~\ref{lem:grid-ssp}]
Fix \(i\in[d]\).
For each choice of fixed coordinates
\(\mathbf{a}_{-i}\in \prod_{j\ne i}A_j\),
define the axis--parallel line in direction \(i\)
\[
L(\mathbf{a}_{-i})
\;:=\;
\{(x_1,\ldots,x_d)\in N:\ x_j=(\mathbf{a}_{-i})_j \text{ for all } j\ne i,\ \ x_i\in A_i\}.
\]
The family \(\{L(\mathbf{a}_{-i}) : \mathbf{a}_{-i}\in \prod_{j\ne i}A_j\}\)
partitions \(N\) into exactly \(\prod_{j\ne i} n_j\) disjoint lines, each of size \(n_i\).

Since \(\LVC(\F)\le g\), for every \(\mathbf{a}_{-i}\) the restriction
\(\F|_{L(\mathbf{a}_{-i})}\) has VC dimension at most \(g\).
Therefore, by the classical Sauer--Shelah--Perles lemma on a ground set of size \(n_i\),
\[
\big|\{F\cap L(\mathbf{a}_{-i}) : F\in\F\}\big|
\;\le\;
\sum_{j=0}^{g}\binom{n_i}{j}
\;=\;
\binomle{n_i}{g}.
\]

For each \(\mathbf{a}_{-i}\in\prod_{j\ne i}A_j\), the restriction of a set
\(F\in\F\) to the line \(L(\mathbf{a}_{-i})\) is one of at most
\(\binomle{n_i}{g}\) possible traces.
Since the lines \(L(\mathbf{a}_{-i})\) are pairwise disjoint and together
partition \(N\), the collection of line--wise traces
\(\{F\cap L(\mathbf{a}_{-i})\}_{\mathbf{a}_{-i}}\) uniquely determines
the global trace \(F\cap N\).
It follows that the total number of distinct traces on \(N\) is at most the
product of the numbers of possible traces on each line, namely
\[
\big|\{F\cap N : F\in\F\}\big|
\;\le\;
\prod_{\mathbf{a}_{-i}\in \prod_{j\ne i}A_j}
\big|\{F\cap L(\mathbf{a}_{-i}) : F\in\F\}\big|
\;\le\;
\bigg(\binomle{n_i}{g}\bigg)^{\prod_{j\ne i}n_j},
\]
which proves \eqref{eq:grid-ssp}.

To obtain \eqref{eq:grid-ssp-maxside}, let \(n:=\max_{k\in[d]} n_k\) and choose
\(i\in[d]\) such that \(n_i=n\).
Then \(\prod_{j\ne i}n_j = |N|/n\), and substituting this into
\eqref{eq:grid-ssp} yields
\[
\big|\{F \cap N : F \in \F\}\big|
\;\le\;
\Big(\binomle{n}{g}\Big)^{\,|N|/n},
\]
as claimed.
\end{proof}

\begin{corollary}[Grid Sauer--Shelah--Perles bound (rate form)]
\label{cor:grid-ssp-rate}
With the notation of Lemma~\ref{lem:grid-ssp}, let \(n:=\max_i |A_i|\).
If \(1\le g\le n\), then
\begin{equation}
\label{eq:grid-ssp-rate}
\log_2\big|\{F\cap N:F\in\F\}\big|
\;\le\;
\frac{|N|}{n}\,g\,\log_2\!\Big(\tfrac{en}{g}\Big)
\;\le\;
g\,n^{d-1}\log_2\!\Big(\tfrac{en}{g}\Big)
\;=\;
O\!\bigl(g\,n^{d-1}\log(n/g)\bigr).
\end{equation}
Equivalently,
\[
\big|\{F\cap N:F\in\F\}\big|
\;\le\;
2^{\,O\!\left(g\,n^{d-1}\log(n/g)\right)}.
\]
\end{corollary}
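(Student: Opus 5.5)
The plan is to start from inequality \eqref{eq:grid-ssp-maxside} of Lemma~\ref{lem:grid-ssp}, which already gives
\[
\big|\{F\cap N:F\in\F\}\big|\le \Big(\binomle{n}{g}\Big)^{|N|/n},
\]
where the exponent $|N|/n=\prod_{j\ne i}n_j$ is taken with $i$ chosen so that $n_i=n$. Taking $\log_2$ of both sides gives $\log_2|\{F\cap N\}|\le \frac{|N|}{n}\log_2\binomle{n}{g}$. The whole corollary then reduces to a one-dimensional estimate on $\binomle{n}{g}$ together with a trivial bound on $|N|/n$.

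For the binomial sum I will invoke the standard estimate $\binomle{n}{g}\le (en/g)^g$, valid for $1\le g\le n$; this is exactly where the hypothesis $1\le g\le n$ enters. If a proof is wanted, the usual route is to multiply by $(g/n)^g\le (g/n)^j$ for $j\le g$, which gives
\[
\Big(\tfrac{g}{n}\Big)^g\sum_{j\le g}\binom{n}{j}\le \sum_{j=0}^n\binom{n}{j}\Big(\tfrac{g}{n}\Big)^j=\Big(1+\tfrac{g}{n}\Big)^n\le e^g.
\]
Substituting yields the first inequality, $\log_2|\cdot|\le \frac{|N|}{n}\,g\log_2(en/g)$.

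For the second inequality, each side satisfies $n_j\le n$, so $|N|/n=\prod_{j\ne i}n_j\le n^{d-1}$. For the final $O(\cdot)$ form, note that $\log_2(en/g)=\log_2 e+\log_2(n/g)$. When $n/g\ge 2$ this is at most a constant times $\log(n/g)$. When $n/g<2$ the logarithm $\log(n/g)$ degenerates, so I would read the $O$-expression as $O(g\,n^{d-1}\log(en/g))$, the convention used in Lemma~\ref{lem:m-bound-prop1}. The exponentiated form follows by raising $2$ to both sides.

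There is no real obstacle here. The only points needing care are the hypothesis $g\le n$ required by the binomial bound and the degenerate regime $n\approx g$ in the final asymptotic rewriting.
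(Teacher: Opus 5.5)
Your proposal is correct and matches the paper's proof. Both apply \eqref{eq:grid-ssp-maxside}, bound $\binomle{n}{g}\le (en/g)^g$ (the paper via $\binom{n}{j}\le n^j/j!$ and the exponential series, you via $(g/n)^g\le (g/n)^j$ and $(1+g/n)^n\le e^g$, an equivalent standard route), and finish with $|N|/n=\prod_{j\ne i}n_j\le n^{d-1}$. Your caveat on the last step is also well taken: at $g=n$ the quantity $\log(n/g)$ vanishes while the trace count need not (e.g.\ $\F=2^N$), so reading the rate as $O(g\,n^{d-1}\log(en/g))$ is the right interpretation, which the paper leaves implicit.
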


\begin{proof}[Proof of Corollary~\ref{cor:grid-ssp-rate}]
Choose \(i\) with \(n_i=n\). By \eqref{eq:grid-ssp} and Lemma~\ref{lem:binomle-bound},
\[
\big|\{F \cap N : F \in \F\}\big|
\;\le\;
\Big(\binomle{n}{g}\Big)^{|N|/n}
\;\le\;
\Big(\tfrac{en}{g}\Big)^{g|N|/n}.
\]
Taking \(\log_2\) yields the first inequality in \eqref{eq:grid-ssp-rate}.
Since \(|N|/n=\prod_{j\ne i}n_j\le n^{d-1}\), the second inequality follows.
\end{proof}


\begin{lemma}
\label{lem:binomle-bound}
For all integers \(n\ge 1\) and \(1\le g\le n\),
\[
\binomle{n}{g}
\;=\;
\sum_{j=0}^{g}\binom{n}{j}
\;\le\;
\Big(\tfrac{en}{g}\Big)^{g}.
\]
Moreover, if \(g\ge n\) then \(\binomle{n}{g}=2^n\).
\end{lemma}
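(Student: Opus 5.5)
The plan is to prove the bound $\binomle{n}{g}\le (en/g)^g$ by the standard exponential-moment trick. The second claim is immediate: if $g\ge n$, then every $\binom{n}{j}$ with $j>n$ vanishes, so $\sum_{j=0}^{g}\binom{n}{j}=\sum_{j=0}^{n}\binom{n}{j}=2^n$ by the binomial theorem.

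For the main inequality, fix $1\le g\le n$ and set $t:=g/n\in(0,1]$. Since $t\le 1$, each term with $j\le g$ satisfies $t^{j-g}\ge 1$. Hence
\[
\sum_{j=0}^{g}\binom{n}{j}\;\le\;\sum_{j=0}^{g}\binom{n}{j}\Big(\tfrac{n}{g}\Big)^{g-j}
\;=\;\Big(\tfrac{n}{g}\Big)^{g}\sum_{j=0}^{g}\binom{n}{j}\Big(\tfrac{g}{n}\Big)^{j}.
\]
Next we extend the sum to all $j\le n$ (all terms are nonnegative) and apply the binomial theorem:
\[
\sum_{j=0}^{n}\binom{n}{j}\Big(\tfrac{g}{n}\Big)^{j}=\Big(1+\tfrac{g}{n}\Big)^{n}.
\]
Finally, $1+x\le e^{x}$ gives $(1+g/n)^n\le e^{g}$. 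Combining the three steps yields $\binomle{n}{g}\le (n/g)^g e^g=(en/g)^g$.

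There is no real obstacle here. The only point to check is the direction of the inequality $(n/g)^{g-j}\ge 1$, which needs $g\le n$ and $j\le g$; both hold by hypothesis and by the range of summation. The boundary case $g=n$ is covered as well, since it gives $2^n\le e^n$.
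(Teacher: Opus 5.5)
Your proof is correct and is essentially the paper's argument: use $(n/g)^{g-j}\ge 1$ to factor out $(n/g)^g$, then bound the remaining sum by $e^g$. The only difference is cosmetic: the paper first bounds $\binom{n}{j}\le n^j/j!$ and controls the remainder by the exponential series $\sum_j g^j/j!\le e^g$, whereas you keep the binomial coefficients and use the binomial theorem together with $(1+g/n)^n\le e^g$.
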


\begin{proof}
Fix integers \(n\ge 1\) and \(1\le g\le n\).
For every \(0\le j\le g\),
\[
\binom{n}{j}
=
\frac{n(n-1)\cdots(n-j+1)}{j!}
\;\le\;
\frac{n^j}{j!},
\]
and therefore
\[
\sum_{j=0}^{g}\binom{n}{j}
\;\le\;
\sum_{j=0}^{g}\frac{n^j}{j!}.
\]
Rewrite each term as
\[
\frac{n^j}{j!}
=
\Big(\tfrac{n}{g}\Big)^{j}\cdot \frac{g^{j}}{j!},
\]
so
\[
\sum_{j=0}^{g}\frac{n^j}{j!}
=
\sum_{j=0}^{g}\Big(\tfrac{n}{g}\Big)^{j}\frac{g^{j}}{j!}.
\]
Since \(1\le g\le n\), we have \(\tfrac{n}{g}\ge 1\), hence
\(\big(\tfrac{n}{g}\big)^j \le \big(\tfrac{n}{g}\big)^g\) for all \(0\le j\le g\).
Thus,
\[
\sum_{j=0}^{g}\Big(\tfrac{n}{g}\Big)^{j}\frac{g^{j}}{j!}
\;\le\;
\Big(\tfrac{n}{g}\Big)^{g}\sum_{j=0}^{g}\frac{g^{j}}{j!}
\;\le\;
\Big(\tfrac{n}{g}\Big)^{g}\sum_{j=0}^{\infty}\frac{g^{j}}{j!}
=
\Big(\tfrac{n}{g}\Big)^{g} e^{g}
=
\Big(\tfrac{en}{g}\Big)^{g}.
\]
Finally, if \(g\ge n\), then \(\sum_{j=0}^{g}\binom{n}{j}=\sum_{j=0}^{n}\binom{n}{j}=2^n\).
\end{proof}

\subsubsection{A matching lower bound for the grid SSP lemma}
\label{sec:lower-bound-ssp}

We show that the upper bound in Lemma~\ref{lem:grid-ssp-informal} is essentially
tight, up to constant factors, for every fixed dimension $d$ and linear VC
dimension $g$.
In particular, the dependence on $n^{d-1}\log n$ in the exponent is unavoidable.

Throughout this subsection, let $N=[n]^d$.
Recall that a \emph{$(d-1)$--dimensional permutation} is a subset
$F\subseteq [n]^d$ that contains exactly one point on every axis--parallel line.
Equivalently, for each choice of a coordinate $i\in[d]$ and every fixing of the
remaining $d-1$ coordinates, $F$ contains exactly one point on the corresponding
line.
Let $\F$ denote the family of all $(d-1)$--dimensional permutations of $[n]^d$.

By a result of \citet{keevash2018} (Theorem~1.8), the cardinality of $\F$ satisfies
\begin{equation}
\label{eq:keevash}
|\F|
=
\Bigl(\tfrac{n}{e^{\,d-1}}+o(n)\Bigr)^{n^{d-1}}.
\end{equation}

\begin{lemma}[A lower bound for the grid SSP rate]
\label{lem:grid-ssp-lower}
Fix $d\ge 2$.
Let $N=[n]^d$ and let $\F$ denote the family of all $(d\!-\!1)$--dimensional
permutations of $[n]^d$, i.e.\ subsets $F\subseteq [n]^d$ that intersect every
axis--parallel line in exactly one point.
Fix an integer $g\ge 1$ and define
\[
\G
\;:=\;
\Bigl\{\; \bigcup_{i=1}^r F_i \ :\ 0\le r\le g,\ \ F_1,\ldots,F_r\in\F \Bigr\}.
\]
Then $\LVC(\G)=g$.
Moreover, 
\begin{equation}
\label{eq:lower-bound-logG}
\log_2|\G|
\;\ge\;
g\,n^{d-1}\Bigl(\log_2\!\tfrac{n}{g\,e^{\,d-1}}+o(1)\Bigr).
\end{equation}
In particular, for fixed $d$ and any $g=o(n)$,
\[
\log_2|\G|
\;=\;
\Omega\!\bigl(g\,n^{d-1}\log(n/g)\bigr).
\]
\end{lemma}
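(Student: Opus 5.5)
We split the proof into two parts: first $\LVC(\G)=g$, then the counting bound \eqref{eq:lower-bound-logG}.

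\textbf{Linear VC dimension.} Fix any axis-parallel line $L$ in $[n]^d$. Every $F\in\F$ meets $L$ in exactly one point, so every $G=\bigcup_{i\le r}F_i\in\G$ satisfies $|G\cap L|\le r\le g$. A set of $g+1$ points on $L$ cannot be shattered, since the all-ones labeling would require $g+1$ points of $G$ on $L$. Hence $\LVC(\G)\le g$. (This is only meaningful on the ground set $N$; any point outside $N$ lies in no member of $\G$.)

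For the reverse inequality, take $g\le n$ distinct points $p_1,\dots,p_g$ on one line $L$, which differ only in coordinate $1$. We claim that for every $T\subseteq[g]$ there are $|T|$ permutations whose union meets $\{p_1,\dots,p_g\}$ exactly in $\{p_t:t\in T\}$. Use the cyclic Latin-hypercube permutations
\[
F_c=\Bigl\{x:\textstyle\sum_k x_k\equiv c \pmod n\Bigr\},\qquad c\in\mathbb Z_n .
\]
Each $F_c$ meets every axis-parallel line exactly once. On $L$ the point $p_t$ lies in a unique $F_{c_t}$, and the $c_t$ are distinct because the $p_t$ differ in one coordinate. Then $\bigcup_{t\in T}F_{c_t}$ realizes the labeling $T$, and $T=\emptyset$ corresponds to $r=0$. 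So $\LVC(\G)=g$ whenever $g\le n$, which is the relevant range.

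\textbf{Counting.} The plan is to count unions of $g$ pairwise disjoint permutations via an injective encoding. If $F_1,\dots,F_g$ are disjoint, then $G=\bigcup F_i$ determines the unordered set $\{F_i\}$ only up to decompositions, and each $G$ has at most $|\F|^{g}$ such decompositions. That bound is too weak, so I would argue differently.

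Count ordered $g$-tuples of disjoint permutations greedily. Having chosen $F_1,\dots,F_{i}$, the number of choices for $F_{i+1}$ disjoint from them should be at least $|\F|\,(1-g/n)^{O(n^{d-1})}$. One could try to get this from Keevash's counting of designs with forbidden sets, or more simply by a probabilistic estimate: a random $F\in\F$ avoids a fixed set that meets each line in $\le g$ points with probability about $(1-g/n)^{n^{d-1}}$.

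Then the number of $G$ is at least (ordered tuples)$/$(decompositions per $G$). For the decompositions, a set $G$ with exactly $g$ points on every line is a $g$-fold design, and its decompositions into $g$ permutations are at most the number of permutations inside $G$, raised to the power $g$. By the Keevash/Brègman-type upper bound, a $g$-regular $d$-dimensional structure contains at most $\bigl((g/e^{d-1})(1+o(1))\bigr)^{n^{d-1}}$ permutations. Combining,
\[
\log|\G|\ \ge\ g\,n^{d-1}\Bigl(\log\tfrac{n}{e^{d-1}}-\log\tfrac{g}{e^{d-1}}-O(\tfrac gn)+o(1)\Bigr).
\]
This is $g\,n^{d-1}\log(n/g)$ up to $o(1)$ terms, and it implies \eqref{eq:lower-bound-logG}, which has even more slack ($\log\frac{n}{g e^{d-1}}$).

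The main obstacle is the counting: guaranteeing many permutations disjoint from previously chosen ones, and bounding the number of permutations contained in a $g$-regular set. If the Keevash-based route fails, a cruder fallback still gives the $\Omega$ statement. Partition the first coordinate $[n]$ into $g$ blocks of size $n/g$. Inside each block $B_i\times[n]^{d-1}$, build sets meeting each line in direction $1$ at most once and other lines at most $g$ times. Use $d$-dimensional permutations of $[n/g]^{d-1}\times\cdots$ lifted through product tiling, giving $\bigl((n/g)/e^{d-1}\bigr)^{(n/g)\cdot n^{d-2}\cdot\ldots}$ choices per block, and multiply over the $g$ blocks. The exponent bookkeeping must be done carefully to recover the $g\,n^{d-1}\log(n/g)$ order.
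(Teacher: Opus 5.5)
Your proof of $\LVC(\G)=g$ is fine; the cyclic family $F_c$ is a clean, explicit replacement for the paper's symmetry argument. The gap is in the counting, which rests on two estimates you do not establish.

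The first estimate says that after choosing $i<g$ pairwise disjoint permutations, at least $|\F|(1-g/n)^{O(n^{d-1})}$ members of $\F$ avoid all of them. This is a van der Waerden--type lower bound for the number of $(d-1)$-dimensional permutations inside an essentially arbitrary set meeting each line in $n-i$ points. For $d=2$ it follows from Egorychev--Falikman. For $d\ge 3$, however, it is a substantial counting statement that does not follow from the tools you name: Keevash-type counting requires pseudorandomness (typicality) hypotheses on the host, and a union of previously chosen permutations need not satisfy them. Your heuristic that a ``random $F$ avoids it with probability about $(1-g/n)^{n^{d-1}}$'' is exactly the statement that would need proof.

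The second estimate, at most $((1+o(1))g/e^{d-1})^{n^{d-1}}$ permutations inside a set with $g$ points on every line, is false for fixed $g$. For $g=1$ the set is itself a permutation and contains exactly one, whereas the claimed bound tends to $0$.

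The fallback block construction is not a proof either. As sketched, its sets need not be unions of full permutations, since some lines in directions $j\ne 1$ may be missed entirely. Moreover, for $d\ge 3$ a set with $g$ points on every line need not decompose into $g$ permutations at all.

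The missing observation is that you need neither disjointness nor a sharp internal count. Any $U\in\G$ meets each of the $n^{d-1}$ direction-$1$ lines in at most $g$ points. A permutation $F\subseteq U$ is determined by its single point on each such line, so at most $g^{n^{d-1}}$ members of $\F$ lie inside $U$. Hence the map $\F^g\to\G$, $(F_1,\dots,F_g)\mapsto\bigcup_i F_i$, has fibers of size at most $g^{g n^{d-1}}$, with no disjointness required. This gives $|\G|\ge |\F|^g/g^{gn^{d-1}}$. Taking $\log_2$ and inserting Keevash's $|\F|=(n/e^{d-1}+o(n))^{n^{d-1}}$ yields exactly $g n^{d-1}\bigl(\log_2\frac{n}{g e^{d-1}}+o(1)\bigr)$.

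You were one step away from this. You already bounded the decompositions of $G$ by (number of permutations inside $G$)$^g$. Replacing your Br\`egman-type bound by the trivial $g^{n^{d-1}}$, and counting all of $\F^g$ rather than only disjoint tuples, is precisely the paper's proof.
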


\begin{proof}
\paragraph{$\LVC(\G)=g$.}
Every $F\in\F$ intersects each axis--parallel line in exactly one point.
Hence any union of $r\le g$ members of $\F$ intersects every such line in at
most $r\le g$ points, so $\LVC(\G)\le g$.

For the reverse inequality, fix any axis--parallel line $L\subseteq [n]^d$ and
choose $g$ distinct points $x^{(1)},\ldots,x^{(g)}\in L$.
We claim that $\G|_{L}$ shatters $\{x^{(1)},\ldots,x^{(g)}\}$.
Indeed, for each $j\in[g]$ choose a set $F^{(j)}\in\F$ whose unique point on $L$
is $x^{(j)}$ (this is possible since $\F$ is invariant under independent
permutations of the coordinates, hence contains a permutation through any
prescribed point on $L$).
Then for any subset $S\subseteq \{x^{(1)},\ldots,x^{(g)}\}$, letting
$U_S:=\bigcup_{x^{(j)}\in S} F^{(j)}$ gives $U_S\in\G$ and
$U_S\cap L = S$ (each $F^{(j)}$ contributes exactly one point on $L$).
Thus $\VC(\G|_{L})\ge g$, and taking the supremum over $L$ yields $\LVC(\G)\ge g$.

\paragraph{A counting lower bound on $|\G|$.}
There are $|\F|^g$ ordered $g$--tuples $(F_1,\ldots,F_g)\in\F^g$, each producing
a union $U=\bigcup_{i=1}^g F_i\in\G$.
To lower bound $|\G|$, we upper bound how many $g$--tuples can generate the same
union $U$.

Fix $U\in\G$.
Partition $[n]^d$ into the $n^{d-1}$ axis--parallel lines in (say) direction~$1$.
Since $U$ is a union of $g$ members of $\F$, and each member of $\F$ contributes
exactly one point to each such line, the set $U$ contains at most $g$ points on
every direction~$1$ line.
Therefore, the number of possibilities for a set $F\in\F$ with $F\subseteq U$ is
at most $g^{\,n^{d-1}}$ (on each of the $n^{d-1}$ lines we have at most $g$ choices
for the unique point of $F$).
Consequently, $U$ can be represented as a union of an ordered $g$--tuple from
$\F^g$ in at most $(g^{n^{d-1}})^g = g^{g n^{d-1}}$ ways.
It follows that
\begin{equation}
\label{eq:G-count}
|\G|
\;\ge\;
\frac{|\F|^g}{g^{g n^{d-1}}}.
\end{equation}

\paragraph{Substituting Keevash and taking logs.}
Taking $\log_2$ in \eqref{eq:G-count} gives
\[
\log_2|\G|
\;\ge\;
g\log_2|\F| \;-\; g n^{d-1}\log_2 g.
\]
Using \eqref{eq:keevash},
\[
\log_2|\F|
=
n^{d-1}\log_2\!\Bigl(\tfrac{n}{e^{d-1}}+o(n)\Bigr)
=
n^{d-1}\Bigl(\log_2 n-(d-1)\log_2 e+o(1)\Bigr).
\]
Substituting this and regrouping terms yields
\[
\log_2|\G|
\;\ge\;
g n^{d-1}\Bigl(\log_2 n-\log_2 g-(d-1)\log_2 e+o(1)\Bigr)
=
g n^{d-1}\Bigl(\log_2\tfrac{n}{g e^{d-1}}+o(1)\Bigr),
\]
which is \eqref{eq:lower-bound-logG}.
The final $\Omega(g n^{d-1}\log(n/g))$ statement follows whenever $g=o(n)$.
\end{proof}

\section{Examples}
\subsection{Mixtures of product distributions}\label{sec:proof-mixture}

\begin{restatedproposition}{prop:mixture-box}
Let $P$ be a probability distribution on
\(
\X=\W_1\times\cdots\times\W_d
\)
that can be written as a mixture of at most $k$ product distributions,
\[
P=\sum_{t=1}^k \lambda_t\,
\bigl(\mu^{(t)}_1\otimes\cdots\otimes\mu^{(t)}_d\bigr),
\qquad
\lambda_t\ge 0,\ \sum_{t=1}^k\lambda_t=1.
\]
Then $P$ is uniformly box--continuous with modulus
\[
\beta(\alpha)
\;=\;
\frac{\alpha^d}{(k-1+\alpha)^{\,d-1}}.
\]
\end{restatedproposition}

\begin{proof}
Fix a measurable set $E\subseteq\X$ and define
\[
\phi_t(E)
\;:=\;
(\mu^{(t)}_1\otimes\cdots\otimes\mu^{(t)}_d)(E)
\in[0,1],
\qquad
P(E)=\sum_{t=1}^k \lambda_t\phi_t(E).
\]

Let $P_i=\sum_{t=1}^k\lambda_t\mu^{(t)}_i$ denote the $i$-th marginal of $P$, and
let
\(
P_{\square}:=P_1\otimes\cdots\otimes P_d
\)
be the product of marginals.
Expanding the product of marginals yields
\[
P_{\square}(E)
=
\sum_{t_1,\ldots,t_d=1}^k
\Bigl(\prod_{j=1}^d \lambda_{t_j}\Bigr)
(\mu^{(t_1)}_1\otimes\cdots\otimes\mu^{(t_d)}_d)(E).
\]
All terms are nonnegative, so keeping only the diagonal terms
$t_1=\cdots=t_d=t$ gives
\begin{equation}
\label{eq:mixture-diagonal}
P_{\square}(E)
\;\ge\;
\sum_{t=1}^k \lambda_t^{\,d}\,\phi_t(E).
\end{equation}

Applying Hölder’s inequality with conjugate exponents $d$ and $d/(d-1)$ yields
\[
P(E)
=
\sum_{t=1}^k \lambda_t\phi_t(E)
\;\le\;
\Bigl(\sum_{t=1}^k \lambda_t^{\,d}\phi_t(E)\Bigr)^{1/d}
\Bigl(\sum_{t=1}^k \phi_t(E)\Bigr)^{(d-1)/d}.
\]
Rearranging gives
\begin{equation}
\label{eq:holder-mixture}
\sum_{t=1}^k \lambda_t^{\,d}\phi_t(E)
\;\ge\;
\frac{P(E)^d}{\bigl(\sum_{t=1}^k \phi_t(E)\bigr)^{d-1}}.
\end{equation}

Since $\sum_t\lambda_t=1$ and $0\le\phi_t(E)\le 1$,
\begin{equation}
\label{eq:holder-mixture2}
\sum_{t=1}^k \phi_t(E)
=
\sum_{t=1}^k \lambda_t\phi_t(E)
+
\sum_{t=1}^k (1-\lambda_t)\phi_t(E)
\;\le\;
P(E)+(k-1).
\end{equation}

Combining \eqref{eq:mixture-diagonal}, \eqref{eq:holder-mixture}, and~\eqref{eq:holder-mixture2} we obtain
\[
P_{\square}(E)
\;\ge\;
\frac{P(E)^d}{(k-1+P(E))^{\,d-1}}.
\]
Since the function
\(
x\mapsto x^d/(k-1+x)^{d-1}
\)
is increasing on $(0,\infty)$, replacing $P(E)$ by $\alpha$ completes the proof.
\end{proof}

\begin{lemma}[Lower bound for mixtures]
\label{lem:mixture-unavoidable-symmetric}
Fix integers $d\ge 2$ and $k\ge 2$.
For every $\alpha\in(0,1]$ there exist a finite product space
$\X=\W_1\times\cdots\times\W_d$, a distribution $P$ on $\X$ that is a mixture of
$k$ product distributions, and a measurable set $E\subseteq\X$ such that
\[
P(E)=\alpha
\qquad\text{and}\qquad
P_{\square}(E)=\frac{\alpha^d}{(k-1)^{\,d-1}}.
\]
\end{lemma}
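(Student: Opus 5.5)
The plan is to build an explicit finite example in which every mixture component is a point mass. Take $\W_1=\cdots=\W_d=\{0,1,\ldots,k-1\}$. The intuition comes from the equality analysis in the proof of Proposition~\ref{prop:mixture-box}. The diagonal bound~\eqref{eq:mixture-diagonal} is tight when the off-diagonal product terms $\mu^{(t_1)}_1\otimes\cdots\otimes\mu^{(t_d)}_d$ give zero mass to $E$. The Hölder step~\eqref{eq:holder-mixture} is tight when the components that charge $E$ have equal weights. The slack in~\eqref{eq:holder-mixture2} suggests using only $k-1$ components to carry $E$, with the remaining component placed off $E$.

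Concretely, for $t=1,\ldots,k-1$ let the $t$-th component be the point mass $\delta_t\otimes\cdots\otimes\delta_t$ at the diagonal point $(t,\ldots,t)$, with weight $\lambda_t=\alpha/(k-1)$. Let the $k$-th component be the point mass at $(0,\ldots,0)$ with weight $\lambda_k=1-\alpha$; this weight is $\ge 0$ since $\alpha\le 1$, and when $\alpha=1$ the component is simply vacuous. Point masses are product distributions, so $P$ is a mixture of at most $k$ product distributions. Set
\[
E:=\{(t,\ldots,t): 1\le t\le k-1\}.
\]

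The verification has three steps.
\begin{enumerate}
\item $P(E)=\sum_{t=1}^{k-1}\alpha/(k-1)=\alpha$.
\item Each marginal is $P_i(\{t\})=\alpha/(k-1)$ for $1\le t\le k-1$ and $P_i(\{0\})=1-\alpha$, identically in $i$.
\item Since $P_\square$ is the product of these marginals,
\[
P_\square(E)=\sum_{t=1}^{k-1}\prod_{i=1}^d P_i(\{t\})=(k-1)\Bigl(\tfrac{\alpha}{k-1}\Bigr)^d=\frac{\alpha^d}{(k-1)^{d-1}}.
\]
\end{enumerate}

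There is no real obstacle once the construction is fixed. The only thing to watch is that one component sits outside $E$. That is what lets the remaining $k-1$ components share the mass $\alpha$ equally, instead of spreading the whole unit mass over $k$ points, which would give the weaker value $1/k^{d-1}$ at $\alpha=1$. I will also note that this matches the upper bound $\beta(\alpha)\le\alpha^d/(k-1)^{d-1}$ claimed in the text, so the modulus of Proposition~\ref{prop:mixture-box} is optimal up to replacing $k-1+\alpha$ by $k-1$.
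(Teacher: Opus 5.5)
Your proposal is correct and uses the same construction as the paper: $k-1$ diagonal point masses of weight $\alpha/(k-1)$ forming $E$, plus one extra diagonal atom of weight $1-\alpha$ outside $E$, which gives $P_\square(E)=(k-1)\bigl(\alpha/(k-1)\bigr)^d$. The only difference is labeling: you use the alphabet $\{0,\ldots,k-1\}$ with the extra atom at the origin, while the paper uses $[k]$ with the extra atom at $(k,\ldots,k)$.
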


\begin{proof}
Let $\W_1=\cdots=\W_d=[k]$ and $\X=[k]^d$.
For each $t\in[k]$, let $P^{(t)}$ be the product distribution supported on the
single point $(t,\ldots,t)$.

Define mixture weights by
\[
\lambda_t:=\frac{\alpha}{k-1}\quad\text{for }t=1,\ldots,k-1,
\qquad
\lambda_k:=1-\alpha,
\]
and set $P:=\sum_{t=1}^k \lambda_t P^{(t)}$.
Let
\[
E:=\{(t,\ldots,t):\ t=1,\ldots,k-1\}.
\]
Then, clearly $P(E)=\alpha$.

Each marginal $P_i$ equals the distribution on $[k]$ given by $P_i(t)=\lambda_t$,
and hence
\[
P_{\square}(E)
=
\sum_{t=1}^{k-1} \prod_{i=1}^d P_i(t)
=
\sum_{t=1}^{k-1} \lambda_t^d
=
(k-1)\left(\frac{\alpha}{k-1}\right)^d
=
\frac{\alpha^d}{(k-1)^{\,d-1}}.
\]
\end{proof}
\subsection{Bounded total correlation}\label{sec:proof-bounded-info}

\begin{definition}[Total correlation]
\label{def:total-correlation}
Let $\X=\W_1\times\cdots\times \W_d$ be a product measurable space, and let
$P\in\Delta(\X)$ be a probability distribution.
For each $i\in[d]$, let $P_i$ denote the marginal distribution of $P$ on $\W_i$,
and define the \emph{product of marginals} by
\[
P_{\square}
\;:=\;
P_1\otimes\cdots\otimes P_d .
\]
The \emph{total correlation} (also known as \emph{multi--information}) of $P$ is
defined as
\[
\mathrm{TC}(P)
\;:=\;
\KL\!\left(P\,\middle\|\,P_{\square}\right),
\]
where $\KL(\cdot\|\cdot)$ denotes the Kullback--Leibler divergence.
For $d=2$, the total correlation coincides with the mutual information.
\end{definition}

\begin{restatedproposition}{prop:tc-box}
Fix $C\ge 0$ and let
\[
\Pc_C \;:=\; \{\,P\in\Delta(\X):\ \mathrm{TC}(P)\le C\,\}.
\]
Then the family $\Pc_C$ is uniformly box--continuous with modulus
\[
\beta(\alpha)
\;=\;
\exp\!\left(\frac{-H(\alpha)-C}{\alpha}\right),
\]
where $H(\alpha):=-\alpha\log\alpha-(1-\alpha)\log(1-\alpha)$ is the binary
entropy function.
In particular, for every $P\in\Pc_C$, every measurable set $E\subseteq\X$, and
every $\alpha\in(0,1]$,
\[
P(E)\ge \alpha
\quad\Longrightarrow\quad
P_{\square}(E)\ \ge\ \beta(\alpha).
\]
\end{restatedproposition}

\begin{proof}
Fix $P\in\Pc_C$, a measurable set $E\subseteq\X$, and $\alpha\in(0,1]$ such that
$P(E)\ge \alpha$.
Write
\[
a:=P(E),
\qquad
b:=P_{\square}(E).
\]

Define the measurable map $T:\X\to\{0,1\}$ by
\(
T(x)=\mathbf 1_E(x).
\)
By the data--processing inequality for KL divergence,
\[
\mathrm{TC}(P)
=
\KL(P\,\|\,P_{\square})
\;\ge\;
\KL(P\circ T^{-1}\,\|\,P_{\square}\circ T^{-1}).
\]
The pushforward measures are Bernoulli distributions,
\[
P\circ T^{-1}=\mathrm{Bern}(a),
\qquad
P_{\square}\circ T^{-1}=\mathrm{Bern}(b),
\]
and hence
\[
\mathrm{TC}(P)\ \ge\ d(a\|b),
\]
where
\[
d(a\|b)
=
 a\log\frac{a}{b}
 +(1-a)\log\frac{1-a}{1-b}
\]
is the binary KL divergence.

Since $\mathrm{TC}(P)\le C$, we have $d(a\|b)\le C$.
Expanding,
\[
d(a\|b)
=
-H(a)
-a\log b
-(1-a)\log(1-b),
\]
where $H(a)=-a\log a-(1-a)\log(1-a)$.
Using $\log(1-b)\le 0$, we obtain
\[
d(a\|b)
\;\ge\;
-H(a)-a\log b.
\]
Combining with $d(a\|b)\le C$ yields
\[
\log b
\;\ge\;
-\frac{H(a)+C}{a},
\qquad\text{hence}\qquad
b
\;\ge\;
\exp\!\left(-\frac{H(a)+C}{a}\right).
\]

Define $\phi(x):=\frac{H(x)+C}{x}$ for $x\in(0,1]$.
A direct computation using
$H'(x)=\log\!\bigl(\frac{1-x}{x}\bigr)$ shows that
\[
\phi'(x)
=
\frac{\log(1-x)-C}{x^2}
\;\le\;0,
\]
so $\phi$ is nonincreasing.
Since $a\ge\alpha$, it follows that
\[
\exp\!\left(-\frac{H(a)+C}{a}\right)
\;\ge\;
\exp\!\left(-\frac{H(\alpha)+C}{\alpha}\right)
=
\beta(\alpha).
\]
Therefore $P_{\square}(E)\ge\beta(\alpha)$, as claimed.
\end{proof}

\subsection{Proofs for the permutation-matrix example}\label{sec:proof-permutation}
We collect here the proofs of the claims stated in
Example~\ref{ex:perm-comparison}.
Throughout, let $n\ge 4$, let $\X=[n]\times[n]$, and let
$\F=\{F_\pi:\pi\in S_n\}$ denote the class of permutation graphs,
where
\[
F_\pi := \{(i,\pi(i)):\ i\in[n]\}.
\]

\paragraph{Linear VC dimension.}
We verify that $\LVC(\F)=1$.
Fix an axis--parallel line $L$.

If $L$ is of the form $\{i\}\times[n]$, then for every permutation
$\pi\in S_n$,
\[
F_\pi\cap L = \{(i,\pi(i))\},
\]
which consists of exactly one point.
Hence the induced one--dimensional class $\F|_L$ contains exactly one point from
$L$ for each $\pi$ and therefore has VC dimension equal to $1$.

The same argument applies to lines of the form $[n]\times\{j\}$.
Taking the supremum over all axis--parallel lines yields
\[
\LVC(\F)=1.
\]

\begin{lemma}[Permutation graphs: empirical vs.\ product estimators]
Let $\X=[n]\times[n]$ and $\F=\{F_\pi:\pi\in S_n\}$, and let
$P:=\Unif([n])\otimes\Unif([n])$.
Let $S\sim P^{m}$, let $\widehat P_{\mathrm{emp}}$ denote the empirical
distribution on $\X$, and let $\tilde P_{\prod}$ be the empirical
product-of-marginals estimator.
Then:
\begin{enumerate}
\item If $m\le \tfrac12\sqrt n$, then
\[
\Pr\Bigl[
\sup_{F\in\F}\bigl|\widehat P_{\mathrm{emp}}(F)-P(F)\bigr|\ge \tfrac34
\Bigr]
\ \ge\ \tfrac34.
\]
\item There exists a universal constant $C>0$ such that for every
$\eps,\delta\in(0,1)$, if
\[
 m\ \ge\ C\,\frac{1}{\eps^2}\Bigl(1+\log\tfrac{1}{\delta}\Bigr),
\]
then
\[
\Pr\Bigl[
\sup_{F\in\F}\bigl|\tilde P_{\prod}(F)-P(F)\bigr|\le \eps
\Bigr]
\ \ge\ 1-\delta.
\]
\end{enumerate}
\end{lemma}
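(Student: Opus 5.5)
For the first part, I would use a birthday-paradox argument and then exhibit a permutation passing through every sampled point. Write the sample as $S=((r_1,c_1),\ldots,(r_m,c_m))$, where the rows $r_t$ and columns $c_t$ are independent and uniform on $[n]$. Let $\mathcal E$ be the event that the rows are pairwise distinct and the columns are pairwise distinct. A union bound over the $\binom m2$ pairs, applied separately to rows and to columns, gives
\[
\Pr[\mathcal E^c]\le 2\binom m2\frac1n\le \frac{m^2}{n}\le \frac14
\]
whenever $m\le\frac12\sqrt n$. On $\mathcal E$, the map $r_t\mapsto c_t$ is a partial injection from $[n]$ to $[n]$. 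Any partial injection extends to a full permutation $\pi\in S_n$: biject the unused rows with the unused columns arbitrarily. Then every sample point lies in $F_\pi$, so $\widehat P_{\mathrm{emp}}(F_\pi)=1$. On the other hand, $P(F_\pi)=n/n^2=1/n\le\frac14$ since $n\ge4$. The deviation is therefore at least $\frac34$ with probability at least $\frac34$.

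For the second part, I would invoke Remark~\ref{rem:product-high-prob}, which combines Lemma~\ref{lem:prod-vc-d} with the concentration bound of Lemma~\ref{lem:mcdiarmid-prod-sup}. Here $P$ is a product distribution, $d=2$, and $\LVC(\F)=1$, as verified just above. That remark assumes $\F$ is closed under complementation, so I would apply it to $\F\cup\{\X\setminus F : F\in\F\}$. Its linear VC dimension is still a constant: on any line, the complements of the sections are complements of single points, and the union of the singleton class and the co-singleton class has VC dimension at most $2$. The remark then gives
\[
m\ge C\cdot\frac{4}{\eps^2}\Bigl(O(1)+\log\tfrac1\delta\Bigr),
\]
which is of the stated form $C\eps^{-2}(1+\log(1/\delta))$ after adjusting the constant.

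Alternatively, I could avoid the complementation step and apply the two lemmas directly to both signs. Lemma~\ref{lem:prod-vc-d} bounds $\E\sup_F(\tilde P(F)-P(F))$, and the mirrored argument (the VC inequality holds for both signs) bounds $\E\sup_F(P(F)-\tilde P(F))$. McDiarmid's inequality applies to each one-sided supremum with difference constants $d/m=2/m$, and a union bound over the two sides finishes.

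The first part is elementary. The only point needing care is the second: checking that the constants from the earlier lemmas stay universal once $d=2$ and $g=O(1)$ are substituted.
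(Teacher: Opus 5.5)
Your proposal is correct and follows the paper's proof. Part 1 uses the same birthday bound $\Pr[\mathcal E^c]\le m^2/n$ and extends the partial matching to a permutation, and part 2 appeals to Remark~\ref{rem:product-high-prob} with $d=2$ and $\LVC(\F)=1$; your handling of the remark's complementation hypothesis is a step the paper skips. One small slip: on a line with $n\ge 4$ points, singletons together with co-singletons shatter any $3$ points, so the augmented class has linear VC dimension $3$, not at most $2$. This does no harm, since only a constant bound is needed.
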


\begin{proof}
Let
$S=((I_1,J_1),\ldots,(I_m,J_m))\sim P^{m}$
and denote by
$\widehat P_{\mathrm{emp}}$ the empirical distribution on $\X$.

Define the event
\[
\mathcal E
:=\Bigl\{ I_1,\ldots,I_m \text{ are all distinct and }
          J_1,\ldots,J_m \text{ are all distinct} \Bigr\}.
\]
By a standard birthday bound and a union bound,
\[
\Pr(\mathcal E^c)
\le
2\binom{m}{2}\frac1n
\le \frac{m^2}{n}.
\]
Hence, if $m\le \tfrac12\sqrt n$, then $\Pr(\mathcal E)\ge \tfrac34$.

On the event $\mathcal E$, the sample points form a partial matching in the
complete bipartite graph $[n]\times[n]$.
By Hall’s theorem, such a partial matching can always be extended to a perfect
matching.
Equivalently, there exists a permutation $\pi^*\in S_n$ such that
$(I_t,J_t)\in F_{\pi^*}$ for all $t=1,\ldots,m$.
Consequently,
\[
\widehat P_{\mathrm{emp}}(F_{\pi^*})=1.
\]

On the other hand, for every $\pi\in S_n$,
\[
P(F_\pi)=\sum_{i=1}^n P\bigl((i,\pi(i))\bigr)=\frac1n.
\]
Therefore, on the event $\mathcal E$,
\[
\sup_{F\in\F}\bigl|\widehat P_{\mathrm{emp}}(F)-P(F)\bigr|
\ge
1-\frac1n
\ge \frac34,
\]
where we used $n\ge 4$.
This proves the first claim.

Let $\tilde P_1$ and $\tilde P_2$ denote the empirical marginals of the sample on
$[n]$, and define
\(
\tilde P_{\prod} := \tilde P_1\otimes \tilde P_2.
\)
Since $P$ is a product distribution and $\LVC(\F)=1$,
Remark~\ref{rem:product-high-prob} yields the stated bound for
$\tilde P_{\prod}$.
\end{proof}
\subsection{Infinite product spaces}
\label{sec:infinite-product}

This section provides the technical details underlying the discussion in
Section~\ref{subsubsec:infinite-product}.

\begin{lemma}[Finite $\LVC$ does not guarantee uniform estimability on $\{0,1\}^{\mathbb N}$]
\label{lem:infinite-product-nonestimable}
Let $\X:=\{0,1\}^{\mathbb N}$ equipped with the product $\sigma$--algebra, and let
$\Pc_{\mathrm{prod}}$ be the family of all product probability measures on $\X$.
Define the class of finite--cylinder sets
\[
\F_\infty
\;:=\;
\bigcup_{d\ge 1}
\Bigl\{
  \{x\in\X:\ (x_1,\ldots,x_d)\in A\}
  \;:\;
  A\subseteq \{0,1\}^d
\Bigr\}.
\]
Then:
\begin{enumerate}
\item $\LVC(\F_\infty)=2$.
\item The pair $(\F_\infty,\Pc_{\mathrm{prod}})$ is not uniformly estimable.
More precisely, there exists a universal constant $c>0$ such that for every
$\eps\in(0,1/10)$ and $\delta\in(0,1/3)$, any estimator $\widehat P$ satisfying
\[
\Pr_{S\sim P^{\otimes n}}
\Bigl[
\sup_{F\in\F_\infty}|\widehat P(F)-P(F)| \le \eps
\Bigr]\ \ge\ 1-\delta
\qquad\text{for all }P\in\Pc_{\mathrm{prod}}
\]
must use
\[
n \ \ge\ c\,\frac{d}{\eps}
\]
for arbitrarily large $d$.
\end{enumerate}
\end{lemma}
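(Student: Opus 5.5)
For part 1, I would compute the restriction of $\F_\infty$ to an arbitrary axis--parallel line in $\{0,1\}^{\mathbb N}$. Such a line $L$ is obtained by fixing all coordinates except one, say coordinate $i$. It therefore has exactly two points $x^{0},x^{1}$, which differ only in coordinate $i$. This gives $\LVC(\F_\infty)\le 2$ immediately.

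For the matching lower bound, I take $d\ge i$ and consider the cylinder sets over $\{0,1\}^d$. Every $A\subseteq\{0,1\}^d$ defines a member of $\F_\infty$. The two prefixes $(x^0_1,\dots,x^0_d)$ and $(x^1_1,\dots,x^1_d)$ are distinct, so choosing $A$ to contain any prescribed subset of them realizes all four labelings of $L$. Hence $L$ is shattered and $\LVC(\F_\infty)=2$.

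For part 2, I would reduce to Example~\ref{ex:d-dependence} and Proposition~\ref{prop:Omega-d-product}. Fix $d$. Consider the product measures on $\{0,1\}^{\mathbb N}$ whose first $d$ coordinates are $\Ber(\tfrac12+\theta_i\nu)$ with $\theta\in\{\pm1\}^d$ and $\nu=\Theta(\sqrt{\eps/d})$, and whose remaining coordinates are some fixed distribution, say $\delta_0$. These measures lie in $\Pc_{\mathrm{prod}}$.

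Restricted to cylinders over the first $d$ coordinates, $\F_\infty$ contains every event determined by $(x_1,\dots,x_d)$. So $\sup_{F\in\F_\infty}|\widehat P(F)-P(F)|$ is at least the total variation distance between the true projected law on $\{0,1\}^d$ and the set function $A\mapsto\widehat P(\text{cyl}(A))$. By the data-processing argument below, this set function can be taken to depend only on the projected sample.

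Concretely, a uniformly $\eps$-accurate estimator on $\F_\infty$ yields an $\eps$-accurate estimator in TV on $\{0,1\}^d$ for the family of Example~\ref{ex:d-dependence}. The sample from the infinite product is simulable from the projected sample by appending the deterministic tail, so information is not lost. If $\widehat P$ restricted to cylinders is not itself a probability measure, I would project it to the nearest distribution on $\{0,1\}^d$ in sup-over-sets distance. This at most doubles the error, which I absorb into constants.

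Proposition~\ref{prop:Omega-d-product} then forces $n\ge c\,d/\eps$. Since $d$ is arbitrary, no finite sample size works, and the pair is not uniformly estimable.

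The main obstacle is the lower bound of Proposition~\ref{prop:Omega-d-product} itself, if it must be reproved. I would use Assouad's lemma over the hypercube $\theta\in\{\pm1\}^d$. There are two ingredients to check.

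First, the separation condition. I would verify it via a Hellinger/TV comparison of products of Bernoullis: flipping $k$ signs changes the TV distance by order $\nu\sqrt{k}$ in the relevant regime, giving a TV loss of order $\sum_i\nu\,\mathbf 1[\hat\theta_i\neq\theta_i]/\sqrt d$ up to constants. This yields $\Omega(\eps)$ error unless a constant fraction of the coordinates is recovered.

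Second, the indistinguishability condition. Neighbouring $\theta$'s give sample laws with KL divergence $O(n\nu^2)=O(n\eps/d)$. This is $O(1)$ unless $n\gtrsim d/\eps$.

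The delicate point is making the per-coordinate TV contribution additive enough for Assouad. I would handle this by lower bounding TV through Hellinger distance, which tensorizes exactly.
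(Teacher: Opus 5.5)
Your proposal is correct and is essentially the paper's proof. You use two-point axis-parallel lines shattered by cylinders to get $\LVC(\F_\infty)=2$, and you embed the family of Proposition~\ref{prop:Omega-d-product} into $\Pc_{\mathrm{prod}}$ through cylinders on the first $d$ coordinates to force $n\ge c\,d/\eps$ for every large $d$; your point-mass tail (instead of the paper's $\Ber(1/2)$ tail) makes simulating the full sample from the projected one deterministic, which is slightly cleaner. Two minor remarks. First, the projection to a genuine distribution is unnecessary: the decoding step in the proposition only uses the triangle inequality for $\sup_A|\widehat P(A)-Q(A)|$, which holds for arbitrary set functions, and the projection's factor-$2$ loss could push $2\eps$ outside the proposition's range. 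Second, in your optional Assouad reproof, the Hellinger bound $\TV\ge H^2/2$ gives separation of order $k\nu^2$ rather than $\nu\sqrt{k}$, and that is precisely the additive form Assouad needs for the $d/\eps$ rate.
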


\begin{proof}
\noindent\emph{Item 1.}
Every axis--parallel line in $\X=\{0,1\}^{\mathbb N}$ consists of exactly two
points, hence $\VC(\G|_L)\le 2$ for any class $\G\subseteq 2^\X$.
Since $\F_\infty$ contains the coordinate cylinder $\{x:x_j=1\}$ for every $j$,
it shatters every such line.
Therefore $\LVC(\F_\infty)=2$.

\medskip
\noindent\emph{Item 2.}
Fix $\eps\in(0,1/10)$ and $\delta\in(0,1/3)$, and suppose for contradiction that
there exist an estimator $\widehat P$ and a finite $n$ such that for every
$P\in\Pc_{\mathrm{prod}}$,
\[
\Pr_{S\sim P^{\otimes n}}
\Bigl[
\sup_{F\in\F_\infty}|\widehat P(F)-P(F)| \le \eps
\Bigr]\ \ge\ 1-\delta.
\]

Let $d\ge 1$ be arbitrary.
For $\theta\in\{\pm1\}^d$, define the product measure
\[
\overline P_\theta
\;:=\;
P_\theta \ \otimes\ \bigotimes_{i>d}\Ber(1/2),
\]
where $P_\theta$ is the $d$--dimensional product distribution from
Proposition~\ref{prop:Omega-d-product}.
Then $\overline P_\theta\in\Pc_{\mathrm{prod}}$.

Consider the subclass of $\F_\infty$ consisting of cylinder sets depending only
on the first $d$ coordinates:
for each $A\subseteq\{0,1\}^d$, define
\[
F_A := \{x\in\X:\ (x_1,\ldots,x_d)\in A\}.
\]
For such sets,
\[
\overline P_\theta(F_A)=P_\theta(A).
\]
Hence the assumed guarantee implies that, for every $\theta$, with probability
at least $1-\delta$,
\[
\sup_{A\subseteq\{0,1\}^d}
\bigl|\widehat P(F_A)-P_\theta(A)\bigr|
\;\le\;\eps.
\]

This yields an estimator achieving uniform $\eps$--accuracy over
$\{0,1\}^d$ under the family $\{P_\theta\}$ with $n$ samples.
By Proposition~\ref{prop:Omega-d-product}, this requires
\[
n \;\ge\; c\,\frac{d}{\eps}.
\]
Since $d$ is arbitrary, no finite sample size can satisfy the assumed uniform
guarantee.
This contradiction completes the proof.
\end{proof}

\subsubsection{Trivial uniform estimation for some VC classes}
We exhibit a class of events with infinite VC dimension for which
\emph{uniform estimation is trivial} under product measures.
While infinite VC dimension can coexist with uniform estimability even in
finite settings, the triviality of estimation (a single sample suffices with
zero error) is a phenomenon specific to infinite product spaces.

\medskip

Let
\[
\X := \{0,1\}^{\mathbb{N}}
\]
be the infinite Boolean cube equipped with its canonical product
$\sigma$--algebra.
Let $\Pc_{\mathrm{prod}}$ denote the family of all product probability measures
on $\X$,
\[
\Pc_{\mathrm{prod}}
=
\left\{
  P = \bigotimes_{i=1}^{\infty} P_i
  \;:\;
  P_i \text{ is a probability measure on } \{0,1\}
\right\}.
\]
For $x\in\X$, write $x_i$ for its $i$-th coordinate.

\medskip

For $\alpha\in[0,1]$, define the tail event
\[
E_\alpha
:=
\left\{
  x\in\X :
  \lim_{n\to\infty}\frac{1}{n}\sum_{i=1}^n x_i = \alpha
\right\},
\]
where the limit is understood whenever it exists.
Each $E_\alpha$ is measurable and invariant under modifications of finitely many
coordinates, and hence is a tail event in the sense of Kolmogorov's $0$-$1$ law.

We now define a class of sets built from these tail events.
Let
\[
\F
:=
\Bigl\{
  F_A := \bigcup_{\alpha\in A} E_\alpha
  \;:\;
  A \subseteq [0,1]\cap\mathbb{Q} \text{ finite}
\Bigr\}.
\]
Note that $\F$ is countable.

\begin{lemma}
\label{lem:tail-vc-infinite}
The class $\F$ has infinite VC dimension.
\end{lemma}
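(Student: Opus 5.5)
To show $\VC(\F)=\infty$, we fix an arbitrary $k\in\N$ and exhibit $k$ points of $\X$ shattered by $\F$. The natural choice is $k$ sequences with distinct rational limiting frequencies. Choose distinct rationals $\alpha_1,\ldots,\alpha_k\in[0,1]$ (for instance $\alpha_j=j/(k+1)$). For each $j$ we construct a point $x^{(j)}\in\X$ with
\[
\lim_{n\to\infty}\tfrac1n\textstyle\sum_{i=1}^n x^{(j)}_i=\alpha_j .
\]
Any such sequence will do. For $\alpha=p/q$, the periodic sequence with $p$ ones followed by $q-p$ zeros repeated has limiting frequency exactly $p/q$. Alternatively, set $x_i=\lfloor i\alpha\rfloor-\lfloor (i-1)\alpha\rfloor$; this gives partial sums $\lfloor n\alpha\rfloor$, hence frequency $\alpha$.

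By definition, $x^{(j)}\in E_{\alpha_j}$. Because a limit is unique when it exists, $x^{(j)}\notin E_\alpha$ for every $\alpha\neq\alpha_j$. The sets $E_\alpha$ are therefore pairwise disjoint, and each $x^{(j)}$ belongs to exactly one of them.

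Now take any labeling, encoded by a subset $B\subseteq[k]$. Let $A:=\{\alpha_j: j\in B\}$; this is a finite set of rationals in $[0,1]$, so $F_A\in\F$. Using disjointness, $x^{(j)}\in F_A$ holds if and only if $\alpha_j\in A$, which holds if and only if $j\in B$. When $B=\emptyset$ we take $A=\emptyset$, so $F_A=\emptyset$, which is realized by the definition since finite sets include the empty set. Hence $\{x^{(1)},\ldots,x^{(k)}\}$ is shattered. Since $k$ was arbitrary, $\VC(\F)=\infty$.

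No real obstacle arises. The only points needing care are the explicit construction of a sequence with a prescribed rational frequency and the observation that the $E_\alpha$ are disjoint; both are immediate.
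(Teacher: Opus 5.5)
Your proof is correct and follows the paper's argument: you pick $k$ distinct rationals, take one point in each $E_{\alpha_j}$, and use pairwise disjointness of the $E_\alpha$ to realize every labeling $B$ by $F_A$ with $A=\{\alpha_j : j\in B\}$. Your explicit sequence with frequency $\alpha_j$ (e.g.\ $x_i=\lfloor i\alpha\rfloor-\lfloor (i-1)\alpha\rfloor$) supplies the nonemptiness of $E_{\alpha_j}$, which the paper asserts without proof.
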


\begin{proof}
Fix $n\in\mathbb{N}$ and choose distinct rationals
$\alpha_1,\ldots,\alpha_n\in(0,1)$.
For each $i\in[n]$, select an arbitrary point
\(
x^{(i)} \in E_{\alpha_i},
\)
which is possible since each $E_{\alpha_i}$ is nonempty.

Let $X := \{x^{(1)},\ldots,x^{(n)}\}\subseteq\X$.
For every subset $S\subseteq[n]$, define
\(
A_S := \{\alpha_i : i\in S\}
\)
and consider the set $F_{A_S}\in\F$.

Since the events $\{E_{\alpha_i}\}_{i=1}^n$ are pairwise disjoint, we have
\[
x^{(i)} \in F_{A_S}
\quad\Longleftrightarrow\quad
\alpha_i\in A_S
\quad\Longleftrightarrow\quad
i\in S.
\]
Thus, for every labeling of the points in $X$, there exists a set
$F_{A_S}\in\F$ that realizes it.
Hence $X$ is shattered by $\F$.
Since $n$ was arbitrary, $\VC(\F)=\infty$.
\end{proof}

\begin{lemma}
\label{lem:tail-01}
For every $P\in\Pc_{\mathrm{prod}}$ and every $\alpha\in[0,1]$,
\[
P(E_\alpha)\in\{0,1\}.
\]
\end{lemma}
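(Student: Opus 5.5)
The plan is to deduce the statement from Kolmogorov's $0$-$1$ law. That requires two things: that $E_\alpha$ is measurable with respect to the product $\sigma$--algebra, and that it belongs to the tail $\sigma$--algebra $\mathcal{T}:=\bigcap_{k\ge1}\sigma(x_k,x_{k+1},\ldots)$. Under any $P\in\Pc_{\mathrm{prod}}$ the coordinate maps $x\mapsto x_i$ are independent random variables, since $P=\bigotimes_i P_i$. Kolmogorov's law then says every event in $\mathcal{T}$ has probability $0$ or $1$, which is exactly $P(E_\alpha)\in\{0,1\}$.

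\textbf{Measurability.} Write $S_n(x):=\frac1n\sum_{i=1}^n x_i$. Each $S_n$ depends on finitely many coordinates, so it is measurable. I would express $E_\alpha$ in the countable form
\[
E_\alpha=\bigcap_{q\in\mathbb{Q}_{>0}}\ \bigcup_{N\in\N}\ \bigcap_{n\ge N}\bigl\{x:\ |S_n(x)-\alpha|\le q\bigr\}.
\]
This is a countable combination of measurable sets, hence measurable.

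\textbf{Tail property.} Fix $k$ and set $S_n^{(k)}(x):=\frac1n\sum_{i=k}^{n}x_i$ for $n\ge k$. Since each $x_i\in\{0,1\}$, we get $|S_n(x)-S_n^{(k)}(x)|\le (k-1)/n\to0$. Therefore $\lim_n S_n(x)=\alpha$ if and only if $\lim_n S_n^{(k)}(x)=\alpha$. Applying the same countable representation to $S_n^{(k)}$, whose terms are functions of $(x_k,x_{k+1},\ldots)$ only, shows $E_\alpha\in\sigma(x_k,x_{k+1},\ldots)$. As $k$ was arbitrary, $E_\alpha\in\mathcal{T}$.

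With both facts in hand, Kolmogorov's $0$-$1$ law applies directly and gives the lemma. The main point requiring care is the second step: the informal remark that $E_\alpha$ is ``invariant under finite modifications'' is not by itself a proof of tail-measurability. Writing $E_\alpha$ explicitly in terms of the shifted averages $S_n^{(k)}$ closes that gap. Nothing else is delicate. The argument also never uses that the marginals $P_i$ are identical; independence of the coordinates is all Kolmogorov's law needs.
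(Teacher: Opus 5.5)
Your proposal is correct and follows the same route as the paper: show that $E_\alpha$ is a tail event, then apply Kolmogorov's $0$--$1$ law to the coordinates, which are independent under a product measure. You make explicit what the paper leaves informal (measurability via the countable representation, and membership in $\bigcap_k\sigma(x_k,x_{k+1},\ldots)$ via the shifted averages $S_n^{(k)}$); note, though, that for a set already known to be measurable, invariance under finite modifications does imply tail-measurability, since such a set has the form $\{0,1\}^{k-1}\times B$ with $B$ a measurable section, so the paper's shortcut is also sound.
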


\begin{proof}
Recall that a measurable set $A\subseteq\X=\{0,1\}^{\mathbb N}$ is called a
\emph{tail event} if its membership is invariant under modifications of finitely
many coordinates, i.e., if for every $x,y\in\X$,
\[
x_i=y_i \text{ for all but finitely many } i
\quad\Longrightarrow\quad
\mathbf 1_A(x)=\mathbf 1_A(y).
\]

Fix $\alpha\in[0,1]$.
The event
\[
E_\alpha
=
\left\{
x\in\X :
\lim_{n\to\infty}\frac{1}{n}\sum_{i=1}^n x_i = \alpha
\right\}
\]
depends only on the asymptotic frequency of the coordinates.
Changing finitely many coordinates of $x$ does not affect this limit (when it
exists), and therefore $E_\alpha$ is a tail event.

Since $P\in\Pc_{\mathrm{prod}}$ is a product probability measure on $\X$,
Kolmogorov’s $0$--$1$ law applies.
By this law, every tail event under a product measure has probability either
$0$ or $1$ (see, e.g., \cite{shiryaev1996} or
\cite{durrett2019}).
Hence $P(E_\alpha)\in\{0,1\}$, as claimed.
\end{proof}

\begin{proposition}
\label{prop:tail-trivial-estimation}
For every $P\in\Pc_{\mathrm{prod}}$ and every $F\in\F$, we have
\[
P(F)\in\{0,1\}.
\]
Moreover, uniform estimation over $\F$ under $\Pc_{\mathrm{prod}}$ is trivial:
with probability one over a single draw $X\sim P$,
\[
\sup_{F\in\F}\bigl|\widehat P_1(F)-P(F)\bigr| = 0,
\]
where $\widehat P_1(F):=\mathbf 1\{X\in F\}$.
\end{proposition}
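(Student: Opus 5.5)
The plan is to reduce both claims to Lemma~\ref{lem:tail-01} by way of the pairwise disjointness of the events $E_\alpha$. First I will check that every $F\in\F$ has probability $0$ or $1$. Write $F=F_A=\bigcup_{\alpha\in A}E_\alpha$ with $A$ a finite set of rationals. The events $E_\alpha$ for distinct $\alpha$ are pairwise disjoint, because a sequence has at most one limiting frequency. Hence $P(F_A)=\sum_{\alpha\in A}P(E_\alpha)$. By Lemma~\ref{lem:tail-01} each summand lies in $\{0,1\}$, and the sum is at most $1$. So $P(F_A)\in\{0,1\}$, and $P(F_A)=1$ exactly when some $\alpha\in A$ has $P(E_\alpha)=1$. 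As an alternative route, one can note that $F_A$ is itself a tail event (a finite union of tail events) and apply Kolmogorov's $0$--$1$ law directly.

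Next I will show that a single draw gets every value right simultaneously. For each $F\in\F$, let $B_F$ be the bad event on which $\mathbf 1\{X\in F\}\neq P(F)$. If $P(F)=1$, then $B_F=\{X\notin F\}$, which has probability $0$. If $P(F)=0$, then $B_F=\{X\in F\}$, which also has probability $0$. So each $B_F$ is a null event.

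The step that needs care is getting uniformity over the whole class, since a union of null events over an uncountable index set need not be null. Here this is handled by the fact that $\F$ is countable, as noted in its definition: it is indexed by finite subsets of $[0,1]\cap\mathbb Q$. Countable subadditivity then gives $\Pr\bigl(\bigcup_{F\in\F}B_F\bigr)=0$. On the complement of this union, $\widehat P_1(F)=P(F)$ for every $F\in\F$, so the supremum of the errors is exactly $0$ with probability one.

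It is also worth recording a more structural way to see the same conclusion. By the disjointness above, there is at most one rational $\alpha^*$ with $P(E_{\alpha^*})=1$. Outside a null set, $X\in E_{\alpha^*}$ if such an $\alpha^*$ exists, and otherwise $X\notin\bigcup_{\alpha\in\mathbb Q}E_\alpha$ (a countable union of null sets). In both cases membership of $X$ in every $F_A$ is decided correctly at once, which explains why $m(\eps,\delta)=1$ suffices.
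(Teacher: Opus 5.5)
Your proposal is correct and follows essentially the same route as the paper: disjointness of the $E_\alpha$ together with Lemma~\ref{lem:tail-01} gives $P(F_A)\in\{0,1\}$, and a single draw then decides membership in every $F_A$ correctly. Your explicit appeal to the countability of $\F$ (or, in your structural version, to the null countable union $\bigcup_{\alpha\in\mathbb Q\cap[0,1]}E_\alpha$) is what makes the simultaneous almost-sure correctness rigorous. The paper only gestures at this step when it asserts that the membership of $X$ determines $P(F)$ for every $F$.
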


\begin{proof}
Let $F=F_A$ for some finite $A\subseteq[0,1]\cap\mathbb{Q}$.
Since the events $\{E_\alpha\}_{\alpha\in A}$ are disjoint,
\[
P(F) = \sum_{\alpha\in A} P(E_\alpha).
\]
By Lemma~\ref{lem:tail-01}, each summand belongs to $\{0,1\}$, and hence so does
$P(F)$.

Now fix $X\sim P$.
Almost surely, $X$ belongs to at most one of the sets $E_\alpha$.
Consequently, for every $F\in\F$, the value $P(F)$ is determined exactly by the
membership of $X$ in $F$.
Thus $\widehat P_1(F)=P(F)$ for all $F\in\F$ almost surely.
\end{proof}

\medskip

Proposition~\ref{prop:tail-trivial-estimation} shows that, in infinite product
spaces, strong independence can render uniform estimation \emph{trivial} even
for classes with infinite VC dimension.
The novelty of this example lies not merely in the coexistence of infinite VC
dimension and estimability, but in the fact that \emph{exact} estimation is
possible from a single sample with probability one.

This phenomenon is inherently infinite-dimensional and has no analogue in
finite product spaces.
In particular, it demonstrates that neither VC dimension nor linear VC
dimension can characterize uniform estimability or its rates in the
infinite-product regime.
\subsection{Example: Necessity of d dependence for lower bound}
\begin{lemma}[Fano via KL from the mean distribution]
\label{lem:fano-kl-mean}
Let $M\ge 2$ and let $V\sim\Unif([M])$.
For each $v\in[M]$ let $P_v$ be a distribution on $\Y$, and draw
$Y\mid(V=v)\sim P_v$.
Let $\widehat V=f(Y)$ be any estimator and set $P_e:=\Pr(\widehat V\neq V)$.
Define the mean (mixture) distribution
\[
\bar P \ :=\ \frac1M\sum_{v=1}^M P_v.
\]
Then
\begin{equation}
\log M - \frac1M\sum_{v=1}^M \KL(P_v\|\bar P)
\ \le\ h(P_e)+P_e\log(M-1),
\label{eq:fano-kl-mean-tight}
\end{equation}
where $h(p)=-p\log p-(1-p)\log(1-p)$ is the binary entropy.

In particular,
\begin{equation}
P_e
\ \ge\
1-\frac{\frac1M\sum_{v=1}^M \KL(P_v\|\bar P)+\log 2}{\log M}.
\label{eq:fano-kl-mean-simplified}
\end{equation}
\end{lemma}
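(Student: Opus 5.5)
The plan is to combine the classical Fano inequality, applied to the Markov chain $V\to Y\to\widehat V$, with the "golden formula" identity for mutual information. Let $(V,Y)$ have the joint law $\Pr(V=v)=1/M$, $Y\mid(V=v)\sim P_v$. Then the marginal law of $Y$ is exactly the mixture $\bar P$.

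Step 1: compute $I(V;Y)$ by the chain rule for KL. Since $P_{VY}=\Unif([M])\otimes P_{Y\mid V}$ and $P_V\otimes P_Y=\Unif([M])\otimes\bar P$, we get
\[
I(V;Y)=\KL(P_{VY}\,\|\,P_V\otimes P_Y)=\frac1M\sum_{v=1}^M \KL(P_v\,\|\,\bar P).
\]
Each $P_v\ll \bar P$ because $P_v\le M\bar P$, so every term is finite. This avoids needing densities on a general $\Y$.

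Step 2: apply data processing to $V\to Y\to\widehat V=f(Y)$, which gives $I(V;\widehat V)\le I(V;Y)$. Now $V$ and $\widehat V$ are both discrete with values in $[M]$ (we may assume $f$ takes values in $[M]$; otherwise we project outside values to an arbitrary fixed label, which can only decrease the error). Hence $I(V;\widehat V)=H(V)-H(V\mid\widehat V)=\log M-H(V\mid\widehat V)$.

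Step 3: use the standard discrete Fano bound $H(V\mid \widehat V)\le h(P_e)+P_e\log(M-1)$. It is proved by introducing $E=\mathbf 1\{\widehat V\ne V\}$ and expanding $H(E,V\mid\widehat V)$ in two ways: $H(E\mid V,\widehat V)=0$, $H(E\mid\widehat V)\le h(P_e)$, and $H(V\mid E,\widehat V)\le P_e\log(M-1)$. Combining the three steps,
\[
\log M-\frac1M\sum_v\KL(P_v\|\bar P)\le H(V\mid\widehat V)\le h(P_e)+P_e\log(M-1),
\]
which is \eqref{eq:fano-kl-mean-tight}.

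For \eqref{eq:fano-kl-mean-simplified}, bound the right-hand side by $h(P_e)\le\log 2$ and $P_e\log(M-1)\le P_e\log M$, then rearrange to get $P_e\log M\ge \log M-\frac1M\sum_v\KL(P_v\|\bar P)-\log 2$. Dividing by $\log M>0$ gives the claim.

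The main obstacle is technical rather than conceptual. It is justifying the identity in Step 1 and data processing on a general measurable space $\Y$. I would handle this by writing the densities $p_v=dP_v/d\bar P$, which exist and are bounded by $M$. Mutual information can then be treated as a KL divergence between measures on $[M]\times\Y$, and data processing for KL under the measurable map $(v,y)\mapsto(v,f(y))$ gives Step 2 directly.
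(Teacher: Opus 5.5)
Your proof is correct and is essentially the paper's argument. The paper applies Fano in the form $H(V\mid Y)\le h(P_e)+P_e\log(M-1)$ and computes $H(V\mid Y)=\log M-\frac1M\sum_v \KL(P_v\|\bar P)$ via Bayes' rule. That is your identity $I(V;Y)=\frac1M\sum_v\KL(P_v\|\bar P)$ plus data processing, repackaged, and your use of $dP_v/d\bar P\le M$ is more careful on a general $\Y$ than the paper's density notation.

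One correction: drop the projection remark. The function $h(p)+p\log(M-1)$ is decreasing on $[(M-1)/M,1]$, and the first inequality genuinely fails for estimators with values outside $[M]$. For example, take $M=2$, $P_1=P_2$, $\widehat V\equiv 3$: the left side is $\log 2$ and the right side is $0$. So $f\colon\Y\to[M]$ must simply be part of the hypothesis, as it is in the paper's application where $\widehat\theta\in\Theta$. Only the simplified bound, being monotone in $P_e$, survives your projection.
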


\begin{proof}
Apply the  Fano inequality to $X:=V$ (with $|\X|=M$), $Y$, and
$\tilde X:=\widehat V=f(Y)$:
\begin{equation}
H(V\mid Y)\ \le\ h(P_e)+P_e\log(M-1).
\label{eq:wiki-fano-again}
\end{equation}

We now compute $H(V\mid Y)$.
Let $p(v,y)$ be the joint law of $(V,Y)$. Since $V$ is uniform and
$Y\mid(V=v)\sim P_v$,
\[
p(v,y)=\frac1M P_v(y),
\qquad
p_Y(y)=\sum_{u=1}^M p(u,y)=\bar P(y).
\]
Hence, by Bayes' rule,
\[
p(v\mid y)=\frac{p(v,y)}{p_Y(y)}=\frac{P_v(y)}{M\,\bar P(y)}.
\]
Using the definition of conditional entropy,
\begin{align*}
H(V\mid Y)
&= -\sum_{v=1}^M \int p(v,y)\,\log p(v\mid y)\,dy \\
&= -\sum_{v=1}^M \int \frac1M P_v(y)\,
\log\!\Bigl(\frac{P_v(y)}{M\,\bar P(y)}\Bigr)\,dy \\
&= \log M - \frac1M\sum_{v=1}^M \int P_v(y)\log\!\Bigl(\frac{P_v(y)}{\bar P(y)}\Bigr)\,dy \\
&= \log M - \frac1M\sum_{v=1}^M \KL(P_v\|\bar P).
\end{align*}
Plug this identity into \eqref{eq:wiki-fano-again} to obtain
\eqref{eq:fano-kl-mean-tight}. Finally, using $h(P_e)\le \log 2$ and
$\log(M-1)\le \log M$ yields \eqref{eq:fano-kl-mean-simplified}.
\end{proof}

\begin{lemma}[KL for biased product Bernoullis]
\label{lem:kl-product-bernoulli}
Fix $d\in\mathbb N$ and $\nu\in(0,\tfrac14)$.
For each $\theta\in\{\pm1\}^d$ define the product distribution
\[
P_\theta \;:=\; \bigotimes_{i=1}^d \Ber\!\left(\tfrac12+\theta_i\nu\right).
\]
Then for any $\theta,\theta'\in\{\pm1\}^d$,
\[
\KL(P_\theta\|P_{\theta'})
\;=\;
\Delta(\theta,\theta')\cdot D_\nu,
\]
where $\Delta(\theta,\theta'):=|\{i:\theta_i\neq\theta_i'\}|$ is the Hamming distance and
\[
D_\nu
\;:=\;
\KL\!\left(\Ber\!\left(\tfrac12+\nu\right)\Big\|\Ber\!\left(\tfrac12-\nu\right)\right)
=
2\nu\log\!\left(\frac{\tfrac12+\nu}{\tfrac12-\nu}\right)
=
2\nu\log\!\left(\frac{1+2\nu}{1-2\nu}\right).
\]
Moreover, for all $\nu\in(0,\tfrac14)$,
\[
8\nu^2 \;\le\; D_\nu \;\le\; \frac{32}{3}\,\nu^2.
\]
\end{lemma}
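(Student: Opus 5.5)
The plan has two parts: the exact KL identity, then the two-sided estimate on $D_\nu$.

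\emph{The identity.} I will use the chain rule (tensorization) for KL divergence between product measures. Since both $P_\theta$ and $P_{\theta'}$ are products over the same $d$ coordinates,
\[
\KL(P_\theta\|P_{\theta'})=\sum_{i=1}^d \KL\bigl(\Ber(\tfrac12+\theta_i\nu)\,\|\,\Ber(\tfrac12+\theta'_i\nu)\bigr).
\]
If $\theta_i=\theta'_i$, the $i$-th summand vanishes. Otherwise the summand is either $\KL(\Ber(\tfrac12+\nu)\|\Ber(\tfrac12-\nu))$ or its reverse. The two are equal, because the relabeling $0\leftrightarrow1$ maps $\Ber(\tfrac12+\nu)$ to $\Ber(\tfrac12-\nu)$ and back, and KL is invariant under this bijection. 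Hence each disagreeing coordinate contributes exactly $D_\nu$, which gives the factor $\Delta(\theta,\theta')$.

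\emph{The closed form of $D_\nu$.} I will compute directly:
\[
D_\nu=(\tfrac12+\nu)\log\tfrac{1/2+\nu}{1/2-\nu}+(\tfrac12-\nu)\log\tfrac{1/2-\nu}{1/2+\nu}=2\nu\log\tfrac{1+2\nu}{1-2\nu}.
\]

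\emph{The bounds.} Set $x=2\nu\in(0,\tfrac12)$, so that $D_\nu=x\log\frac{1+x}{1-x}=2x\,\mathrm{artanh}(x)$. I will use the series
\[
\mathrm{artanh}(x)=x+\frac{x^3}{3}+\frac{x^5}{5}+\cdots
\]
For the lower bound, dropping all terms after the first gives $\mathrm{artanh}(x)\ge x$, hence $D_\nu\ge 2x^2=8\nu^2$. For the upper bound, I bound the tail by a geometric series:
\[
\mathrm{artanh}(x)\le x+\frac{x^3}{3}\bigl(1+x^2+x^4+\cdots\bigr)=x+\frac{x^3}{3(1-x^2)}.
\]
Since $x<\tfrac12$, we have $x^2<\tfrac14$ and $1-x^2>\tfrac34$, so $\mathrm{artanh}(x)\le x\bigl(1+\tfrac49 x^2\bigr)\le x\bigl(1+\tfrac19\bigr)=\tfrac{10}{9}x$. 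This gives $D_\nu\le\tfrac{20}{9}x^2=\tfrac{80}{9}\nu^2\le\tfrac{32}{3}\nu^2$, as required.

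The only step needing care is this upper bound: it is where the restriction $\nu<\tfrac14$ is used, since it keeps $x=2\nu$ bounded away from $1$. The rest is routine.
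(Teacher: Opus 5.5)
Your proof is correct and follows essentially the paper's route: tensorize the KL over coordinates, compute $D_\nu=(p-q)\log(p/q)=2\nu\log\frac{1+2\nu}{1-2\nu}$, and bound $\log\frac{1+x}{1-x}$ with $x=2\nu\in(0,\tfrac12)$. The only differences are minor. You derive the logarithmic bounds from the $\mathrm{artanh}$ series, which gives the slightly sharper constant $\frac{80}{9}$, whereas the paper asserts $2x\le\log\frac{1+x}{1-x}\le\frac{2x}{1-x^2}$ without proof and gets $\frac{8\nu^2}{1-4\nu^2}\le\frac{32}{3}\nu^2$. Your explicit $0\leftrightarrow 1$ symmetry argument for the reversed KL direction also makes precise a step the paper leaves implicit.
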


\begin{proof}
By additivity of KL divergence for product measures,
\[
\KL(P_\theta\|P_{\theta'})
=
\sum_{i=1}^d
\KL\!\left(\Ber\!\left(\tfrac12+\theta_i\nu\right)\Big\|\Ber\!\left(\tfrac12+\theta_i'\nu\right)\right).
\]
If $\theta_i=\theta_i'$, the $i$th summand is $0$.
If $\theta_i\neq\theta_i'$, then $\{\theta_i,\theta_i'\}=\{+1,-1\}$ and the $i$th summand equals
$\KL(\Ber(\tfrac12+\nu)\|\Ber(\tfrac12-\nu))=:D_\nu$.
Hence the sum has exactly $\Delta(\theta,\theta')$ nonzero terms, proving
\[
\KL(P_\theta\|P_{\theta'})
=
\Delta(\theta,\theta')\,D_\nu.
\]

Let $p=\tfrac12+\nu$ and $q=\tfrac12-\nu$. Then
\begin{align*}
D_\nu
&=
p\log\frac{p}{q} + (1-p)\log\frac{1-p}{1-q}
=
p\log\frac{p}{q} + q\log\frac{q}{p}\\
&=
(p-q)\log\frac{p}{q}
=
2\nu\log\!\left(\frac{\tfrac12+\nu}{\tfrac12-\nu}\right)
=
2\nu\log\!\left(\frac{1+2\nu}{1-2\nu}\right).
\end{align*}

Set $x:=2\nu\in(0,\tfrac12)$.
For $x\in(0,1)$ one has
\[
2x
\;\le\;
\log\!\left(\frac{1+x}{1-x}\right)
\;\le\;
\frac{2x}{1-x^2}.
\]

Applying these inequalities,
\[
D_\nu
=
2\nu\log\!\left(\frac{1+x}{1-x}\right)
\ge
2\nu\cdot 2x
=
8\nu^2,
\]
and
\[
D_\nu
\le
2\nu\cdot \frac{2x}{1-x^2}
=
\frac{8\nu^2}{1-4\nu^2}
\le
\frac{32}{3}\nu^2,
\]
since $\nu<1/4$ implies $1-4\nu^2\ge 3/4$.
\end{proof}

\begin{lemma}[Hellinger separation for biased product Bernoullis]
\label{lem:hellinger-product-bernoulli}
Let $k\ge 1$ and define product measures on $\{0,1\}^k$ by
\[
P_+ := \Ber\!\left(\tfrac12+\nu\right)^{\otimes k},
\qquad
P_- := \Ber\!\left(\tfrac12-\nu\right)^{\otimes k},
\]
where $0<\nu<1/2$. Let
\[
H^2(P,Q)\ :=\ 2\bigl(1-\rho(P,Q)\bigr),
\qquad
\rho(P,Q)\ :=\ \sum_{x}\sqrt{P(x)Q(x)}
\]
denote the squared Hellinger distance and the Bhattacharyya coefficient.
Then for every $0<\eps\le 1$, if
\[
\nu \;\ge\; \sqrt{\frac{\eps}{2k}},
\]
we have
\[
H^2(P_+,P_-) \;\ge\; \eps .
\]
\end{lemma}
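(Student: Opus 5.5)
The plan is to exploit the tensorization of the Bhattacharyya coefficient. Since $P_\pm$ are product measures on $\{0,1\}^k$, we have
\[
\rho(P_+,P_-)=\rho\bigl(\Ber(\tfrac12+\nu),\Ber(\tfrac12-\nu)\bigr)^k .
\]
For a single coordinate the coefficient is explicit:
\[
\rho_1=2\sqrt{(\tfrac12+\nu)(\tfrac12-\nu)}=\sqrt{1-4\nu^2}.
\]
Hence
\[
\rho(P_+,P_-)=(1-4\nu^2)^{k/2},\qquad H^2(P_+,P_-)=2\bigl(1-(1-4\nu^2)^{k/2}\bigr).
\]
The whole statement then reduces to a scalar inequality. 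Note also that $H^2$ is increasing in $\nu$, since $1-4\nu^2$ decreases. So it suffices to treat the extremal case $\nu^2=\eps/(2k)$, at least when that value is below $1/4$. Larger $\nu$ only helps, and the constraint $\nu<1/2$ keeps everything well defined.

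Next, bound $(1-4\nu^2)^{k/2}\le \exp(-2k\nu^2)$ using $1-x\le e^{-x}$. With $2k\nu^2\ge \eps$ this gives
\[
H^2(P_+,P_-)\ge 2\bigl(1-e^{-\eps}\bigr).
\]
It remains to check $2(1-e^{-\eps})\ge\eps$ for $\eps\in(0,1]$. The function $\phi(x)=2(1-e^{-x})-x$ satisfies $\phi(0)=0$ and is concave, and $\phi(1)=2-2/e-1=1-2/e>0$. Concavity therefore gives $\phi\ge0$ on $[0,1]$, which closes the argument.

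The only step needing care is the monotonicity and range bookkeeping. If $\sqrt{\eps/(2k)}\ge1/2$, the hypothesis $\nu<1/2$ cannot hold with $\nu\ge\sqrt{\eps/(2k)}$ (this happens only for small $k$ and large $\eps$), so the statement is vacuous there. Otherwise the exponential bound applies directly to the given $\nu$, with no need to reduce to the extremal value. I expect no real obstacle beyond verifying the elementary inequality $2(1-e^{-\eps})\ge\eps$ on $(0,1]$, which the concavity argument handles.
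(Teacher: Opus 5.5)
Your proposal is correct and matches the paper's proof essentially step for step: you tensorize the Bhattacharyya coefficient to get $H^2(P_+,P_-)=2\bigl(1-(1-4\nu^2)^{k/2}\bigr)$, bound $(1-4\nu^2)^{k/2}\le e^{-2k\nu^2}\le e^{-\eps}$, and conclude with $2(1-e^{-\eps})\ge\eps$ on $(0,1]$ (you justify this by concavity, where the paper simply asserts it). Your extra monotonicity and range bookkeeping is harmless but unnecessary, since $1-x\le e^{-x}$ applies directly for every $\nu\in(0,1/2)$.
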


\begin{proof}
For a single coordinate, let
\[
P_+^{(1)}=\Ber\!\left(\tfrac12+\nu\right),
\qquad
P_-^{(1)}=\Ber\!\left(\tfrac12-\nu\right).
\]
A direct computation gives
\[
\rho\!\left(P_+^{(1)},P_-^{(1)}\right)
= \sqrt{1-4\nu^2}.
\]

For product measures $P=\bigotimes_{i=1}^k P_i$ and $Q=\bigotimes_{i=1}^k Q_i$
on a finite product space, one has the multiplicativity
\[
\rho(P,Q)=\prod_{i=1}^k \rho(P_i,Q_i).
\]
Applying this with $P_i=P_+^{(1)}$ and $Q_i=P_-^{(1)}$ yields
\[
\rho(P_+,P_-)
= \bigl(\rho(P_+^{(1)},P_-^{(1)})\bigr)^k
= (1-4\nu^2)^{k/2},
\]
and therefore
\begin{equation}
H^2(P_+,P_-)
= 2\Bigl(1-(1-4\nu^2)^{k/2}\Bigr).
\label{eq:exact-hellinger}
\end{equation}

Since $4\nu^2\le 1$, we may use $(1-x)^m\le e^{-mx}$ for $x\in[0,1]$ to obtain
\[
(1-4\nu^2)^{k/2}\le e^{-2k\nu^2}.
\]
Substituting into~\eqref{eq:exact-hellinger} gives
\[
H^2(P_+,P_-)\ge 2\bigl(1-e^{-2k\nu^2}\bigr).
\]

If $\nu^2\ge \eps/(2k)$, then $2k\nu^2\ge \eps$, hence
\[
H^2(P_+,P_-)\ge 2\bigl(1-e^{-\eps}\bigr).
\]
For $\eps\in(0,1]$ we have $1-e^{-\eps}\ge \eps/2$, and therefore
\[
H^2(P_+,P_-)\ge 2\cdot \frac{\eps}{2}=\eps,
\]
as claimed.
\end{proof}

\begin{proposition}[$\Omega(d)$ samples for TV/uniform estimation over the $d$--cube]
\label{prop:Omega-d-product}
Let $\X=\{0,1\}^d$ and let $\F=2^\X$.
Fix $\eps\in(0,1/10)$ and $\delta\in(0,1/3)$.
Consider the family of \emph{product} distributions
\[
\Pc := \{P_\theta : \theta\in\{\pm1\}^d\},
\qquad
P_\theta := \bigotimes_{i=1}^d \Ber\!\left(\tfrac12+\theta_i \nu\right),
\]
where
\[
\nu \;:=\; 4\,\sqrt{\frac{\eps}{d}}.
\]
Assume $d$ is large enough so that $\nu<1/4$.

Suppose an estimator $\widehat P$ based on $n$ i.i.d.\ samples satisfies
\[
\Pr_{S\sim P^{\otimes n}}
\Bigl[
\sup_{F\in\F}|\widehat P(F)-P(F)| \le \eps
\Bigr]\ \ge\ 1-\delta
\qquad\text{for all }P\in\Pc.
\]
Then necessarily
\[
n \ \ge\ c\,\frac{d}{\eps},
\]
for a universal constant $c>0$.
In particular, for constant $\eps$, one must have $n=\Omega(d)$.
\end{proposition}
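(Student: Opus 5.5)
The plan is to reduce uniform TV estimation to identifying most coordinates of $\theta$, and then use the Fano-type bound of Lemma~\ref{lem:fano-kl-mean} together with the KL computation of Lemma~\ref{lem:kl-product-bernoulli}.

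\textbf{Step 1: a packing code.} By Gilbert--Varshamov, choose a code $\Theta\subseteq\{\pm1\}^d$ with $\log|\Theta|\ge c_1 d$ and pairwise Hamming distance at least $d/8$.

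\textbf{Step 2: separation of the hypotheses.} For $\theta\ne\theta'$ in $\Theta$, let $k=\Delta(\theta,\theta')\ge d/8$. Restrict attention to the $k$ disagreeing coordinates. On these coordinates the two marginals are $\Ber(\tfrac12+\nu)^{\otimes k}$ and $\Ber(\tfrac12-\nu)^{\otimes k}$, after flipping signs coordinatewise, which preserves distances. Since TV of marginals lower-bounds TV of the joint laws, Lemma~\ref{lem:hellinger-product-bernoulli} together with $\TV\ge H^2/2$ gives
\[
\TV(P_\theta,P_{\theta'})\ge \tfrac12\bigl(1-(1-4\nu^2)^{k/2}\bigr)\ge \tfrac12\bigl(1-e^{-2k\nu^2}\bigr).
\]
With $\nu=4\sqrt{\eps/d}$ and $k\ge d/8$ we get $2k\nu^2\ge 4\eps$, so $\TV\ge \tfrac12(1-e^{-4\eps})\ge c\eps$. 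The target is $\TV>2\eps$. This is the step I expect to be the main obstacle: the Hellinger-to-TV conversion loses constants, so the stated constant $4$ in $\nu$ may have to be adjusted. Since the exponent $2k\nu^2$ scales linearly in $\eps$, increasing the constant in $\nu$ (still with $\nu<1/4$ for large $d$) suffices. Alternatively, one can bound TV directly through the majority-vote test on the $k$ coordinates.

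\textbf{Step 3: from estimation to decoding.} Because $\F=2^\X$, we have $\sup_F|\widehat P(F)-P(F)|=\TV(\widehat P,P)$. Given $\widehat P$, decode $\widehat\theta=\arg\min_{\theta\in\Theta}\TV(\widehat P,P_\theta)$. On the success event, the triangle inequality combined with the separation $>2\eps$ forces $\widehat\theta=\theta$. Hence $P_e\le\delta<1/3$ for $V$ uniform on $\Theta$.

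\textbf{Step 4: Fano.} Apply Lemma~\ref{lem:fano-kl-mean} to the $n$-fold products $P_\theta^{\otimes n}$. By convexity, $\frac1M\sum_v\KL(P_v\|\bar P)\le \max_{\theta,\theta'}\KL(P_\theta^{\otimes n}\|P_{\theta'}^{\otimes n})$. By Lemma~\ref{lem:kl-product-bernoulli} this is at most
\[
n\,d\,\tfrac{32}{3}\nu^2=\tfrac{512}{3}\,n\eps.
\]
Then \eqref{eq:fano-kl-mean-simplified} gives
\[
\tfrac13>P_e\ge 1-\frac{C n\eps+\log 2}{c_1 d}.
\]
For $d$ large this forces $n\ge c\,d/\eps$.
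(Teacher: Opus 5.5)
Your proposal follows the paper's proof essentially step for step: a Gilbert--Varshamov packing, TV separation via the Hellinger computation of Lemma~\ref{lem:hellinger-product-bernoulli}, minimum-distance decoding, and Fano via Lemma~\ref{lem:fano-kl-mean} combined with convexity and Lemma~\ref{lem:kl-product-bernoulli}.

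\textbf{Step~2 does not close as written.} The difficulty you flag there comes from a lost factor of~$2$, not from the method. With the paper's convention $H^2=2(1-\rho)$, one has
$\TV\ge H^2/2=1-\rho=1-(1-4\nu^2)^{k/2}\ge 1-e^{-2k\nu^2}$.
You wrote half of this. Your bound $\tfrac12(1-e^{-4\eps})$ is always below $2\eps$, which is why you could not reach the separation needed for decoding.

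\textbf{The correct conversion works with your code.} With $k\ge d/8$ it gives
$\TV(P_\theta,P_{\theta'})\ge 1-e^{-4\eps}\ge 4\eps(1-2\eps)>3\eps>2\eps$ for $\eps<1/10$.
So decoding succeeds with $\nu=4\sqrt{\eps/d}$ exactly as stated. The paper instead takes Hamming distance $d/4$, so that $2k\nu^2\ge 8\eps$. It then applies Lemma~\ref{lem:hellinger-product-bernoulli} with $\eps'=8\eps\le 1$ to get $\TV\ge 4\eps$.

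\textbf{Drop the proposed remedy.} Do not enlarge the constant in $\nu$. The family $\Pc$ is fixed by $\nu=4\sqrt{\eps/d}$ in the statement, so changing it would prove a lower bound for a different family. It would also strengthen the assumption needed for $\nu<1/4$.

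\textbf{Step~4 matches the paper.} You bound the average KL to the mixture by the maximum pairwise KL, at most $\tfrac{512}{3}n\eps$, and then solve the Fano inequality for $n$ for large $d$. This is the same as the paper's Steps~5--6.
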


\begin{proof}
\medskip
\noindent\textbf{Step 1: a large code with Hamming distance $\ge d/4$.}
Let $\Theta\subseteq\{\pm1\}^d$ be such that for all distinct
$\theta,\theta'\in\Theta$,
\[
\Delta(\theta,\theta'):=|\{i:\theta_i\neq\theta_i'\}|\ \ge\ d/4,
\]
and $|\Theta|\ge 2^{c_1 d}$ for a universal constant $c_1>0$.
Such a set exists by the Gilbert--Varshamov bound.
We restrict attention to the subfamily $\{P_\theta:\theta\in\Theta\}$.

\medskip
\noindent\textbf{Step 2: separation in total variation.}
Fix distinct $\theta,\theta'\in\Theta$ and let $k:=\Delta(\theta,\theta')\ge d/4$.
Let $I:=\{i\in[d]:\theta_i\neq\theta_i'\}$, so $|I|=k$.
Since the Bhattacharyya coefficient factors over product measures,
and the marginals coincide on $[d]\setminus I$, we have
\[
H^2(P_\theta,P_{\theta'}) \;=\; H^2\!\left(\Ber(\tfrac12+\nu)^{\otimes k},
\Ber(\tfrac12-\nu)^{\otimes k}\right).
\]
By Lemma~\ref{lem:hellinger-product-bernoulli} applied with $\eps' := 8\eps$,
it suffices that
\[
\nu \ \ge\ \sqrt{\frac{\eps'}{2k}}
\ =\ \sqrt{\frac{8\eps}{2k}}
\ =\ 2\sqrt{\frac{\eps}{k}}
\ \le\ 4\sqrt{\frac{\eps}{d}},
\]
where the last inequality uses $k\ge d/4$.
With our choice $\nu = 4\sqrt{\eps/d}$, we obtain
\[
H^2(P_\theta,P_{\theta'}) \ \ge\ 8\eps .
\]
Since $\TV(P,Q)\ge \tfrac12 H^2(P,Q)$, it follows that
\[
\TV(P_\theta,P_{\theta'}) \ \ge\ 4\eps
\qquad
\text{for all distinct }\theta,\theta'\in\Theta.
\]

\medskip
\noindent\textbf{Step 3: uniform estimation implies decoding.}
For distributions on a finite domain,
\[
\TV(P,Q)=\sup_{F\subseteq\X}|P(F)-Q(F)|.
\]
Thus, by assumption, with probability at least $1-\delta$,
\[
\TV(\widehat P,P_\theta)\ \le\ \eps.
\]
Define the decoder
\[
\widehat\theta
\in \arg\min_{\theta'\in\Theta} \TV(\widehat P,P_{\theta'}).
\]
On the event $\TV(\widehat P,P_\theta)\le\eps$, separation implies
$\widehat\theta=\theta$: indeed, for $\theta'\neq\theta$,
\[
\TV(\widehat P,P_{\theta'})
\ \ge\ \TV(P_\theta,P_{\theta'})-\TV(\widehat P,P_\theta)
\ \ge\ 4\eps-\eps
\ >\ \eps.
\]
Hence
\begin{equation}\label{eq:decode-success}
\Pr[\widehat\theta=\theta]\ \ge\ 1-\delta.
\end{equation}

\medskip
\noindent\textbf{Step 4: Fano via KL from the mean distribution.}
Let $V$ be uniform on $\Theta$, and let
$S\mid(V=\theta)\sim(P_\theta)^{\otimes n}$.
Let $\widehat V=\widehat\theta(S)$.
By~\eqref{eq:decode-success},
\[
P_e:=\Pr[\widehat V\neq V]\ \le\ \delta.
\]
Let
\[
\bar Q \ :=\ \frac1{|\Theta|}\sum_{\theta\in\Theta}(P_\theta)^{\otimes n}
\]
be the mean distribution of $S$.
Applying Lemma~\ref{lem:fano-kl-mean} gives
\begin{equation}\label{eq:fano-mean}
\log|\Theta|-\frac1{|\Theta|}\sum_{\theta\in\Theta}
\KL\!\left((P_\theta)^{\otimes n}\middle\|\bar Q\right)
\ \le\ h(P_e)+P_e\log(|\Theta|-1).
\end{equation}
Using $P_e\le\delta$, $h(P_e)\le\log2$, and $\log(|\Theta|-1)\le\log|\Theta|$,
we obtain
\[
\frac1{|\Theta|}\sum_{\theta\in\Theta}
\KL\!\left((P_\theta)^{\otimes n}\middle\|\bar Q\right)
\ \ge\ (1-\delta)\log|\Theta|-\log2.
\]

\medskip
\noindent\textbf{Step 5: upper bound the KL from the mean.}
By convexity of KL in its second argument,
\[
\KL\!\left((P_\theta)^{\otimes n}\middle\|\bar Q\right)
\le
\frac1{|\Theta|}\sum_{\theta'\in\Theta}
\KL\!\left((P_\theta)^{\otimes n}\middle\|(P_{\theta'})^{\otimes n}\right).
\]
Averaging over $\theta$ yields
\[
\frac1{|\Theta|}\sum_{\theta\in\Theta}
\KL\!\left((P_\theta)^{\otimes n}\middle\|\bar Q\right)
\le
\frac{n}{|\Theta|^2}\sum_{\theta,\theta'\in\Theta}
\KL(P_\theta\|P_{\theta'}).
\]
By Lemma~\ref{lem:kl-product-bernoulli},
\[
\KL(P_\theta\|P_{\theta'})
=\Delta(\theta,\theta')\,D_\nu
\le d\cdot D_\nu.
\]
Using the upper bound $D_\nu\le \frac{32}{3}\nu^2$, we get
\[
\frac1{|\Theta|}\sum_{\theta\in\Theta}
\KL\!\left((P_\theta)^{\otimes n}\middle\|\bar Q\right)
\ \le\ n\cdot C\,d\,\nu^2
\]
for a universal constant $C>0$.

\medskip
\noindent\textbf{Step 6: conclude the sample complexity lower bound.}
Combining the lower bound from Step~4 with the upper bound from Step~5 and using
$\log|\Theta|\ge c_1 d$, we obtain
\[
n\cdot C\,d\,\nu^2
\;\ge\;
(1-\delta)\log|\Theta|-\log 2
\;\ge\;
(1-\delta)c_1 d-\log 2 .
\]
Since $\delta\le 1/3$, the right-hand side is at least $c_2 d$ for a universal
constant $c_2>0$ (absorbing the additive $-\log 2$ term into the constant for
all sufficiently large $d$). Hence
\[
n\cdot C\,d\,\nu^2 \;\ge\; c_2 d,
\]
and therefore
\[
n
\;\ge\;
\frac{c_2}{C\,\nu^2}.
\]
Recalling that $\nu = 4\sqrt{\eps/d}$ (so that $\nu^2 = 16\,\eps/d$), this yields
\[
n
\;\ge\;
\frac{c_2}{C}\cdot \frac{d}{16\,\eps}
\;=\;
\frac{c_2}{16C}\,\frac{d}{\eps}.
\]
Setting $c:=c_2/(16C)$ completes the proof.
\end{proof}

\end{document}